\documentclass[twoside]{article}

\usepackage{subcaption}
\usepackage{algorithm}
\usepackage{algorithmic}
\newcommand{\OUTPUT}{\item[\textbf{Output:}]}
\usepackage[utf8]{inputenc} % allow utf-8 input
\usepackage[T1]{fontenc}    % use 8-bit T1 fonts
\usepackage{hyperref}       % hyperlinks
\usepackage{url}            % simple URL typesetting
\usepackage{booktabs}       % professional-quality tables
\usepackage{amsfonts}       % blackboard math symbols
\usepackage{nicefrac}       % compact symbols for 1/2, etc.
\usepackage{microtype}      % microtypography
\usepackage{xcolor}         % colors
\usepackage{graphicx}
\usepackage{amsmath}
\usepackage{amssymb}
\usepackage{amsthm}
\theoremstyle{plain}
\newtheorem{theorem}{Theorem}[section]

\theoremstyle{definition}
\newtheorem{definition}[theorem]{Definition}

\theoremstyle{remark}

\def\X{\mathcal{X}}
\def\F{\mathcal{F}}
\def\DR{\textrm{DR}}
\def\L{\mathcal{L}}
\def\S{\mathcal{S}}
\def\d{\mathcal{\gamma}}
\def\kNN{\textit{k}\textrm{NN}}
\def\PaCMAP{$\text{PaCMAP}_{\textrm{param}\text{ }}$}
\def\UMAP{$\text{UMAP}_{\textrm{param}}$}
\usepackage{bbm}
\usepackage[authoryear]{natbib}
\floatstyle{plain}
\restylefloat{figure}

\usepackage[accepted]{aistats2026}
\begin{document}

% If your paper is accepted and the title of your paper is very long,
% the style will print as headings an error message. Use the following
% command to supply a shorter title of your paper so that it can be
% used as headings.
%
%\runningtitle{I use this title instead because the last one was very long}

% If your paper is accepted and the number of authors is large, the
% style will print as headings an error message. Use the following
% command to supply a shorter version of the author names so that
% they can be used as headings (for example, use only the surnames)
%
%\runningauthor{Surname 1, Surname 2, Surname 3, ...., Surname n}

\twocolumn[

%\aistatstitle{The Rashomon Effect for Dimension Reduction for Data Visualization}
\aistatstitle{The Rashomon Effect for Visualizing High-Dimensional Data}

\aistatsauthor{ Yiyang Sun$^*$ \And Haiyang Huang$^{*+}$ \And  Gaurav Rajesh Parikh$^*$ \And Cynthia Rudin  }

\aistatsaddress{ Duke University \And  Duke University \And Duke University \And Duke University} ]

    \def\thefootnote{*}\footnotetext{These authors contributed equally to this work}\def\thefootnote{+}\footnotetext{Now at Google.}\def\thefootnote{\arabic{footnote}}

\begin{abstract}
Dimension reduction (DR) is inherently non-unique: multiple embeddings can preserve the structure of high-dimensional data equally well while differing in layout or geometry. In this paper, we formally define the Rashomon set for DR—the collection of `good' embeddings—and show how embracing this multiplicity leads to more powerful and trustworthy representations. Specifically, we pursue three goals. First, we introduce PCA-informed alignment to steer embeddings toward principal components, making axes interpretable without distorting local neighborhoods. Second, we design concept-alignment regularization that aligns an embedding dimension with external knowledge, such as class labels or user-defined concepts. Third, we propose a method to extract common knowledge across the Rashomon set by identifying trustworthy and persistent nearest-neighbor relationships, which we use to construct refined embeddings with improved local structure while preserving global relationships. By moving beyond a single embedding and leveraging the Rashomon set, we provide a flexible framework for building interpretable, robust, and goal-aligned visualizations.
\end{abstract}

\section{Introduction}
\label{sec:introduction}
% - There are a lot of possibly good DRs. Typically, DR methods just produce one of them. This leaves out the uncertainty that is inherent to DR that the user might want to know about and/or leverage. If we could access that uncertainty, what would we do with it? We would (1) use it to incorporate domain knowledge to reduce the uncertainty and get DR that agrees with both data and domain knowledge, and (2) use it to figure out what is not uncertain; what knowledge is common to all or most of the viable DRs.
% This ties into the notion of Rashomon sets, the set of all good models. 

Dimension reduction (DR) is the cornerstone of modern data analysis, visualization, and representation learning, enabling researchers to explore and interpret complex high-dimensional datasets in low-dimensional spaces. A wide range of DR techniques have been developed, from linear methods such as PCA \citep{pca} to non-linear techniques such as t-SNE \citep{tsne}, UMAP \citep{umap_algorithm,umap_parametric}, and PaCMAP \citep{pacmap,paramrepulsor} — that aim to preserve various aspects of the original data structure, such as local and global neighborhoods. A critical limitation of most DR methods is that they return a \textit{single} embedding, often driven by stochastic optimization, heuristic design choices, or algorithmic randomness.

These single output embeddings mask a fundamental fact: there is rarely a unique ``correct'' embedding of high-dimensional data. Instead, many diverse low-dimensional representations can preserve the data structure nearly equally well. Different runs of the same DR algorithm -- with variations in random seed, initialization, bootstrapped samples, or pairwise constraints -- can yield embeddings that differ substantially in global layout or local relationships, even when they achieve similar objective scores or visual quality \citep{kobak2021initialization,pacmap}. Such variability introduces epistemic uncertainty that is seldom quantified or used in downstream analysis.

Importantly, this variability also manifests in the \textit{mobility of clusters} within the embedding space. A cluster that appears compact and well-separated in one embedding might shift position, rotate, or stretch in another -- while still preserving its internal neighborhood structure. These transformations do not necessarily indicate a failure of the DR method to optimize its objective but rather reflect the geometric flexibility inherent to the mapping. In other words, as long as the cluster itself is maintained, it can translate, rotate, or reorient in the embedding space while still potentially yielding a valid outcome of dimension reduction.
% For instance, Figure \ref{fig:boostrap_pacmap_embeddings} has shown that the entire clusters are supposed to drift or reorient with the entire clusters, so long as pairwise proximities within the cluster are maintained. As a result, global positioning—such as the relative layout of clusters—can vary substantially across embeddings without compromising their quality.

% \begin{figure}[ht]
%     \centering
%     \includegraphics[width=0.6\linewidth]{fig/boostrap_embeddings.png}
%     \caption{ParamPaCMAP embeddings clusters aligned differently through boostrap datasets}
%     \label{fig:boostrap_pacmap_embeddings}
% \end{figure}

\begin{figure*}[th]
    \centering
    \includegraphics[width=0.9\linewidth]{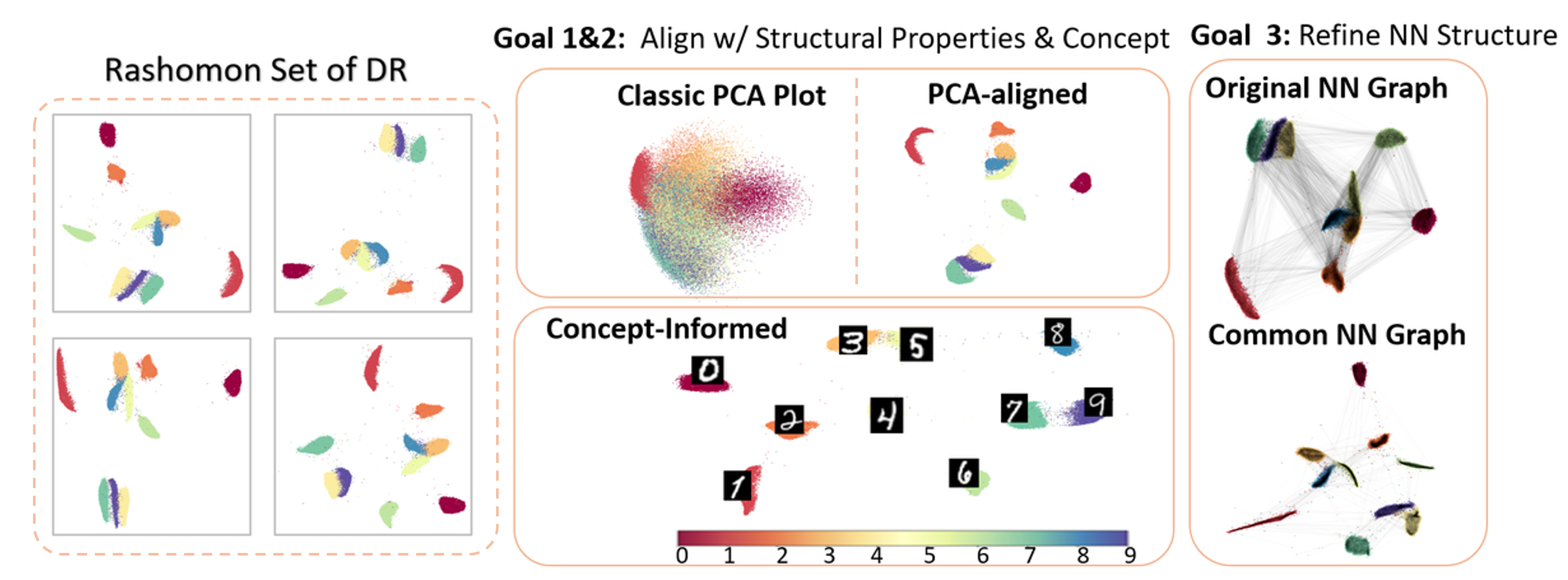}
    \caption{Three goals for generating and exploring the Rashomon set for dimension reduction}
    \label{fig:overview}
    
\end{figure*}

In this work, we \textit{extend the notion of the Rashomon effect from supervised learning \citep{rashomon} to dimension reduction}. Here, the \textit{Rashomon effect} is the phenomenon that many structurally valid embeddings can be found for the same high-dimensional dataset. Even without labels, most DR methods are guided by well-defined objectives -- such as preserving local neighborhoods or global geometry -- and thus allow us to identify a set of embeddings that perform about equally well. Rather than seeing the resulting variability as a nuisance, we instead treat it as a source of opportunity and insight, as we show in Fig.~\ref{fig:overview}. First, we can steer embeddings toward desirable properties, such as \textbf{alignment with principal components} (Goal 1) or \textbf{prior knowledge} (Goal 2), to enhance interpretability without compromising structural fidelity.
Second, by comparing many good embeddings, we can uncover structural relationships -- such as cluster boundaries or neighborhood graphs -- that are stable \textbf{across alignment goals, random seeds, or bootstrapped samples} and that appear in the results of many DR methods (Goal 3). These recurring patterns represent a form of unsupervised consensus, capturing the underlying structure in the data that is robust to algorithmic or sampling choices.

\section{Related Work}

\paragraph{Dimension Reduction}
% Early approaches to DR preserved global geometric structure, as seen in methods like PCA \cite{pca}, MDS \cite{Torgerson52} and NMF \cite{lee1999learning}.
Early dimension reduction methods focused on preserving global geometric structure. MDS \citep{Torgerson52} preserves pairwise distances, while PCA \citep{pca} maintains variance along principal directions, and NMF \citep{lee1999learning} preserves part-based representations. While effective at maintaining the overall data structure, these techniques often fail to capture local neighborhoods and clusters. To address this limitation, \textit{local} DR methods were developed. Examples include Isomap \textcolor{black}{\citep{Tenenbaum00}}, LLE \textcolor{black}{\citep{Roweis00}}, Laplacian Eigenmap \textcolor{black}{\citep{Belkin01}}, and more recent Neighborhood Embedding algorithms such as t-SNE \textcolor{black}{\citep{tsne}}, UMAP \textcolor{black}{\citep{umap_algorithm}}, PaCMAP \textcolor{black}{\citep{pacmap}}, LocalMAP \textcolor{black}{\citep{localmap}}, and many others \citep{Tang16, TriMAP, artemenkov2020ncvis, sarfraz2022hnne, damrich2023from, van2022probabilistic, zu2022spacemap}. To support online and continual learning scenarios, \textit{parametric} DR methods learn a mapping function that approximates the DR objective. Notable examples include Paramt-SNE \textcolor{black}{\citep{van2009learning}}, ParamUMAP \textcolor{black}{\citep{umap_parametric}}, InfoNC-t-SNE \textcolor{black}{\citep{damrich2023from}}, ParamRepulsor \textcolor{black}{\citep{paramrepulsor}} and others \citep{bohm2022unsupervised,moor2020topological,nazari2023geometric}.

We note that all of these algorithms are designed to produce a \textbf{single} embedding that captures the underlying structure of the data. However, in this work, we observe that multiple distinct embeddings can satisfy the same DR objectives, and we show that constructing a diverse set of such valid embeddings can be beneficial for improving the robustness and alignment of the embedding.

\paragraph{Embedding Improvement}
Numerous studies have sought to improve DR results by modifying algorithmic behavior and examining the influence of various components. 
Some focus on parameter tuning
\textcolor{black}{\citep{wattenberg2016use,cao2017automatic,nguyen2019ten,Belkina19OptSNE, kobak2021initialization}}, while others explore the impact of loss functions \textcolor{black}{\citep{pacmap, bohm2022attraction}} and graph reweight/update strategies \textcolor{black}{\citep{colange2020steering,pacmap,dalmia2021clustering,localmap}}. In contrast, our work introduces a novel perspective: we investigate the \textit{invariant} components of embeddings across multiple runs and analyze which elements contribute to preserving DR performance. This insight enables us to create embeddings that %aligned with specific desired features and to 
extract shared structural information, leading to more robust and reliable representations.

\paragraph{Consensus Embeddings}

\textcolor{black}{A consensus embedding combines multiple embeddings of the same dataset into a single, unified embedding that captures shared structural information  \citep{viswanath2012consensus}. Our work on Rashomon DR can help facilitate consensus embeddings by finding many valid embeddings to combine, and we study this in Goal 3.}
%While consensus embedding and Rashomon DR both deal with multiple embeddings of the same data, they differ in emphasis: consensus embedding focuses on fusing given representations, whereas our Rashomon DR framework characterizes the set of valid embeddings, so that we can extract useful embedding or find common knowledge.}

Existing consensus embedding methods, such as Median Consensus Embedding \citep{tomo2025median} and C-LLE \citep{tiwari2008consensus}, predominantly rely on assumptions of linear relationships between input embeddings. We know these assumptions are too strong, and most of them require storing a pairwise $n\times n$ matrix, which is extremely large and difficult to work with. Other approaches—often applied in image segmentation \citep{viswanath2012consensus} or network alignment \citep{li2022consensus}—typically perform fusion through subspace projections or global alignment, implicitly assuming a shared global structure. These assumptions are incompatible with scenarios where embeddings disagree, e.g., where individual clusters may move independently or where local geometries vary across embeddings—common traits within the Rashomon set. Most critically, prior work in consensus embeddings has largely ignored the \textit{trustworthiness of neighborhood graphs}, assuming they are always trustworthy when we know they are not \citep{localmap}. In this work, we aim to extract trustworthy nearest neighbor (NN) relationships from the Rashomon set by identifying the local structure that remains stable across multiple high-quality embeddings. Hence, our method reinterprets consensus not as coordinate fusion but as the extraction of consistent local relationships across embeddings. By doing so, we move toward a structure-aware, graph-level consensus that is better aligned with the goals of both interpretability and robust representation learning.

\section{Defining the Rashomon Set for Dimension Reduction}

% \color{black}
Most DR algorithms lack robustness, often producing different embeddings for the same dataset in different runs \citep{pacmap,kobak2021initialization,umap_algorithm}. This variability poses challenges for interpretation. While prior work has treated such inconsistency as a drawback, we observe that it parallels the concept of the Rashomon set in supervised learning \citep{rashomon}, which captures the existence of multiple models that perform equally well on the same data. Motivated by this connection, we extend the Rashomon set framework to the context of DR. We begin with the conventional definition based on the loss threshold and subsequently adapt it to a DR-specific formulation that considers the preservation of pairwise relationships between data points. In this paper, we primarily use parametric dimensionality reduction (DR) methods as examples since they are easier to implement on large datasets through stochastic gradient descent. However, all of the definitions introduced here are equally applicable to non-parametric DR methods.
% We formally define the Rashomon set for DR. 
%We adopt two distinct but connected perspectives: defining the Rashomon set in terms of a loss function, as well as defining it in terms of the graph structure of the pairwise relationships between points. 

Let $\X = {x_1, \dots, x_n} \subset \mathbb{R}^p$ be a high-dimensional dataset, and let $\F_{\theta}: \mathbb{R}^p \to \mathbb{R}^d$ be a parametric dimension reduction method with learnable parameters $\theta$, such that $\F_{\theta}(\X) = y \in \mathbb{R}^{n \times d}$. The set of all $\F_{\theta}$ functions is denoted by $\F$. Let $\L_{\DR}(\X, \F_\theta)$ denote the loss function that measures the quality of the embedding, specifically measuring how much information $\F_\theta(\X)$ preserves about $\X$.
Let $\Theta$ be the parameter space for $\F_\theta$, and let $\theta^*$ be an optimal or reference solution, i.e.,
$$
\theta^* \in \arg\min_{\theta\in \Theta}\L_{\DR}(\X,\F_\theta).
$$

\begin{definition}[Rashomon set of Dimension Reduction from a Loss Perspective]\label{def:rashomonlossperspective}
We define the Rashomon set of Dimension Reduction from a Loss Perspective, denoted $\mathcal{R}_{\textrm{loss}}(\X, \F_{\theta}, \delta,\L_\DR)$, as the set of all parameter values $\theta$ such that the corresponding embedding $y = \F_{\theta}(\X)$ achieves a loss close to the reference:
$\mathcal{R}_{\textrm{loss}}(\X,\F,\delta, \L_\DR) :=$
$$ \{\theta\in\Theta | \L_{\DR}(\X,\F_\theta) \le \L_{\DR}(\X,\F_{\theta^*})+ \delta\}.$$
Embeddings within this set are all produced by models whose loss values are within a tolerance $\delta$ of the minimum.
\end{definition}

While most widely used DR methods are framed as loss minimization problems \citep{umap_algorithm,tsne,pacmap,artemenkov2020ncvis}, their training objectives can also be interpreted through the lens of nearest neighbor graph preservation. Specifically, these methods often define their loss functions based on the high-dimensional nearest neighbor graph, aiming to minimize discrepancies between pairwise affinities in the original and embedded spaces. Consequently, the optimization process can be viewed as implicitly selecting a graph structure. In this view, a Rashomon set based on loss, which comprises embeddings with low loss, can be interpreted as preserving approximately the same high-dimensional neighbor graph. Embeddings within the same Rashomon set are likely to induce mostly similar weighted graphs over a fixed edge set. This insight motivates a graph-theoretic reinterpretation of the Rashomon set in the context of DR, introduced below.

Suppose we have a candidate set of embeddings, denoted as $\S=\{y^{(1)},...,y^{(|\S|)}\}$, which is of size $|\S|$. $\S$ can consist of embeddings from the Rashomon set from a loss perspective from Definition \ref{def:rashomonlossperspective}.  
Define the k-nearest neighbors of a point $i$ in $\X$ by $\kNN(i)$ \textcolor{black}{; these are asymmetric, i.e., $i \in \kNN(j)$ does not imply $j \in \kNN(i)$. We note that high-dimensional Euclidean nearest neighbors may not correspond to neighbors along the data manifold. However, a sufficiently large initial $k$ ensures that true manifold neighbors are included within the candidate set, even if some false positives are also present. The reduction of false positives is handled subsequently through our scoring-based selection in Section~\ref{sec:common_knowledge}, which identifies the most consistently close pairs across multiple embeddings.}

% using the high-dimensional Euclidean distance, which does not necessarily reflect distances along the data manifold. Nevertheless, points along the data manifold will be included in the \kNN{} as long as $k$ is sufficiently large. (In Section \ref{sec:common_knowledge}, we show how to improve the neighborhood construction.)
To evaluate how an embedding $y$ reproduces the graph structure, we define $W^y \in \mathbb{R}^{n \times n}$ as a weight matrix that shows the strength between all pairs of points. 
\textcolor{black}{The construction of $W^y$ is a hybrid of both the high-dimensional and low-dimensional spaces: the adjacency structure (which pairs $(i,j)$ are considered neighbors) is determined by $\kNN$ in the high-dimensional space $\X$, while the edge weights are computed from distances in the low-dimensional embedding $y$.}
If $i\notin \kNN(j)$, then the entry of $W^y_{ij}$ is 1. 
% Otherwise, the entry $W^y_{ij}$, for data points $i$ and $j$, is 
Otherwise, entry $W^y_{ij}$ is
\[W^y_{ij} = \frac{(\|y_i-y_j\|_2^2 + \d)}{(\|y_i-y_j\|_2^2+\d)+1},\] where $\d$ is the scaling constant for numerical stability. Nearer points have lower weight, and farther points have higher weight. This function is designed to saturate as points get farther from each other. \textcolor{black}{Thus, $W^y$ depends on the specific embedding $y$ being evaluated: while the adjacency list is fixed by the high-dimensional $\kNN$ graph of $\X$, the weights vary with the candidate low-dimensional embedding.}

\begin{definition}[Rashomon set of Dimension Reduction from Graph Perspective]
\label{def:rashomongraphperspective}
We define the Rashomon set of Dimension Reduction from a Graph Perspective, denoted $\mathcal{R}_{\textrm{graph}}(\mathcal{Y}; \kNN;\S)$, as the set of embeddings in $\mathcal{Y}$ that induce approximately the same $W$ among embeddings in $\S$, that is:
{
\small
\[
\mathcal{R}_{\textrm{graph}}(\mathcal{Y}; \kNN;S) := \left\{ y \in \mathcal{Y} \left|\frac{1}{|S|}\sum_{i=1}^{|S|}d(W^{y},W^{{y^{(i)}}})\right. \leq \epsilon \right\},
\]
}
where $W^{y}$ is the induced low-dimensional edge weight matrix constructed using \kNN, $W^{y^{(i)}}$ is the edge weight matrix of embedding $y^{(i)} \in \mathcal{S}$ constructed using \kNN, and $d(\cdot,\cdot)$ is a distance between two weight matrices defined below. 
\end{definition}

\begin{definition}[Soft Jaccard Distance Between Weighted Matrices]\label{def:soft_jaccard}
Let $W^{y^1}, W^{y^2} \in \mathbb{R}^{n \times n}$ be two weighted matrices corresponding to embeddings $y^1, y^2 \in \mathcal{Y}$. We define the \emph{Soft Jaccard Distance} as:
{\small
$$
d(W^{y^1}, W^{y^2}) := 1-
\frac{\sum_{i,j} {\min \left( W^{y^1}_{ij}, W^{y^2}_{ij} \right)}/({W^{y^1}_{ij} + W^{y^2}_{ij}})}
     {\sum_{i,j} {\max \left( W^{y^1}_{ij}, W^{y^2}_{ij} \right)}/
     ({W^{y^1}_{ij} + W^{y^2}_{ij})}}.
$$}
This value lies within the range of $[0,1]$, where it would be 0 if the two weight matrices are identical and 1 if they are completely different. To ensure scale invariance and comparability across embeddings, each embedding $y \in \mathcal{Y}$ is first standardized such that $\frac{1}{n} \sum_{i=1}^n y_i = 0$, and $\frac{1}{n} \sum_{i=1}^n \|y_i\|_2^2 = 1$, which ensures that distance magnitudes are comparable across embeddings when computing edge weights.
\end{definition}

In Section \ref{exp1} and Appendix \ref{app:missing_ratio}, we show
that in practice, the two Rashomon definitions exhibit similar behavior when $S$ consists of low-loss embeddings. It is also important to note that several evaluation metrics (mentioned in Section \ref{sec:experiments} and Appendix \ref{app:metrics}) remain largely consistent within the Rashomon set. This consistency suggests that our Rashomon sets are stable across both definitions and evaluation criteria.

\section{Knowledge Alignment for DR within the Rashomon Set}\label{sec:alignment}

% % To explore how domain constraints or auxiliary objectives affect the Rashomon set, we augment the DR loss function with various task-specific regularization terms, depending on the experimental setting:
To investigate how domain knowledge can guide the selection of embeddings within the Rashomon set, we introduce alignment-based regularization terms that nudge a base DR method toward interpretable structure without significantly altering DR loss ($\L_\DR$). Our goal is to identify embeddings that remain within the Rashomon set\textemdash i.e., that preserve $\L_\DR$\textemdash while also aligning with external knowledge or structure. Here we develop two useful kinds of alignment.

\subsection{PCA-Informed Alignment}\label{subsec:pca_aligned}

% % In this setup, we encourage the embedding space to align the relative geometric structure of samples . 
In this setup, we choose the DR embedding to align with PCA embeddings while preserving local structure, which means that the embeddings' axes should be as similar as possible to the first two principal components.  To formalize this, we introduce a term that encourages the directions between point pairs in the DR embedding to align with those in the PCA embedding. 

% \textcolor{black}{In this section, we assume $d = 2$ for the purpose of clear 2D visualization; the formulation naturally generalizes to $d > 2$ by aligning the embedding dimensions with the corresponding top-$d$ principal components. We also note that PCA embeddings are only unique up to axis flips (sign ambiguity), implying that part of the alignment is inherently arbitrary. In practice, we fix a canonical sign convention before applying the alignment.}

Consider a pair of points $(i,j)$ that are not identified as nearest neighbor pairs, $i \notin \kNN (j)$ (i.e., these could be further pairs in the PaCMAP algorithm \citep{pacmap}, non-nearest neighbor pairs in other dimension reduction methods, or contrastive neighbor methods \citep{paramrepulsor,artemenkov2020ncvis}).
\textcolor{black}{We exclude nearest-neighbor pairs from this alignment term because $\L_\DR$ already
governs the local structure through NN relationships. Including NN pairs in the PCA alignment would create
a competing objective that could distort local neighborhoods. By restricting alignment to non-NN (more distant) pairs, we steer only the global layout toward the principal components, so that the local structure will not be impacted. This separation of concerns is what allows the embedding to remain within the Rashomon set while gaining interpretable axes.} For non-neighbor points $i$ and $j$, let $y_1,y_2$ be their coordinates in the DR embedding $y$, and let  $y_{\textrm{PCA},1},y_{\textrm{PCA},2}$ be their coordinates in the PCA embedding $y_{\textrm{PCA}}$. The goal of this regularization term is to align the directions of $y_1 - y_2$ and $y_{\textrm{PCA},1} - y_{\textrm{PCA},2}$ by minimizing the cosine similarity between those two vectors. Therefore, the total loss is:
{\small
\begin{align*}\small &\L_\text{total} = \L_{\DR} + \\ &\lambda_{\textrm{PCA}} \cdot \mathbb{E}_{i\notin \kNN(j)}\left[\left(1-\frac{\langle y_1 - y_2 , y_{\textrm{PCA},1} - y_{\textrm{PCA},2}\rangle}{\|y_1 - y_2\|_2\cdot\|y_{\textrm{PCA},1} - y_{\textrm{PCA},2}\|_2}\right)^2\right] \end{align*}}
where $\lambda_\textrm{PCA}$ controls the strength of the alignment constraint. This is studied in Section \ref{subsec:pca_aligned} and illustrated in Figure \ref{fig:overview}. An example of aligning PaCMAP with PCA is shown in Figure \ref{fig:pca_aligned_usps_pacmap} for the USPS dataset \citep{USPS}, which shows that the USPS embedding is flipped and rotated toward the principal components without harming the original local structure.

\begin{figure}[ht]
    \centering
    \includegraphics[width=\linewidth]{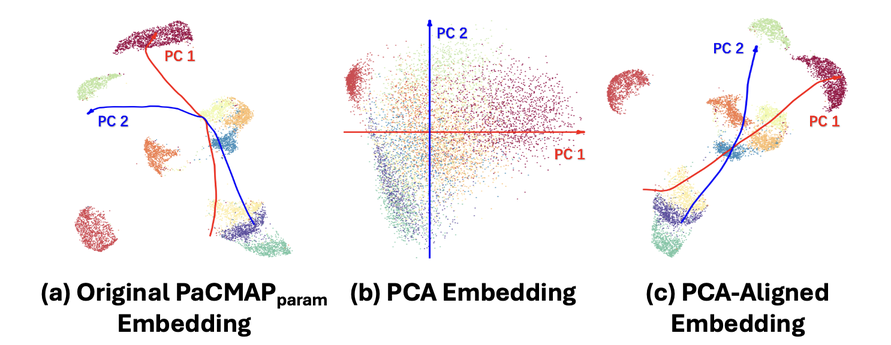}
    \caption{\PaCMAP embedding with and without PCA-Informed Alignment. \textcolor{black}{The colored curves overlaid on the embeddings are generated by applying the learned parametric DR mapping to points sampled along the first two principal component directions in the original high-dimensional space, thereby visualizing how the DR mapping transforms the PCA axes.}}
    \label{fig:pca_aligned_usps_pacmap}
\end{figure}

\subsection{Concept-Informed Alignment}\label{subsubsec:axis_aligned}

Here, we encourage the DR embedding to align with a given concept along a designated axis, which is typically the horizontal axis of the embedding space. To do this, we define an axis-informed objective term that encourages this alignment. The concept can arise from any given function, including labels from supervised or semi-supervised problems.  
Specifically, for each non-nearest-neighbor pair $(i, j)$ where both concept labels $l_i, l_j$ are available, we compute the standardized difference along the horizontal axis of the embedding, denoted by $\tilde{y}_{i,1} - \tilde{y}_{j,1}$, and compare it to the difference between their concept labels, $\tilde{l}_i - \tilde{l}_j$. The axis-informed loss penalizes squared deviations between these two quantities:
% Specifically, for each non-nearest-neighbor pair, we compute the vector differences along the x-axis and standardize these differences. Simultaneously, we compute the standardized label differences between the corresponding pairs. The loss penalizes the squared deviation between the standardized x-axis differences and standardized label differences, only considering pairs where both labels are valid:
{\small\begin{equation*} \L_\text{total} = \L_\text{DR} + \lambda_\text{Axis}\mathbb{E}_{i\notin \kNN(j)}\left[\left(\tilde{y}_{i,1} - \tilde{y}_{j,1} - (\tilde{l}_i - \tilde{l}_j)\right)^2\right] \end{equation*}}
$\lambda_\text{Axis}$ controls the strength of the alignment with concept label differences. This formulation encourages the embedding to preserve meaningful variation along an interpretable direction while remaining flexible in other dimensions. This technique is studied in Section \ref{exp1} and illustrated in Figure \ref{fig:overview}. An example aligning PaCMAP with a label is shown in Figure \ref{fig:fmnist_aligned} for the Fashion-MNIST dataset \citep{fmnist}, where labels were arranged into a pre-defined order: fashion items are arranged from head to toe.

\section{Common Knowledge Extraction from the Rashomon Set}
\label{sec:common_knowledge}

While each embedding in the Rashomon set satisfies structural constraints based on its loss function, not all local relationships within its embeddings are equally reliable. Some nearest-neighbor (NN) pairs are consistently close across embeddings, while others vary significantly in distance or may not be close in any good embedding. This observation motivates a deeper question: can we extract trustworthy structural information from the Rashomon set by identifying NN relationships that are stable and reproducible? Taking this idea one step further, can we leverage the set of trustworthy pairs to construct an embedding that better captures the essential structure of the data?

Building on the findings of \citet{localmap}, which identified the presence of false negative NN pairs in embedding neighborhoods that can negatively affect cluster identification, we aim to refine the selection of NN pairs for use in DR methods. Specifically, we leverage embeddings from the Rashomon set to assess the stability of these pairs across reasonable embeddings. We aim to identify pairs that \textit{consistently} appear as neighbors.

% For variety, we choose the set of ``good'' embeddings to come from a variety of popular DR algorithms. 

Here, we provide theoretical proof to show that stable neighbors are more likely to be trustworthy.
\textcolor{black}{Let $\mathcal{D}$ be the distribution of valid embeddings within the Rashomon set. Let $y \sim \mathcal{D}$ be a random embedding drawn from this set, and let $d_{ij}^y = \|y_i - y_j\|_2$ be the pairwise distance in embedding $y$.}
\textcolor{black}{We define fixed \emph{population} parameters for each nearest-neighbor pair $(i,j)$, $j \in \kNN(i)$:}
{\small\begin{align}
    \textcolor{black}{\mu^*_{ij}} &\textcolor{black}{= \text{Median}_{y\sim\mathcal{D}}[d^y_{ij}],} \\
    \textcolor{black}{\sigma^*_{ij}} &\textcolor{black}{= \sqrt{\mathbb{E}_{y\sim\mathcal{D}}[(d^y_{ij} - \mu^*_{ij})^2]},} \\
    \textcolor{black}{m^*_{ij}} &\textcolor{black}{= \sup_{y \in \mathcal{D}} d^y_{ij},}\\
    \textcolor{black}{M^*} &\textcolor{black}{= \text{Median}_{y\sim\mathcal{D},\, j\in\kNN(i)}[d^y_{ij}].}
    \label{eq:pop_stats}
\end{align}}
\textcolor{black}{Using these fixed population constants, we define the scoring function:}
{\small\begin{align}
    \textcolor{black}{\Psi_{ij}(y)} &\textcolor{black}{=
    \begin{cases}
        d^y_{ij} + \lambda \cdot \sigma^*_{ij}, & \text{if } d^y_{ij} \le M^* \\
        m^*_{ij} + \lambda \cdot \sigma^*_{ij}, & \text{otherwise}
    \end{cases}}
    \label{eq:edge_importance}
\end{align}}
\textcolor{black}{where $\lambda \ge 0$ is a penalty coefficient. This quantity acts as an upper confidence bound for the distance, penalizing pairs with high variance across embeddings.  Since $\Psi_{ij}(y)$ depends on the random sample $y$ through $d^y_{ij}$ while using only fixed population constants ($\mu^*_{ij}, \sigma^*_{ij}, m^*_{ij}, M^*$), it provides i.i.d.\ scores across independent embeddings.}
% where $M_{\text{global}}^{(T)}$ is the global median distance across all embeddings $\{y^{(t)}\}_t$ and nearest neighbor pairs $j\in\kNN(i)$, $ \tilde \mu_{ij}^{(T)}$, $m_{ij}^{(T)}$ and $\tilde \sigma_{ij}^{(T)}$ are the pair median, max and robust variance of distance across all embeddings.  $ \tilde{\sigma}_{ij}^{(T)}$ evaluates the stability of pairwise distance for nearby points, while $m_{ij}^{(T)}$ is a conservative (larger) upper confidence bound for points that are unlikely to be true nearest neighbor pairs. 
If $\Psi$ is low, the pair is most likely a trustworthy (stable and close) pair within the \kNN{} of $i$. In practice, we do not have access to the infinite distribution $\mathcal{D}$. Therefore, we use a \emph{calibration-test split}: population parameters ($\mu^*_{ij}, \sigma^*_{ij}, m^*_{ij}, M^*$) are estimated from a separate calibration set of embeddings, and the scoring is then applied to an independent test set of embeddings. By the law of large numbers, the empirical estimates converge to their population counterparts as the calibration set size increases. To operationalize this approach, we propose two ways of finding the $k$ most trustworthy neighbors for each point $i$ (details are in Appendix \ref{app:algo}): \textbf{Average Rank Approach} and \textbf{Mean Score Approach}. The first one ranks the neighbors $j$ by their scores in ascending order and takes the average rank, whereas the second one uses the average raw score directly. In the experiments and theorem, we have used the \textbf{Mean Score Approach} as the default, as it directly connects to the theorem and behaves similarly to the rank variant.

\textcolor{black}{We emphasize that the notion of trustworthiness here depends on \emph{both} the high-dimensional space and the embeddings, not on the embeddings alone. The candidate neighbor pairs are defined by the \kNN{} graph in the original high-dimensional space $\X$, and the scoring function $\Psi$ then evaluates whether these high-dimensional neighbors are \emph{consistently} close across multiple low-dimensional embeddings. If an embedding method maps true high-dimensional neighbors to distant locations, those pairs will exhibit high variance and large $\Psi$ scores, and will therefore \emph{not} be deemed trustworthy.}

\begin{theorem}
\label{theo:conv}
\textcolor{black}{Let $(i, j^*, j')$ be a triplet of points, and let $\{y^{(1)}, \dots, y^{(T)}\}$ be $T$ i.i.d. low-dimensional embeddings sampled from $\mathcal{D}$. Define the margin variable for the $t$-th embedding based on the population scoring function $\Psi$:}{\small$$
\textcolor{black}{Z^{(t)}_{i,j^*,j'} := \Psi_{ij^*}(y^{(t)}) - \Psi_{ij'}(y^{(t)}).}
$$}
\textcolor{black}{Since $\Psi$ depends on the random sample $y^{(t)}$ through $d^{y^{(t)}}_{ij}$ while using only fixed population constants, and the $y^{(t)}$ are i.i.d., the $Z^{(t)}_{i,j^*,j'}$ are strictly i.i.d.\ real-valued random variables. Let $\Delta = \mathbb{E}[Z^{(t)}_{i,j^*,j'}]$ be the true expected margin, $V = \text{Var}(Z^{(t)}_{i,j^*,j'})$, and $B$ a constant such that $Z^{(t)}_{i,j^*,j'} \le B$ almost surely. Define the empirical margin:}
{\small$$\textcolor{black}{\hat{\Delta}^{(T)} = \frac{1}{T}\sum_{t=1}^T Z^{(t)}_{i,j^*,j'}.}$$}
\textcolor{black}{Then with probability at least $1 - \delta$:}
{\small
\[
\textcolor{black}{\Delta \le \hat{\Delta}^{(T)} + \frac{B\log(1/\delta)}{3T} + \sqrt{\frac{2V\log(1/\delta)}{T}}.}
\]}
\textcolor{black}{We say that $j^*$ is strictly more trustworthy than $j'$ if $\hat{\Delta}^{(T)} + \frac{B\log(1/\delta)}{3T} + \sqrt{\frac{2V\log(1/\delta)}{T}} < 0$, which certifies $\Delta < 0$ with high probability.}
\end{theorem}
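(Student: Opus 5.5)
The claim is a one-sided concentration bound for the mean of i.i.d.\ bounded variables. I would prove it as a direct application of Bernstein's inequality in its sub-gamma form (Boucheron, Lugosi and Massart), after fixing how the boundedness hypothesis is used.

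\textbf{Step 1: i.i.d.\ structure.} Because $\mu^*_{ij},\sigma^*_{ij},m^*_{ij},M^*$ are fixed population constants, each $\Psi_{ij}(y)$ is a deterministic measurable function of $y$. Hence $Z^{(t)}_{i,j^*,j'}=g(y^{(t)})$ for one fixed $g$. Since the $y^{(t)}$ are i.i.d.\ from $\mathcal{D}$, the $Z^{(t)}$ are i.i.d.\ with mean $\Delta$ and variance $V$.

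\textbf{Step 2: the boundedness needed, which I expect to be the main obstacle.}
\begin{itemize}
\item We want to control $\Delta-\hat\Delta^{(T)}$, i.e.\ the upper tail of $\sum_t X_t$ with $X_t:=\Delta-Z^{(t)}$.
\item Bernstein's inequality for this tail needs $X_t\le b$, i.e.\ a \emph{lower} bound on $Z^{(t)}$. The hypothesis $Z^{(t)}\le B$ is an upper bound, so it does not supply this directly.
\item The construction of $\Psi$ fills the gap. In both branches of \eqref{eq:edge_importance}, $\lambda\sigma^*_{ij}\le\Psi_{ij}(y)\le m^*_{ij}+\lambda\sigma^*_{ij}$, using $d^y_{ij}\le m^*_{ij}$ in the first branch.
\item It follows that $Z^{(t)}\ge -m^*_{ij'}-\lambda(\sigma^*_{ij'}-\sigma^*_{ij^*})$ almost surely, so $Z^{(t)}$ is bounded on both sides.
\item I will therefore read $B$ as a constant with $|Z^{(t)}-\Delta|\le B$ almost surely. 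This can always be arranged by enlarging $B$ to the range of $Z$, and it is the reading under which the stated inequality holds. I will state this interpretation explicitly in the proof.
\end{itemize}

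\textbf{Step 3: Bernstein with the sharp inversion.} Set $S=\sum_{t=1}^T X_t$. The $X_t$ are centered, satisfy $X_t\le B$, and have variance $V$. The moment-generating-function argument behind Bernstein's inequality gives
$$\log\mathbb{E}e^{sS}\le \frac{TV s^2}{2(1-Bs/3)}, \qquad 0<s<3/B .$$
So $S$ is sub-gamma on the right tail with variance factor $v=TV$ and scale $c=B/3$.

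The standard sub-gamma tail bound then gives, for every $u>0$,
$$P\bigl(S>\sqrt{2vu}+cu\bigr)\le e^{-u}.$$
Taking $u=\log(1/\delta)$ and dividing by $T$, with probability at least $1-\delta$,
$$\Delta-\hat\Delta^{(T)}\le \sqrt{\frac{2V\log(1/\delta)}{T}}+\frac{B\log(1/\delta)}{3T}.$$
This is exactly the claimed bound.

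Two remarks on this step:
\begin{itemize}
\item The sub-gamma inversion is what yields the constant $1/3$. Naively solving the quadratic in the classical form $\exp(-T\varepsilon^2/(2V+2B\varepsilon/3))$ only gives $2/3$.
\item The refined form comes from Bennett's $h(u)\ge u^2/(2(1+u/3))$ together with the Cramér--Chernoff inversion.
\end{itemize}

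\textbf{Step 4: the certification statement.} On the same event of probability at least $1-\delta$, if the right-hand side is negative then $\Delta<0$. This means $\mathbb{E}\,\Psi_{ij^*}(y)<\mathbb{E}\,\Psi_{ij'}(y)$, which justifies calling $j^*$ strictly more trustworthy than $j'$.
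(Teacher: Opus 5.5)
Your proof is correct. At the decisive step it takes a different route from the paper's, and a sounder one. The paper starts from the classical closed-form tail $\exp\bigl(-T\epsilon^2/(2V+2B\epsilon/3)\bigr)$ and solves the quadratic exactly, obtaining $\frac{B\log(1/\delta)}{3T}+\sqrt{\frac{2V\log(1/\delta)}{T}+\frac{B^2\log^2(1/\delta)}{9T^2}}$. It then cites $\sqrt{a+b}\le\sqrt{a}+\sqrt{b}$ to discard the second term under the root. That inequality cannot do this job: it only gives $\frac{2B\log(1/\delta)}{3T}+\sqrt{2V\log(1/\delta)/T}$, which is exactly the $2/3$ you anticipated. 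So the paper's argument does not by itself reach the stated constant.

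Your sub-gamma route uses the MGF bound $\log\mathbb{E}e^{sS}\le TVs^2/(2(1-Bs/3))$ followed by the exact Cram\'er--Chernoff inversion $P(S>\sqrt{2vu}+cu)\le e^{-u}$. This is strictly sharper than inverting the closed form: substituting $t=\sqrt{2vu}+cu$ into $\exp(-t^2/(2(v+ct)))$ gives exponent $u-c^2u^2/(2(v+ct))<u$. It is what legitimately yields $1/3$. The paper's route buys only the convenience of the textbook closed form; yours delivers the bound as stated.

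Your Step 2 also repairs a gap the paper passes over silently. The paper applies the lower-tail inequality using only $Z^{(t)}\le B$, whereas what is needed is $\Delta-Z^{(t)}\le B$. The literal hypothesis really is insufficient: take $T=1$, $Z\in\{-K,0\}$ with $P(Z=-K)=0.05$, $B=0$, $\delta=0.01$, and the claimed bound fails with probability $0.05>\delta$. So extracting a lower bound on $Z^{(t)}$ from $\lambda\sigma^*_{ij}\le\Psi_{ij}(y)\le m^*_{ij}+\lambda\sigma^*_{ij}$ and reinterpreting $B$ is the right fix. Note that you only need the one-sided condition $\Delta-Z^{(t)}\le B$, not $|Z^{(t)}-\Delta|\le B$.

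One small correction to your closing remark. The inequality $h(u)\ge u^2/(2(1+u/3))$ is precisely what produces the weaker classical form. The sharp form comes from the MGF-side bound $\sum_{q\ge 2}u^q/q!\le u^2/(2(1-u/3))$ for $0\le u<3$; on the tail side this corresponds to $h(u)\ge 9h_1(u/3)$ with $h_1(x)=1+x-\sqrt{1+2x}$. Your Step 3 derivation does not rely on that remark, so the proof stands.
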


% \begin{proof}
% The sequence $\{Z^{(t)}\}_{t=1}^T$ consists of i.i.d. Bernoulli random variables with mean $p = \mathbb{E}[Z^{(t)}]$ and variance $\sigma^2 = p(1 - p)$. Since $Z^{(t)} \in [0,1]$, Bernstein's inequality for bounded i.i.d. random variables gives:
% \[
% \mathbb{P}\left( |\hat{p}_T - p| \ge \epsilon \right) \le 2 \exp\left( -\frac{T \epsilon^2}{2 \sigma^2 + \frac{2}{3} \epsilon} \right).
% \]

% Solving for $\epsilon$ in terms of confidence level $\delta$ gives the desired bound:
% \[
% |\hat{p}_T - p| \le \sqrt{ \frac{2 p(1 - p) \log(2/\delta)}{T} } + \frac{2 \log(2/\delta)}{3T}
% \]
% with probability at least $1 - \delta$.

% Since we assume $p > 0.5$, the law of large numbers implies that $\hat{p}_T \to p$ as $T \to \infty$. Thus, $\hat{p}_T > 0.5$ with high probability for sufficiently large $T$.
% \end{proof}

% \color{black}

Theorem~\ref{theo:conv} shows that as we sample nearest-neighbor candidates from more embeddings, trustworthy pairs with smaller $\Psi_{ij}^{(T)}$ are likely to be sampled. Based on this finding, we propose Algorithm~\ref{alg:refined_embedding_rank} and \ref{alg:refined_embedding_score} (see details in App.~\ref{app:algo}) to create a refined \kNN{} graph.
% Algorithm\ref{alg:refined_embedding}, which selects the lowest $k$ pairs for each point $i$ and create a refined \kNN{} graph for DR, has been proposed based on the defined scoring function.
% With this theorem, we are able to see that more pairs with lower scores are selected. From a practical perspective, using the scoring metric $\Psi$,
% we sample several reliable nearest-neighbor candidates for each data point and include them in our DR algorithms based on their default setup.
In Section~\ref{sub:experiment_common_knowledge}, we will show the refined graph generates better embeddings, accomplishing our Goal 3. It is worth noting that only a small subset of the Rashomon set (as few as five embeddings) is sufficient to achieve a better consensus embedding.
% These neighbors refine the embeddings, as shown in Section \ref{sub:experiment_common_knowledge}, 
% illustrated in Goal 3 in Figure \ref{fig:overview}.

% \textcolor{black}{To clarify the relationship between this section and Section~\ref{sec:alignment}: Section~\ref{sec:alignment} introduces alignment-based regularization to steer individual embeddings within the Rashomon set toward interpretable structure (Goals~1 and~2). The present section builds on top of this by leveraging the \emph{collection} of aligned and unaligned embeddings to extract common structural knowledge (Goal~3). Theorem~\ref{theo:conv} provides theoretical justification for the scoring function $\Psi$ used in Algorithm~\ref{alg:refined_embedding_rank} and \ref{alg:refined_embedding_score} (detailed in Appendix~\ref{app:algo}): it guarantees that with enough embeddings, the empirical ranking of neighbor pairs converges to the true ranking, so that genuinely trustworthy neighbors are reliably identified. The algorithm then selects the top-$k$ most trustworthy neighbors per point and uses them to construct a refined \kNN{} graph, which in turn produces an improved embedding.}

\section{Experiments}
\label{sec:experiments}

% Our setup is as follows.

\paragraph{Datasets}

We conduct experiments across a diverse set of datasets to assess the consistency and reliability of embeddings created from the Rashomon set. For image data, we include MNIST \citep{lecun2010mnist}, F-MNIST \citep{fmnist} and COIL-20  \citep{coil20}. From computational biology, we use several single-cell RNA-sequencing (scRNA-seq) datasets from studies \citep{kang2018multiplexed,stuart2019comprehensive,human_cortex_data,neurips2021_data}, preprocessed following the protocol in \citep{townes2019feature}. Finally, to study embedding stability under controlled structural variations, we include 3D point cloud datasets with known geometry, such as Mammoth \citep{SmithsonianMammoth} and Airplane \citep{airplane}. See Appendix \ref{app:datasets} for more details.

\paragraph{Algorithms}
We conduct experiments using five widely-used parametric DR methods: ParamUMAP (\UMAP) \citep{umap_parametric}, Parametric Info-NC-t-SNE (\texttt{InfoNCE}) \citep{oord2018representation}, Parametric Neg-t-SNE (\texttt{NegtSNE}) \citep{damrich2023from}, Parametric NCVis (\texttt{NCVis}) \citep{damrich2023from,gutmann2010noise} and Parametric PaCMAP (\PaCMAP) \citep{paramrepulsor}.
% We use parametric PaCMAP(\PaCMAP) for the majority of our experiments \citep{paramrepulsor}. We include other parametric neighborhood embedding algorithms: ParamUMAP (\UMAP) \citep{sainburg2021parametric}, Parametric Info-NC-t-SNE (\texttt{InfoNCE}) \citep{oord2018representation}, Parametric Neg-t-SNE (\texttt{NegtSNE}) \citep{damrich2023from}, and Parametric NCVis (\texttt{NCVis}) \citep{damrich2023from,gutmann2010noise}. 
Most implementations are from the \textit{contrastive-ne} \citep{contrastivenegithub}, with modifications to incorporate the additional terms introduced in Sections \ref{subsec:pca_aligned} and \ref{subsubsec:axis_aligned}. Qualitative results for \PaCMAP are presented in the main text, while visualizations for the remaining methods are provided in the appendix. For all parametric methods, we use their default hyperparameters. Since no method-to-method comparison is conducted, keeping the default settings provides a fair basis for evaluation. 

% \textcolor{black}{[Not Done Yet]} To show the performance of our method with, we have compared our common knowledge extraction method with other consensus embedding methods like median consensus embeddings \citep{tomo2025median} and see how the performance behave differently.

\paragraph{Rashomon set construction and evaluation} For each DR algorithm, we construct a Rashomon set comprised of 235 embeddings per task. This set is generated by producing 5 embeddings using distinct random seeds across 47 different label weight configurations ($\lambda_{\textrm{PCA}}$ and $\lambda_{\textrm{Axis}}$ from Section \ref{sec:alignment}). For concept-informed DR, we used the dataset's class labels as the concept that the embedding aligns along. We then analyze the loss curves corresponding to varying $\lambda$ values and establish a cutoff threshold that identifies embeddings with comparable performance. Following \citet{rashomon}, embeddings falling within this threshold are considered part of the Rashomon set and others are excluded. Subsequently, we utilize these selected embeddings to derive a common nearest-neighbor (NN) graph, as detailed in Section \ref{sec:common_knowledge}. This approach identifies structural patterns that are consistently preserved across high-performing embeddings. A detailed process for the experiment is shown in Appendix \ref{app:detailed_experiment}. In the experiment setup, we set the missingness of the concept labels (see Appendix \ref{app:datasets}) as 90\%, which limits the amount of prior knowledge used. Additional experiments in Appendix \ref{app:missing_ratio} show how the embedding quality is influenced by the concept label weight $\lambda_{\text{Axis}}$ and missingness ratio.

The performance of each embedding within the Rashomon set is evaluated using a suite of supervised and unsupervised metrics that assess both local and global structure preservation that has been used in previous work \citep{umap_algorithm, Tang16, paramrepulsor,Linderman19FItSNE}. To further assess the alignment between the DR embeddings and the principal components obtained via Principal Component Analysis (PCA), we introduce the Triplet PCA Score. This metric evaluates the consistency of triplet relationships between the DR embeddings and their corresponding PCA representations, providing insights into the preservation of global structure. A more detailed description of the calculation for each metric is in Appendix \ref{app:metrics}.

\subsection{Embedding Alignment Enhances Global Structure without Compromising Local Structure}
\label{exp1}

\paragraph{PCA-informed alignment.} Here, we examine how our PCA alignment term can align clusters globally without disrupting local structure, as quantified by the soft Jaccard index (see Definition \ref{def:soft_jaccard}). Figure \ref{fig:mnist_comparison} shows that the MNIST embeddings have been effectively aligned to reflect the relative positions observed in the standard PCA embeddings visually and quantitatively. For all the datasets, the experimental results are shown in Table \ref{tab:triplet_pca}.
\begin{figure*}[ht]
    \centering
    \includegraphics[width=0.8\linewidth]{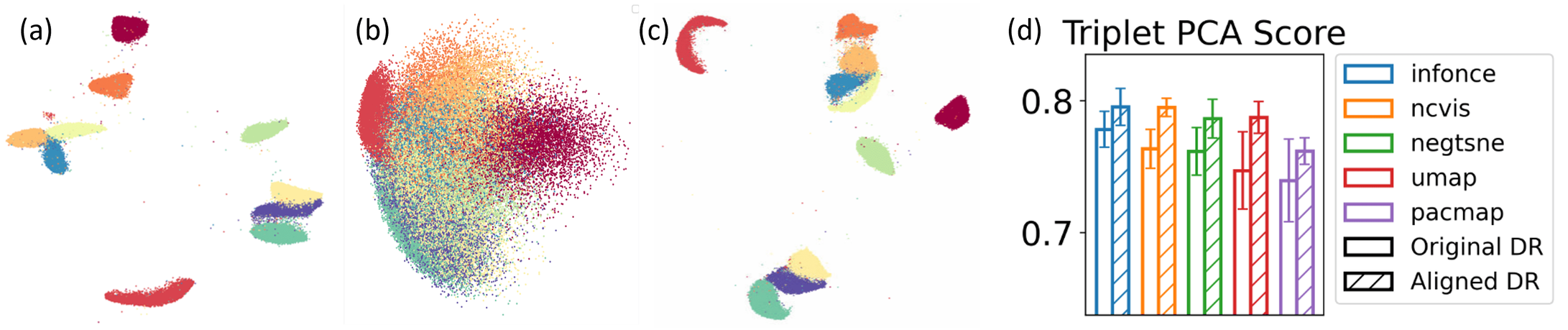}
    \caption{(a) MNIST PaCMAP param embedding, (b) PCA embedding, (c) PCA-informed embedding with $\lambda_{\text{PCA}}$ set to be $ 0.1$. It is nicely aligned with the first two principal components while capturing the detailed cluster structure. (d) Triplet PCA score has improved after PCA-informed alignment.}
    \label{fig:mnist_comparison}

\end{figure*}
% \begin{figure*}[ht!]
%     \centering
%     \includegraphics[width=0.80\linewidth]{fig/metric_comparison_MNIST_pca_new2.png}
%     \caption{Evaluation metrics and losses for MNIST dataset before and after PCA-alignment. The embedding quality does not change substantially as a result of the alignment.
%     }
%     \label{fig:aligned_mnist_metrics}
%     -5mm}
% \end{figure*}

\begin{table*}[ht]
\caption{Triplet PCA Score for embeddings before and after PCA-alignment within different datasets, \textcolor{orange}{orange} are the scores that slightly decrease and the \textcolor{teal}{teal} are the scores that improve. The almost uniform improvement indicates better alignment with the principal components; the fact that the improvements are small is consistent with preservation of local structure, i.e., staying within the Rashomon set.}
\label{tab:triplet_pca}
\centering
\setlength\tabcolsep{1.5pt}
\scalebox{0.93}{
\begin{tabular}{l|ccc|ccc|ccc|ccc|ccc}
\toprule
 & \multicolumn{3}{c}{\UMAP} & \multicolumn{3}{c}{\texttt{NCVis}} & \multicolumn{3}{c}{\texttt{InfoNCE}} & \multicolumn{3}{c}{\texttt{NegtSNE}} & \multicolumn{3}{c}{\PaCMAP} \\
Dataset & Diff & Before &After & Diff & Before &After & Diff & Before &After & Diff & Before & After & Diff & Before & After \\
\midrule
AirPlane & \textcolor{teal}{0.07} & 0.67 & 0.73 & \textcolor{teal}{0.23} & 0.71 & 0.93 & \textcolor{teal}{0.28} & 0.65 & 0.93 & \textcolor{teal}{0.19} & 0.73 & 0.92 & \textcolor{teal}{0.12} & 0.77 & 0.89 \\
COIL20 & \textcolor{teal}{0.12} & 0.68 & 0.8 & \textcolor{teal}{0.11} & 0.68 & 0.79 & \textcolor{teal}{0.10} & 0.73 & 0.83 & \textcolor{teal}{0.12} & 0.68 & 0.8 & \textcolor{teal}{0.05} & 0.74 & 0.79 \\
FMNIST & \textcolor{teal}{0.00} & 0.87 & 0.87 & \textcolor{teal}{0.00} & 0.87 & 0.88 & \textcolor{teal}{0.01} & 0.87 & 0.88 & \textcolor{teal}{0.00} & 0.87 & 0.87 & \textcolor{teal}{0.02} & 0.83 & 0.85 \\
MNIST & \textcolor{teal}{0.04} & 0.75 & 0.79 & \textcolor{teal}{0.03} & 0.76 & 0.79 & \textcolor{teal}{0.02} & 0.78 & 0.8 & \textcolor{teal}{0.02} & 0.76 & 0.79 & \textcolor{teal}{0.02} & 0.74 & 0.76 \\
Mammoth & \textcolor{teal}{0.02} & 0.91 & 0.93 & \textcolor{teal}{0.02} & 0.92 & 0.93 & \textcolor{teal}{0.01} & 0.95 & 0.96 & \textcolor{orange}{-0.01} & 0.91 & 0.9 & \textcolor{orange}{-0.03} & 0.92 & 0.89 \\
USPS & \textcolor{teal}{0.03} & 0.82 & 0.85 & \textcolor{orange}{-0.01} & 0.87 & 0.86 & \textcolor{teal}{0.04} & 0.83 & 0.87 & \textcolor{teal}{0.00} & 0.85 & 0.85 & \textcolor{teal}{0.05} & 0.82 & 0.87 \\
Cortx & \textcolor{teal}{0.01} & 0.85 & 0.86 & \textcolor{teal}{0.02} & 0.84 & 0.87 & \textcolor{teal}{0.06} & 0.86 & 0.92 & \textcolor{teal}{0.01} & 0.86 & 0.88 & \textcolor{teal}{0.02} & 0.86 & 0.88 \\
Kang & \textcolor{teal}{0.01} & 0.61 & 0.62 & \textcolor{teal}{0.01} & 0.63 & 0.64 & \textcolor{teal}{0.01} & 0.64 & 0.65 & \textcolor{teal}{0.01} & 0.63 & 0.64 & \textcolor{orange}{-0.01} & 0.66 & 0.65 \\
CBMC & \textcolor{teal}{0.02} & 0.61 & 0.64 & \textcolor{teal}{0.13} & 0.61 & 0.74 & \textcolor{teal}{0.04} & 0.69 & 0.73 & \textcolor{teal}{0.07} & 0.61 & 0.69 & \textcolor{teal}{0.02} & 0.59 & 0.61 \\
Stuart & \textcolor{teal}{0.02} & 0.59 & 0.6 & \textcolor{teal}{0.04} & 0.56 & 0.6 & \textcolor{teal}{0.10} & 0.57 & 0.68 & \textcolor{teal}{0.02} & 0.58 & 0.6 & \textcolor{orange}{-0.01} & 0.64 & 0.63 \\
\bottomrule
\end{tabular}}

\end{table*}

\paragraph{Concept-aware alignment.}
Here, we evaluate the concept-aware embedding method of Section \ref{subsubsec:axis_aligned}. Qualitatively, on the Fashion MNIST dataset, where the labels correspond to clothing items arranged from head to toe, Figure \ref{fig:fmnist_aligned} shows that the embeddings exhibit a clear alignment along the horizontal axis going from feet to head.

\begin{figure*}[!ht]
    \centering
    \includegraphics[width=0.70\linewidth]{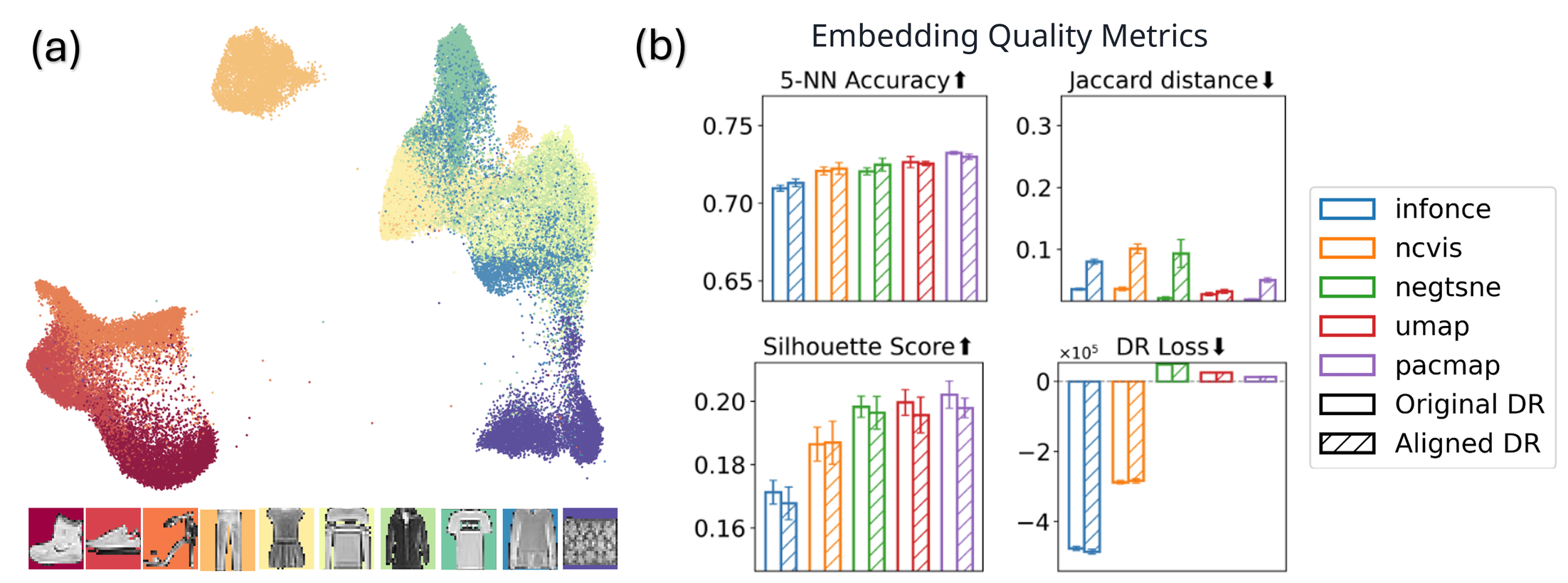}
    \caption{(a) Concept-informed aligned \PaCMAP embedding. Alignment is along the horizontal axis from feet (left) to head (right). Footwear is labeled in shades of red to orange, trousers in yellow, dresses in light yellow, pullovers and coats in green, shirts and t-shirts in blue, handbags in purple. (b) Evaluation metrics and losses for FMNIST before and after concept alignment, which remain generally unchanged.}
    \label{fig:fmnist_aligned}
\end{figure*}

Importantly, if we examine the metrics evaluated before and after alignment in Figure \ref{fig:fmnist_aligned}(b), we observe that for all methods, the metric values remain largely consistent. This indicates that the alignment process does not significantly disrupt the structural properties of the embeddings. For both types of alignment,    neither the PaCMAP loss nor the soft Jaccard similarity showed significant degradation. This is consistent with having high-quality embeddings throughout the Rashomon set. More results can be observed in Appendix \ref{app:pca_results} and \ref{app:concept_results}.

\subsection{Trustworthy Common Knowledge Graphs Improve DR Performance}\label{sub:experiment_common_knowledge}

Here we examine how extracted common knowledge can enhance the performance of dimension reduction based on Algorithm in Appendix \ref{app:algo}. As shown in Figure \ref{fig:usps_combined}, the embedding constructed using only the trustworthy nearest neighbor pairs (b) results in much clearer separation between different digits in USPS \citep{USPS}, compared to the original embedding (a). Furthermore, the quantitative metrics in (c) show that the combined DR approach (hatched bars) consistently outperforms or matches the original embeddings across multiple algorithms. This demonstrates that leveraging stable relationships across the Rashomon set improves both local detail and global organization, leading to more robust and interpretable representations. More examples are in Appendix \ref{app:common}.

\begin{figure*}[ht]
    \centering
    \includegraphics[width=\linewidth]{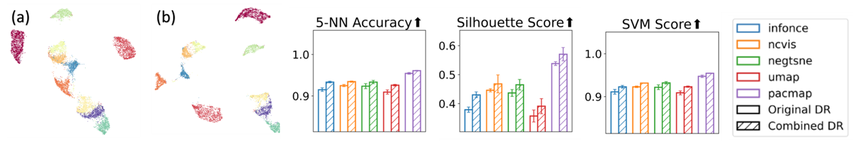}
    \caption{(a) Original \PaCMAP \ embedding of USPS dataset. (b) Common knowledge embedding using only stable neighbor pairs  within the Rashomon set. (c) Quantitative comparison of original vs. combined DR embeddings across three evaluation metrics for five methods.}
    \label{fig:usps_combined}
\end{figure*}

\subsection{Case Study: Common Knowledge Extraction Filters False Positives}

Here we use the MNIST dataset as an example to show how  the scoring selection function optimizes the nearest neighbor graph using the Rashomon set, enabling it to identify more trustworthy neighbor pairs. Figure \ref{fig:impact_nn} illustrates the difference between the original \PaCMAP 
 \ embedding and the refined embedding after incorporating stable nearest neighbors identified through our Rashomon set-based selection.
\begin{figure*}[ht]
    \centering
    
    \includegraphics[width=0.72\linewidth]{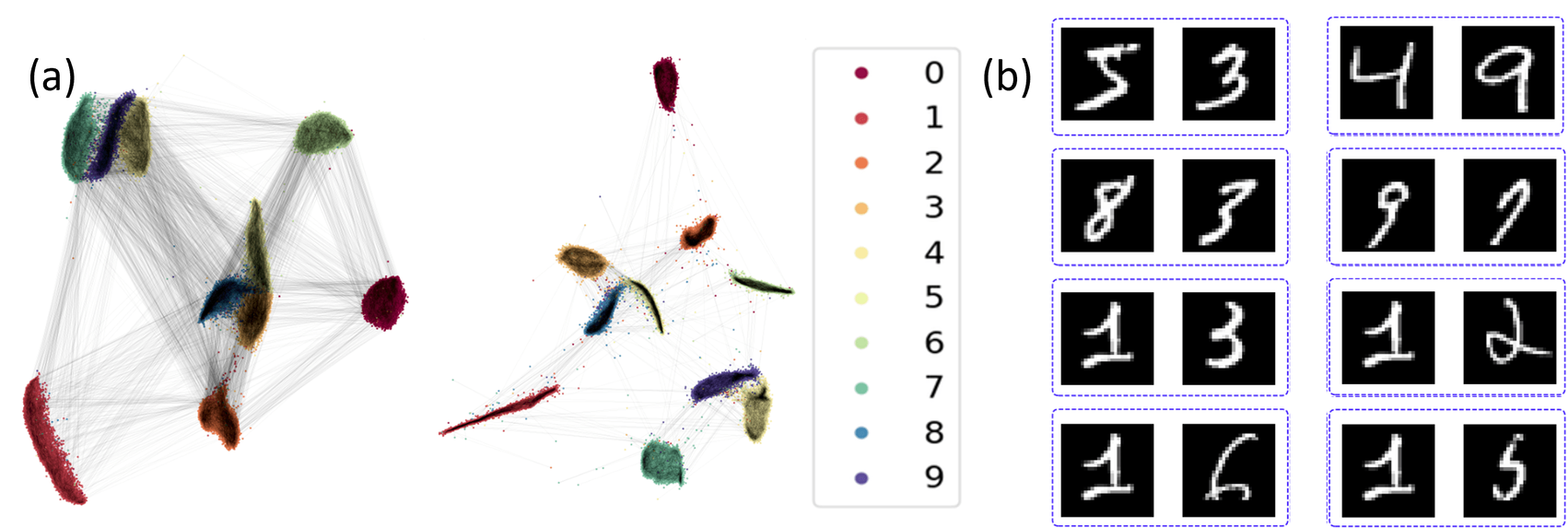}
    \caption{(a) MNIST embedding before (left) and after (right) common NN pairs are selected, (b) Examples of rejected NN pairs when finding common knowledge through the Rashomon set. These pairs are close in pixel space (but not close along the manifold of any digit).}
    \label{fig:impact_nn}
    
\end{figure*}

Figure~\ref{fig:impact_nn} shows that refining the nearest neighbor (NN) pairs leads to an embedding that achieves better class separability while still preserving meaningful global relationships among digits. In Figure \ref{fig:impact_nn} (a), the left panel shows the original PaCMAP embedding, where some digit clusters are entangled and boundaries between similar classes -- such as 3, 5, and 8 -- are less clear, due to false neighbor relationships. After selecting only the consistent NN pairs across the Rashomon set, the right panel reveals a cleaner embedding: digit clusters are more compact and well-separated, yet their relative positions remain consistent with semantic similarities (e.g., curved digits remain near each other). This suggests that the global structure is maintained even as local ambiguities are resolved.

Figure \ref{fig:impact_nn}(b) provides a closer look at several rejected NN pairs. These pairs often span across visually dissimilar digits, such as connecting a 5 with a 3, or a 9 with a 7. Their rejection suggests that they were unreliable connections in the original embedding; neighbors in pixel space are not equivalent to neighbors along the manifold of data. This visual and empirical evidence demonstrates that pruning away inconsistent neighbor relationships enhances the reliability of both local and global structure in the embedding. The result is a more interpretable representation where each digit class forms a tighter, more coherent cluster with fewer spurious links.

\section{Discussion and Limitations}
\label{sec:discussion_limitation}

Our work reframes dimension reduction as a problem with multiple valid solutions, bringing the Rashomon set into the DR paradigm. This perspective enables more interpretable, robust, and customizable embeddings. By aligning embeddings with principal components or user-defined concepts (e.g, developmental trajectories in biology), our approach offers a flexible way to inject interpretable structure into low-dimensional representations. By extracting common information across high-quality embeddings, we enhance the trustworthiness of DR.

One caveat is computation. Similar to random forests or boosting, generating approximations of a Rashomon set requires multiple runs of DR, each with different seeds, sub-samples, or regularization settings, leading to a higher computational cost during training. This overhead is often justified by the robustness and insights gained. Another possible caveat is that the user may try to align the axes with concepts that contradict the natural DR layout. This is not difficult to detect as $\L_\DR$ will increase. Formalizing this is a useful direction for future work.
%In addition, applying the alignment regularization requires some prior knowledge about the dataset such as meaningful concepts, class hierarchies, or interpretable axes of variation. In domains where such concepts are unknown or poorly defined, the benefits of alignment may be harder to realize.

Overall, this work provides a new lens through which to view dimension reduction—one that makes uncertainty visible, interpretable alignment possible, and common structure extractable. As DR continues to play a central role in data analysis pipelines and hypothesis generation for scientific domains, methods that leverage the space of valid embeddings will become increasingly valuable.

\section*{Acknowledgement}

This material is based upon work supported by the National Institutes of Health/NIDA grant number R01DA054994 and  National Science Foundation grant number CMMI-2323978.

\bibliographystyle{apalike}
\bibliography{mybib}

@inproceedings{localmap,
title={Dimension Reduction with Locally Adjusted Graphs}, 
booktitle={Association for the Advancement of Artificial Intelligence  ({AAAI}) Annual Conference on Artificial Intelligence},
year={2024},
author={Yingfan Wang and Yiyang Sun and Haiyang Huang and Cynthia Rudin}
}

@inproceedings{paramrepulsor,
  title={Navigating the Effect of Parametrization for Dimensionality Reduction},
  author={Huang, Haiyang and Wang, Yingfan and Rudin, Cynthia},
  booktitle={Neural Information Processing Systems},
  year={2024},
}

@article{tsne,
  title={Visualizing data using {t-SNE}},
  author={Van der Maaten, Laurens and Hinton, Geoffrey},
  journal={Journal of Machine Learning Research},
  volume={9},
  number={11},
  year={2008}
}

@article{umap_algorithm,
  title={Uniform manifold approximation and projection},
  author={Healy, John and McInnes, Leland},
  journal={Nature Reviews Methods Primers},
  volume={4},
  number={1},
  pages={82},
  year={2024},
  publisher={Nature Publishing Group UK London}
}

@article {umap_parametric,
    author = {Sainburg, Tim and McInnes, Leland and Gentner, Timothy Q.},
    title = {Parametric {UMAP}: learning embeddings with deep neural networks for representation and semi-supervised learning},
    journal = {ArXiv e-prints},
    archivePrefix = "arXiv",
    eprint = {2009.12981},
    primaryClass = "stat.ML",
    year = 2020,
}

@article{rashomon,
  title={Statistical modeling: The two cultures (with comments and a rejoinder by the author)},
  author={Breiman, Leo},
  journal={Statistical Science},
  volume={16},
  number={3},
  pages={199--231},
  year={2001},
  publisher={Institute of Mathematical Statistics}
}

@article{pacmap,
      title={Understanding {H}ow {D}imension {R}eduction {T}ools {W}ork: {A}n {E}mpirical {A}pproach to {D}eciphering t-{SNE}, {UMAP}, {TriMAP}, and {PaCMAP} for {D}ata {V}isualization}, 
      author={Yingfan Wang and Haiyang Huang and Cynthia Rudin and Yaron Shaposhnik},
      year={2021},
      journal={Journal of Machine Learning Research},
    volume={22}
}

@article{wattenberg2016use,
  title={How to use t-{SNE} effectively},
  author={Wattenberg, Martin and Vi{\'e}gas, Fernanda and Johnson, Ian},
  journal={Distill},
  volume={1},
  number={10},
  pages={e2},
  year={2016}
}

@article{nguyen2019ten,
  title={Ten quick tips for effective dimensionality reduction},
  author={Nguyen, Lan Huong and Holmes, Susan},
  journal={PLoS computational Biology},
  volume={15},
  number={6},
  year={2019},
  publisher={Public Library of Science}
}

@article{cao2017automatic,
  title={Automatic selection of t-{SNE} Perplexity},
  author={Cao, Yanshuai and Wang, Luyu},
  journal={arXiv preprint arXiv:1708.03229},
  year={2017}
}

@inproceedings{Tang16,
 author    = {Jian Tang and Jingzhou Liu and Ming Zhang and Qiaozhu Mei},
 title     = {Visualizing {L}arge-{S}cale and {H}igh-{D}imensional {D}ata},
 year      = {2016},
 pages     = {287--297},
 booktitle     = {Proceedings of the 25th International Conference on the World Wide Web}
}

@article{pca,
author = {Karl Pearson},
title = {{LIII}. On lines and planes of closest fit to systems of points in space},
journal = {The London, Edinburgh, and Dublin Philosophical Magazine and Journal of Science},
volume = {2},
number = {11},
pages = {559-572},
year  = {1901},
publisher = {Taylor & Francis},
doi = {10.1080/14786440109462720},
}

@inproceedings{Belkin01,
 author    = {Mikhail Belkin and Partha Niyogi},
 title     = {Laplacian {E}igenmaps and {S}pectral {T}echniques for {E}mbedding and {C}lustering},
 year      = {2001},
 pages     = {585--591},
 volume    = {14},
 booktitle     = {Advances in Neural Information Processing Systems},
 publisher = {MIT Press}
}

@Article{Torgerson52,
  title={Multidimensional scaling: I {T}heory and method },
  author={Warren Torgerson},
  journal={Psychometrika},
  year={1952},
  volume={17},
  number={4},
  pages={401-419}
}

@Article{Tenenbaum00,
  title={A {G}lobal {G}eometric {F}ramework for {N}onlinear {D}imensionality {R}eduction},
  author={Joshua B. Tenenbaum and Vin de Silva and John C. Langford},
  journal={Science},
  year={2000},
  volume={290},
  number={5500},
  pages={2319-2323}
}

@Article{Roweis00,
  title={Nonlinear {D}imensionality {R}eduction by {L}ocally {L}inear {E}mbedding},
  author={Sam T. Roweis and Lawrence K. Saul},
  journal={Science},
  year={2000},
  volume={290},
  number={5500},
  pages={2323-2326}
}

@Article{Linderman19FItSNE,
  title={Fast interpolation-based {t-SNE} for improved visualization of single-cell RNA-seq data},
  author={George C. Linderman and Manas Rachh and Jeremy G. Hoskins and Stefan Steinerberger and Yuval Kluger },
  journal={Nature Methods},
  year={2019},
  volume={16},
  pages={243-245}
}

@Article{Belkina19OptSNE,
  title={Automated optimized parameters for T-distributed stochastic neighbor embedding improve visualization and analysis of large datasets},
  author={Anna C. Belkina and Christopher O. Ciccolella and Rina Anno and Richard Halpert and Josef Spidlen and Jennifer E. Snyder-Cappione},
  journal={Nature Communications},
  year={2019},
  volume={10},
  number={5415},
}

@ARTICLE{TriMAP,
       author = {{Amid}, Ehsan and {Warmuth}, Manfred K.},
        title = "{{TriMAP}: Large-scale Dimensionality Reduction Using Triplets}",
      journal = {arXiv e-prints},
         year = 2019,
          eid = {arXiv:1910.00204},
        pages = {arXiv:1910.00204},
archivePrefix = {arXiv},
       eprint = {1910.00204}
}

@article{lecun2010mnist,
  title={{MNIST} handwritten digit database},
  author={LeCun, Yann and Cortes, Corinna and Burges, CJ},
  journal={ATT Labs [Online]. Available: http://yann.lecun.com/exdb/mnist},
  volume={2},
  year={2010}
}

@misc{UmapMammoth,
  author = {Coenen, Andy and Pearce, Adam},
  title = {Understanding {UMAP}},
  year = {2019},
  publisher = {Google Research},
  howpublished = {\url{https://pair-code.github.io/understanding-umap/}},
}

@misc{SmithsonianMammoth,
  author = {{The Smithsonian Institute}},
  title = {Mammuthus primigenius (Blumbach)},
  publisher = {The Smithsonian Institute},
  year = {2020},
  howpublished = {\url{https://3d.si.edu/object/3d/mammuthus-primigenius-blumbach:341c96cd-f967-4540-8ed1-d3fc56d31f12}},
}

@TECHREPORT{coil20,
    author = {Sameer A. Nene and Shree K. Nayar and Hiroshi Murase},
    title = {Columbia {O}bject {I}mage {L}ibrary (COIL-20)},
    institution = {Technical Report CUCS-005-96},
    year = {1996}
}

@article{fmnist,
  title={Fashion-{MNIST}: a novel image dataset for benchmarking machine {L}earning algorithms},
  author={Xiao, Han and Rasul, Kashif and Vollgraf, Roland},
  journal={arXiv preprint arXiv:1708.07747},
  year={2017}
}

@Manual{Annoy,
  author =       {Erik Bernhardsson},
  title =        {Annoy: Approximate Nearest Neighbors in {C++/Python}},
  year =         2019,
  note =  {Python package version 1.16.3},
  url =          {https://pypi.org/project/annoy/},
}

@ARTICLE{USPS,  author={J. J. {Hull}},  journal={IEEE Transactions on Pattern Analysis and Machine Intelligence},   title={A database for handwritten text recognition research},   year={1994},  volume={16},  number={5},  pages={550-554},}

@article{kang2018multiplexed,
  title={Multiplexed droplet single-cell {RNA}-sequencing using natural genetic variation},
  author={Kang, Hyun Min and Subramaniam, Meena and Targ, Sasha and Nguyen, Michelle and Maliskova, Lenka and McCarthy, Elizabeth and Wan, Eunice and Wong, Simon and Byrnes, Lauren and Lanata, Cristina M and others},
  journal={Nature Biotechnology},
  volume={36},
  number={1},
  pages={89},
  year={2018},
  publisher={Nature Publishing Group}
}

@article{stuart2019comprehensive,
  title={Comprehensive integration of single-cell data},
  author={Stuart, Tim and Butler, Andrew and Hoffman, Paul and Hafemeister, Christoph and Papalexi, Efthymia and Mauck III, William M and Hao, Yuhan and Stoeckius, Marlon and Smibert, Peter and Satija, Rahul},
  journal={Cell},
  volume={177},
  number={7},
  pages={1888--1902},
  year={2019},
  publisher={Elsevier}
}

@article{townes2019feature,
  title={Feature selection and dimension reduction for single-cell RNA-Seq based on a multinomial model},
  author={Townes, F William and Hicks, Stephanie C and Aryee, Martin J and Irizarry, Rafael A},
  journal={Genome Biology},
  volume={20},
  number={1},
  pages={1--16},
  year={2019},
  publisher={Springer}
}

@article{lee1999learning,
  title={Learning the parts of objects by non-negative matrix factorization},
  author={Lee, Daniel D and Seung, H Sebastian},
  journal={Nature},
  volume={401},
  number={6755},
  pages={788--791},
  year={1999},
  publisher={Nature Publishing Group}
}

@inproceedings{moor2020topological,
  title={Topological {A}utoencoders},
  author={Moor, Michael and Horn, Max and Rieck, Bastian and Borgwardt, Karsten},
  booktitle={International {C}onference on {M}achine {L}earning},
  pages={7045--7054},
  year={2020},
  organization={PMLR}
}

@inproceedings{bohm2022unsupervised,
  title={Unsupervised visualization of image datasets using contrastive learning},
  author={B{\"o}hm, Jan Niklas and Berens, Philipp and Kobak, Dmitry},
  booktitle={International Conference on Learning Representations},
  year={2023},
}

@inproceedings{van2009learning,
  title={Learning a {P}arametric {E}mbedding by {P}reserving {L}ocal {S}tructure},
  author={van der Maaten, Laurens},
  booktitle={Artificial {I}ntelligence and {S}tatistics},
  pages={384--391},
  year={2009},
  organization={PMLR}
}

@inproceedings{damrich2023from,
  title={From $t$-{SNE} to {UMAP} with contrastive learning},
  author={Damrich, Sebastian and B{\"o}hm, Jan Niklas  and Hamprecht, Fred A and Kobak, Dmitry},
  booktitle={International Conference on Learning Representations},
  year={2023},
}

@article{bohm2022attraction,
  title={Attraction-{R}epulsion {S}pectrum in {N}eighbor {E}mbeddings},
  author={B{\"o}hm, Jan Niklas and Berens, Philipp and Kobak, Dmitry},
  journal={Journal of Machine Learning Research},
  volume={23},
  number={1},
  pages={4118--4149},
  year={2022},
  publisher={JMLRORG}
}

@inproceedings{nazari2023geometric,
  title={Geometric Autoencoders--What You See is What You Decode},
  author={Nazari, Philipp and Damrich, Sebastian and Hamprecht, Fred A},
  booktitle={Proceedings of International Conference on Machine Learning},
  year={2023},
pages={25834-25857},
  organization={PMLR},

}

@inproceedings{artemenkov2020ncvis,
  title={{NCVis}: {N}oise {C}ontrastive {A}pproach for {S}calable {V}isualization},
  author={Artemenkov, Aleksandr and Panov, Maxim},
  booktitle={Proceedings of The Web Conference},
  pages={2941--2947},
  year={2020}
}

@inproceedings{gutmann2010noise,
  title={Noise-contrastive estimation: A new estimation principle for unnormalized statistical models},
  author={Gutmann, Michael and Hyv{\"a}rinen, Aapo},
  booktitle={Proceedings of the {I}nternational {C}onference on {A}rtificial {I}ntelligence and {S}tatistics},
  pages={297--304},
  year={2010}
}

@article{oord2018representation,
  title={Representation {L}earning with {C}ontrastive {P}redictive {C}oding},
  author={Oord, Aaron van den and Li, Yazhe and Vinyals, Oriol},
  journal={arXiv preprint arXiv:1807.03748},
  year={2018}
}

@article{neurips2021_data,
  title={Simultaneous epitope and transcriptome measurement in single cells},
  author={Stoeckius, Marlon and Hafemeister, Christoph and Stephenson, William and Houck-Loomis, Brian and Chattopadhyay, Pratip K and Swerdlow, Harold and Satija, Rahul and Smibert, Peter},
  journal={Nature Methods},
  volume={14},
  number={9},
  pages={865--868},
  year={2017},
  publisher={Nature Publishing Group}
}

@article{human_cortex_data,
  title={Multi-omic profiling of the developing human cerebral cortex at the single-cell level},
  author={Zhu, Kaiyi and Bendl, Jaroslav and Rahman, Samir and Vicari, James M and Coleman, Claire and Clarence, Tereza and Latouche, Ovaun and Tsankova, Nadejda M and Li, Aiqun and Brennand, Kristen J and others},
  journal={Science Advances},
  volume={9},
  number={41},
  pages={eadg3754},
  year={2023},
  publisher={American Association for the Advancement of Science}
}

@article{van2022probabilistic,
  title={A probabilistic graph coupling view of dimension reduction},
  author={Van Assel, Hugues and Espinasse, Thibault and Chiquet, Julien and Picard, Franck},
  journal={Advances in Neural Information Processing Systems},
  volume={35},
  pages={10696--10708},
  year={2022}
}

@article{colange2020steering,
  title={Steering distortions to preserve classes and neighbors in supervised dimensionality reduction},
  author={Colange, Beno{\^\i}t and Peltonen, Jaakko and Aupetit, Micha{\"e}l and Dutykh, Denys and Lespinats, Sylvain},
  journal={Advances in Neural Information Processing Systems},
  volume={33},
  pages={13214--13225},
  year={2020}
}

@inproceedings{zu2022spacemap,
  title={{SpaceMAP}: Visualizing High-Dimensional Data by Space Expansion.},
  author={Zu, Xinrui and Tao, Qian},
  booktitle={International Conference on Machine Learning},
  pages={27707--27723},
  year={2022}
}

@inproceedings{sarfraz2022hnne,
  title={Hierarchical nearest neighbor graph embedding for efficient dimensionality reduction},
  author={Sarfraz, Saquib and Koulakis, Marios and Seibold, Constantin and Stiefelhagen, Rainer},
  booktitle={Proceedings of the IEEE/CVF Conference on Computer Vision and Pattern Recognition},
  pages={336--345},
  year={2022}
}

@article{tomo2025median,
  title={Median Consensus Embedding for Dimensionality Reduction},
  author={Tomo, Yui and Yoneoka, Daisuke},
  journal={arXiv preprint arXiv:2503.08103},
  year={2025}
}

@article{viswanath2012consensus,
  title={Consensus embedding: theory, algorithms and application to segmentation and classification of biomedical data},
  author={Viswanath, Satish and Madabhushi, Anant},
  journal={BMC Bioinformatics},
  volume={13},
  pages={1--20},
  year={2012},
  publisher={Springer}
}

@inproceedings{tiwari2008consensus,
  title={Consensus-locally linear embedding (C-LLE): application to prostate cancer detection on magnetic resonance spectroscopy},
  author={Tiwari, Pallavi and Rosen, Mark and Madabhushi, Anant},
  booktitle={Medical Image Computing and Computer-Assisted Intervention--MICCAI 2008: 11th International Conference, New York, NY, USA, September 6-10, 2008, Proceedings, Part II 11},
  pages={330--338},
  year={2008},
  organization={Springer}
}

@article{li2022consensus,
  title={Consensus embedding for multiple networks: Computation and applications},
  author={Li, Mengzhen and Co{\c{s}}kun, Mustafa and Koyut{\"u}rk, Mehmet},
  journal={Network Science},
  volume={10},
  number={2},
  pages={190--206},
  year={2022},
  publisher={Cambridge University Press}
}

@article{dalmia2021clustering,
  title={Clustering with {UMAP}: Why and how connectivity matters},
  author={Dalmia, Ayush and Sia, Suzanna},
  journal={arXiv preprint arXiv:2108.05525},
  year={2021}
}

@misc{contrastivenegithub,
  author       = {Berens Lab},
  title        = {Contrastive Neighbor Embeddings},
  year         = {2022},
  howpublished = {\url{https://github.com/berenslab/contrastive-ne}},
  note         = {Accessed: 2025-05-04}
}

@article{kobak2021initialization,
  title={Initialization is critical for preserving global data structure in both {t-SNE} and {UMAP}},
  author={Kobak, Dmitry and Linderman, George C},
  journal={Nature Biotechnology},
  volume={39},
  number={2},
  pages={156--157},
  year={2021},
  publisher={Nature Publishing Group, New York}
}

@inproceedings{airplane,
title={3d shapenets: A deep representation for volumetric shapes},
author={Wu, Zhirong and Song, Shuran and Khosla, Aditya and Yu, Fisher and Zhang, Linguang and Tang, Xiaoou and Xiao, Jianxiong},
booktitle={Proceedings of the IEEE Conference on Computer Vision and Pattern Recognition},
pages={1912--1920},
year={2015}
}

@misc{fico,
  author = {FICO},
  title = {Explainable Machine Learning Challenge},
  howpublished={\url {https://community.fico.com/s/explainable-machine-learning-challenge}},
  year = {2018}
}

%%%%%%%%%%%%%%%%%%%%%%%%%%%%%%%%%%%%%%%%%%%%%%%%%%%%%%%%%%%%
\section*{Checklist}

\begin{enumerate}

  \item For all models and algorithms presented, check if you include:
  \begin{enumerate}
    \item A clear description of the mathematical setting, assumptions, algorithm, and/or model. [Yes]
    \item An analysis of the properties and complexity (time, space, sample size) of any algorithm. [Yes]
    \item (Optional) Anonymized source code, with specification of all dependencies, including external libraries. [Yes]
  \end{enumerate}

  \item For any theoretical claim, check if you include:
  \begin{enumerate}
    \item Statements of the full set of assumptions of all theoretical results. [Yes]
    \item Complete proofs of all theoretical results. [Yes]
    \item Clear explanations of any assumptions. [Yes]     
  \end{enumerate}

  \item For all figures and tables that present empirical results, check if you include:
  \begin{enumerate}
    \item The code, data, and instructions needed to reproduce the main experimental results (either in the supplemental material or as a URL). [Yes]
    \item All the training details (e.g., data splits, hyperparameters, how they were chosen). [Yes]
    \item A clear definition of the specific measure or statistics and error bars (e.g., with respect to the random seed after running experiments multiple times). [Yes]
    \item A description of the computing infrastructure used. (e.g., type of GPUs, internal cluster, or cloud provider). [Yes]
  \end{enumerate}

  \item If you are using existing assets (e.g., code, data, models) or curating/releasing new assets, check if you include:
  \begin{enumerate}
    \item Citations of the creator If your work uses existing assets. [Yes]
    \item The license information of the assets, if applicable. [Yes]
    \item New assets either in the supplemental material or as a URL, if applicable. [Not Applicable]
    \item Information about consent from data providers/curators. [Yes]
    \item Discussion of sensible content if applicable, e.g., personally identifiable information or offensive content. [Yes]
  \end{enumerate}

  \item If you used crowdsourcing or conducted research with human subjects, check if you include:
  \begin{enumerate}
    \item The full text of instructions given to participants and screenshots. [Not Applicable]
    \item Descriptions of potential participant risks, with links to Institutional Review Board (IRB) approvals if applicable. [Not Applicable]
    \item The estimated hourly wage paid to participants and the total amount spent on participant compensation. [Not Applicable]
  \end{enumerate}

\end{enumerate}

\clearpage
\appendix
\thispagestyle{empty}

% Supplementary material: To improve readability, you must use a single-column format for the supplementary material.
\onecolumn
\aistatstitle{Supplementary Materials for The Rashomon Effect for Visualizing High-Dimensional Data}

\section{Proof of Theorem}
\label{theorem_proof}

{\color{black}
\begin{theorem}
Let $(i, j^*, j')$ be a triplet of points. Let $\mathcal{D}$ be the distribution of valid embeddings within the Rashomon set. Let $\{y^{(1)}, \dots, y^{(T)}\}$ be $T$ i.i.d.\ low-dimensional embeddings sampled from $\mathcal{D}$. Define the population scoring function $\Psi_{ij}(y) = d^y_{ij} + \lambda \cdot \sigma^*_{ij}$ if $d^y_{ij} \le M^*$, and $\Psi_{ij}(y) = m^*_{ij} + \lambda \cdot \sigma^*_{ij}$ otherwise, where $\sigma^*_{ij}, m^*_{ij}, M^*$ are fixed population constants estimated from a separate calibration set. Define the margin variable for the $t$-th embedding:
$$
Z^{(t)}_{i,j^*,j'} := \Psi_{ij^*}(y^{(t)}) - \Psi_{ij'}(y^{(t)}).
$$
Since $\Psi$ depends on the random sample $y^{(t)}$ through $d^{y^{(t)}}_{ij}$ while using only fixed population constants, and the $y^{(t)}$ are i.i.d., the variables $Z^{(t)}_{i,j^*,j'}$ are strictly i.i.d.\ real-valued random variables. Let $\Delta = \mathbb{E}[Z^{(t)}_{i,j^*,j'}]$ be the true expected margin. Define the empirical margin:
$$\hat{\Delta}^{(T)} = \frac{1}{T}\sum_{t=1}^T Z^{(t)}_{i,j^*,j'}.$$
Let $V = \mathrm{Var}(Z^{(t)}_{i,j^*,j'})$ and $B$ a constant such that $Z^{(t)}_{i,j^*,j'} \le B$ almost surely. Then with probability at least $1 - \delta$:
\[
\Delta \le \hat{\Delta}^{(T)} + \frac{B\log(1/\delta)}{3T} + \sqrt{\frac{2V\log(1/\delta)}{T}}.
\]
We say that the neighbor $j^*$ is strictly more trustworthy than $j'$ if the upper bound of this confidence interval is negative:
\[
\hat{\Delta}^{(T)} + \frac{B\log(1/\delta)}{3T} + \sqrt{\frac{2V\log(1/\delta)}{T}} < 0.
\]
\end{theorem}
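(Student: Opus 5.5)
The inequality is a one-sided Bernstein bound for the i.i.d.\ variables $Z^{(t)}_{i,j^*,j'}$. The only delicate part is making sure the boundedness hypothesis points in the direction needed for the \emph{lower} tail of $\hat{\Delta}^{(T)}$. The trustworthiness criterion at the end is a definition and needs no proof: once the bound holds with probability at least $1-\delta$, a negative right-hand side forces $\Delta<0$ on the same event.

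\emph{Step 1: i.i.d.\ structure and boundedness.} Because $\sigma^*_{ij}, m^*_{ij}, M^*$ are fixed constants (taken from the calibration set, independent of the test embeddings), each $\Psi_{ij}(y)$ is a fixed measurable function of $y$. Hence $Z^{(t)}=g(y^{(t)})$ for one fixed $g$, and the $Z^{(t)}$ are i.i.d.\ with mean $\Delta$ and variance $V$.

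Moreover $\Psi_{ij}$ is bounded on both sides. In the first branch $0\le d^y_{ij}\le M^*$, and in the second branch the value is the constant $m^*_{ij}$. So $\lambda\sigma^*_{ij}\le \Psi_{ij}(y)\le \max(M^*,m^*_{ij})+\lambda\sigma^*_{ij}$. Consequently $Z^{(t)}$ lies in a bounded interval, and so does $\Delta$.

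\emph{Step 2: reduce to an upper-tail statement.} Set $X_t:=\Delta-Z^{(t)}$. These are i.i.d., centered, with variance $V$. The event we must control is $\Delta-\hat{\Delta}^{(T)}=\frac1T\sum_t X_t>\varepsilon$, which is an \emph{upper} deviation of the $X_t$. Bernstein therefore needs $X_t\le B$ almost surely, i.e.\ $Z^{(t)}\ge\Delta-B$.

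This is the main obstacle: the statement literally assumes $Z^{(t)}\le B$, which bounds the wrong side. I would resolve it with the two-sided boundedness from Step 1. Let $B$ be any constant with $|Z^{(t)}-\Delta|\le B$ almost surely, for example the length of the interval containing $Z^{(t)}$. Under the symmetric reading of the hypothesis this gives $X_t\le B$ and nothing else changes. I would state this interpretation of $B$ explicitly in the proof.

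\emph{Step 3: apply Bernstein in sub-gamma form and invert.} For centered $X_t\le B$ with variance $V$, the Bernstein moment-generating-function bound (Boucheron--Lugosi--Massart, Thm.~2.10) shows that $\sum_{t}X_t$ is sub-gamma on the right tail, with variance factor $TV$ and scale parameter $c=B/3$. The sub-gamma tail bound then gives, for all $u>0$,
\[
\Pr\Bigl(\textstyle\sum_{t=1}^T X_t > \sqrt{2TVu}+\tfrac{B u}{3}\Bigr)\le e^{-u}.
\]
Choosing $u=\log(1/\delta)$ and dividing by $T$ yields, with probability at least $1-\delta$,
\[
\Delta-\hat{\Delta}^{(T)}\le \sqrt{\tfrac{2V\log(1/\delta)}{T}}+\tfrac{B\log(1/\delta)}{3T},
\]
which is the claimed inequality.

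If one prefers to start from the classical form $\exp\!\bigl(-T\varepsilon^2/(2V+2B\varepsilon/3)\bigr)$, solve the resulting quadratic in $\varepsilon$ exactly rather than using $\sqrt{a+b}\le\sqrt a+\sqrt b$. That shortcut yields the weaker constant $2/3$ in front of $B\log(1/\delta)/T$, whereas the sub-gamma route gives $1/3$ directly.
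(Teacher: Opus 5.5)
Your argument is correct, and it differs from the paper's proof in a way that matters.

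\textbf{The paper's route.} The paper starts from the classical one-sided Bernstein tail $\exp\bigl(-T\epsilon^2/(2V+2B\epsilon/3)\bigr)$, applied to the lower deviation $\hat{\Delta}^{(T)}-\Delta\le-\epsilon$. With $L=\log(1/\delta)$, it solves for $\epsilon$ exactly and gets $\frac{BL}{3T}+\sqrt{\frac{2VL}{T}+\frac{B^2L^2}{9T^2}}$. It then cites $\sqrt{a+b}\le\sqrt a+\sqrt b$ to drop the $B^2L^2/(9T^2)$ term. That step goes the wrong way: subadditivity only gives $\frac{2BL}{3T}+\sqrt{2VL/T}$, and the exact root is larger than the stated right-hand side. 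So the paper's argument actually establishes the bound only with constant $2/3$.

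\textbf{What your route buys.} You work at the moment-generating-function level: Bernstein's condition makes $\sum_t X_t$ sub-gamma on the right with variance factor $TV$ and scale $B/3$. Inverting the sub-gamma tail gives exactly $\sqrt{2TVu}+Bu/3$, so you genuinely obtain the stated constant $1/3$. The price is citing the sharper sub-gamma inversion rather than the textbook tail, and that is what the statement requires.

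\textbf{The direction of $B$.} You also rightly flag that $Z^{(t)}\le B$ controls the upper tail of $\hat{\Delta}^{(T)}$, whereas the certified event is a lower-tail event, which needs $\Delta-Z^{(t)}\le B$. The paper's proof passes over this, and the hypothesis as written does not suffice for a generic i.i.d.\ sequence bounded only above. Take $T=1$, $Z=-K$ with probability $0.01$ and $Z=0$ otherwise, $B=0$, $\delta=0.005$. When $Z=-K$ the right-hand side is about $-0.68K<\Delta=-0.01K$, so the bound fails with probability $0.01>\delta$. Reading $B$ as a bound on $|Z^{(t)}-\Delta|$, justified by the two-sided boundedness of $\Psi$ from your Step 1, is the right repair.

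\textbf{One correction to your closing remark.} Solving the classical quadratic exactly does not recover the stated inequality either, because it leaves the extra $B^2L^2/(9T^2)$ under the square root. Only the sub-gamma route gives the stated form; this is precisely where the paper's proof breaks.
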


\begin{proof}
The sequence $\{Z^{(t)}_{i,j^*,j'}\}_{t=1}^T$ consists of i.i.d.\ real-valued random variables with mean $\Delta = \mathbb{E}[Z^{(t)}_{i,j^*,j'}]$, variance $V = \mathrm{Var}(Z^{(t)}_{i,j^*,j'})$, and $Z^{(t)}_{i,j^*,j'} \le B$ almost surely. By the one-sided Bernstein inequality for bounded i.i.d.\ random variables:
\[
\mathbb{P}\left( \frac{1}{T}\sum_{t=1}^T Z^{(t)} - \Delta \le -\epsilon \right)
\le \exp\left( -\frac{T\epsilon^2}{2V + \frac{2B\epsilon}{3}} \right).
\]
Setting the right-hand side equal to $\delta$ and solving the resulting quadratic in $\epsilon$ exactly (without the simplification $\sqrt{a+b} \le \sqrt{a} + \sqrt{b}$), we obtain:
\[
\Delta \le \hat{\Delta}^{(T)} + \frac{B\log(1/\delta)}{3T} + \sqrt{\frac{2V\log(1/\delta)}{T} + \frac{B^2\log^2(1/\delta)}{9T^2}}.
\]
Since $\sqrt{a + b} \le \sqrt{a} + \sqrt{b}$ for $a, b > 0$, this is bounded by:
\[
\Delta \le \hat{\Delta}^{(T)} + \frac{B\log(1/\delta)}{3T} + \sqrt{\frac{2V\log(1/\delta)}{T}},
\]
which is the stated bound. Since $\hat{\Delta}^{(T)} \to \Delta$ by the law of large numbers, for sufficiently large $T$, the confidence interval will certify the sign of $\Delta$.
\end{proof}
}

\clearpage
\section{Generating Embedding through Common Knowledge}
\label{app:algo}

\textcolor{black}{In this section, we provide detailed algorithmic procedures for extracting a common knowledge graph from a Rashomon set of embeddings and generating a refined embedding. Both algorithms below use the \emph{calibration-test split} described in Section~\ref{sec:common_knowledge}: a separate calibration set of embeddings is used to estimate the population statistics ($\mu_{ij}, \sigma_{ij}, m_{ij}, M_{\text{global}}$), and these fixed estimates are then used to score neighbor pairs in an independent test (Rashomon) set of embeddings. The two approaches differ in how the per-embedding scores are aggregated to produce a final neighbor ranking:}

\textcolor{black}{\textbf{Indicator (Average Rank) Approach} (Algorithm~\ref{alg:refined_embedding_rank}): For each test embedding, neighbors are ranked by their penalized score $\Psi_{ij}^{(t)}$. The final selection uses the average rank $\bar{r}_{ij}$ across all test embeddings. This ordinal aggregation is robust to outlier scores since it discards magnitude information.}

\textcolor{black}{\textbf{Mean Score Approach} (Algorithm~\ref{alg:refined_embedding_score}): The penalized scores $\Psi_{ij}^{(t)}$ are averaged directly across test embeddings, and the $k$ neighbors with the lowest mean score $\bar{\Psi}_{ij}$ are selected. This approach retains the full distance-aware information from $\Psi$ and is directly connected to the Bernstein concentration bound in Theorem~\ref{theo:conv}, where the empirical margin $\hat{\Delta}^{(T)}$ is the difference of two such averages. In our experiments, we use the Mean Score Approach as the default.}

% \begin{algorithm}[H]
% \caption{Refined Embedding via Common Knowledge Graph}
% \label{alg:refined_embedding}
% \begin{algorithmic}[1]
% \REQUIRE Rashomon set of embeddings $\{Y^{(1)}, Y^{(2)}, \dots, Y^{(T)}\}$, number of trustworthy neighbors $k$ (e.g., $k=10$ for \PaCMAP), the original dataset $\mathcal{X} \in\mathbb{R}^{n\times d}$
% \OUTPUT Refined $k$-NN graph $\kNN_{\text{refined}}$, reconstructed embedding $\tilde{Y}$

% \STATE \textbf{Compute pairwise distances:} For each embedding $Y^{(t)}$, compute the pairwise distances $d_{ij}^{(t)}$ between all points.

% \STATE \textbf{Construct common $m$-NN graph:} Construct the $m$-NN graph in the $d$ dimensional space from the original dataset $\mathcal{X}$. Any kind of nearest-neighbor methods are applicable here.

% \FOR{each data point $i$}
%     \STATE Retrieve $m$ neighbors $\mathcal{N}_{m}(i)$ from $m$-NN
%     \FOR{each neighbor $j \in \mathcal{N}_{m}(i)$}
%         \STATE Compute edge score $\Psi_{ij}$ using Equation~\ref{eq:edge_importance}, which aggregates over $t = 1$ to $T$
%     \ENDFOR
%     \STATE Select $k$ neighbors with the lowest $\Psi_{ij}$: $\mathcal{N}_k(i) \gets \texttt{BottomK}(\mathcal{N}_{m}(i), \Psi_{ij}, k)$
% \ENDFOR

% \STATE Construct refined $k$-NN graph $\kNN_{\text{refined}}$ using $\mathcal{N}_k(i)$ for all $i$

% \STATE Reconstruct the embedding $\tilde{Y}$ using the DR method with default parameters and $\kNN_{\text{refined}}$ as the input graph

% \RETURN $\kNN_{\text{refined}}, \tilde{Y}$
% \end{algorithmic}
% \end{algorithm}

\begin{algorithm}[ht]
\caption{Refined Embedding -- \textcolor{black}{Indicator (Average Rank) Approach}}
\label{alg:refined_embedding_rank}
\begin{algorithmic}[1]
\REQUIRE Rashomon set of embeddings $\{Y^{(1)}, Y^{(2)}, \dots, Y^{(T)}\}$, Calibration set of embeddings $\{C^{(1)}, C^{(2)}, \dots, C^{(M)}\}$, number of trustworthy neighbors $k$, original dataset $\mathcal{X} \in\mathbb{R}^{n\times d}$, penalty weight $\lambda$
\OUTPUT Refined $k$-NN graph $\kNN_{\text{refined}}$, consensus embedding $\tilde{Y}$

\STATE \textbf{Construct Candidate Neighbor Set:} Construct the $m$-NN graph ($m > k$) from the original dataset $\mathcal{X}$ to define candidate neighbors $\mathcal{N}_{m}(i)$ for each point $i$.

\STATE \textbf{Estimate Calibration Statistics:}
\FOR{each point $i$ within the dataset $\mathcal{X}$ and each neighbor $j \in \mathcal{N}_{m}(i)$}
    \STATE Collect distances from Calibration set: $d_{ij}^C = \{ \|C^{(r)}_i - C^{(r)}_j\|_2 \}_{r=1}^M$
    \STATE Estimate mean $\mu_{ij} \leftarrow \text{mean}_{C}(d_{ij}^C)$, $m_{ij}\leftarrow\text{max}_{C}(d_{ij}^C)$ and standard deviation $\sigma_{ij} \leftarrow \text{std}_{C}(d_{ij}^C)$
\ENDFOR
\STATE Compute global threshold $M_{\text{global}} \leftarrow \text{mean}_{ij\ \text{pairs}} (\mu_{ij})$

\STATE \textbf{Rank Edges in Rashomon set:}
\FOR{each embedding $t = 1$ to $T$ in Rashomon set $Y$}
    \FOR{each point $i$}
        \FOR{each neighbor $j \in \mathcal{N}_{m}(i)$}
            \STATE Compute raw distance: $d_{ij}^{(t)} = \|Y^{(t)}_i - Y^{(t)}_j\|_2$
            \STATE Compute penalized score: 
            \IF{$\mu_{ij} <  M_{\text{global}}$}
            \STATE $\Psi_{ij}^{(t)} \leftarrow d_{ij}^{(t)} + \lambda \cdot \sigma_{ij}$
            \ELSE
            \STATE $\Psi_{ij}^{(t)} \leftarrow m_{ij} + \lambda \cdot \sigma_{ij}$
            \ENDIF
        \ENDFOR
        \STATE Compute rank $r_{ij}^{(t)}$: Rank of neighbor $j$ among $\mathcal{N}_{m}(i)$ based on score $\Psi_{ij}^{(t)}$ (ascending)
    \ENDFOR
\ENDFOR

\STATE \textbf{Select Trustworthy Neighbors via Average Ranking:}
\FOR{each point $i$}
    \FOR{each neighbor $j \in \mathcal{N}_{m}(i)$}
        \STATE Compute average rank: $\bar{r}_{ij} \leftarrow \frac{1}{T} \sum_{t=1}^{T} r_{ij}^{(t)}$
    \ENDFOR
    \STATE Select $k$ neighbors with the lowest $\bar{r}_{ij}$: $\mathcal{N}_k(i) \gets \texttt{BottomK}(\mathcal{N}_{m}(i), \bar{r}_{ij}, k)$
\ENDFOR

\STATE \textbf{Reconstruction:}
\STATE Construct refined $k$-NN graph $\kNN_{\text{refined}}$ using $\mathcal{N}_k(i)$ for all $i$
\STATE Create consensus embedding $\tilde{Y}$ using the DR method with $\kNN_{\text{refined}}$

\RETURN $\kNN_{\text{refined}}, \tilde{Y}$
\end{algorithmic}
\end{algorithm}

\begin{algorithm}[ht]
\caption{Refined Embedding -- \textcolor{black}{Mean Score Approach}}
\label{alg:refined_embedding_score}
\begin{algorithmic}[1]
\REQUIRE Rashomon set of embeddings $\{Y^{(1)}, Y^{(2)}, \dots, Y^{(T)}\}$, Calibration set of embeddings $\{C^{(1)}, C^{(2)}, \dots, C^{(M)}\}$, number of trustworthy neighbors $k$, original dataset $\mathcal{X} \in\mathbb{R}^{n\times d}$, penalty weight $\lambda$
\OUTPUT Refined $k$-NN graph $\kNN_{\text{refined}}$, consensus embedding $\tilde{Y}$

\STATE \textbf{Construct Candidate Neighbor Set:} Construct the $m$-NN graph ($m > k$) from the original dataset $\mathcal{X}$ to define candidate neighbors $\mathcal{N}_{m}(i)$ for each point $i$.

\STATE \textbf{Estimate Calibration Statistics:}
\FOR{each point $i$ and each neighbor $j \in \mathcal{N}_{m}(i)$}
    \STATE Collect distances from Calibration set: $d_{ij}^C = \{ \|C^{(r)}_i - C^{(r)}_j\|_2 \}_{r=1}^M$
    \STATE Estimate mean $\mu_{ij} \leftarrow \text{mean}_{C}(d_{ij}^C)$, $m_{ij}\leftarrow\text{max}_{C}(d_{ij}^C)$ and standard deviation $\sigma_{ij} \leftarrow \text{std}_{C}(d_{ij}^C)$
\ENDFOR
\STATE Compute global threshold $M_{\text{global}} \leftarrow \text{mean}_{ij\ \text{pairs}} (\mu_{ij})$

\STATE \textbf{Compute Penalized Scores in Rashomon set:}
\FOR{each embedding $t = 1$ to $T$ in Rashomon set $Y$}
    \FOR{each point $i$}
        \FOR{each neighbor $j \in \mathcal{N}_{m}(i)$}
            \STATE Compute raw distance: $d_{ij}^{(t)} = \|Y^{(t)}_i - Y^{(t)}_j\|_2$
            \STATE Compute penalized score: 
            \IF{$\mu_{ij} <  M_{\text{global}}$}
            \STATE $\Psi_{ij}^{(t)} \leftarrow d_{ij}^{(t)} + \lambda \cdot \sigma_{ij}$
            \ELSE
            \STATE $\Psi_{ij}^{(t)} \leftarrow m_{ij} + \lambda \cdot \sigma_{ij}$
            \ENDIF
        \ENDFOR
    \ENDFOR
\ENDFOR

\STATE \textbf{Select Trustworthy Neighbors via Mean Score:}
\FOR{each point $i$}
    \FOR{each neighbor $j \in \mathcal{N}_{m}(i)$}
        \STATE Compute average score: $\bar{\Psi}_{ij} \leftarrow \frac{1}{T} \sum_{t=1}^{T} \Psi_{ij}^{(t)}$
    \ENDFOR
    \STATE Select $k$ neighbors with the lowest $\bar{\Psi}_{ij}$: $\mathcal{N}_k(i) \gets \texttt{BottomK}(\mathcal{N}_{m}(i), \bar{\Psi}_{ij}, k)$
\ENDFOR

\STATE \textbf{Reconstruction:}
\STATE Construct refined $k$-NN graph $\kNN_{\text{refined}}$ using $\mathcal{N}_k(i)$ for all $i$
\STATE Reconstruct consensus embedding $\tilde{Y}$ using the DR method with $\kNN_{\text{refined}}$

\RETURN $\kNN_{\text{refined}}, \tilde{Y}$
\end{algorithmic}
\end{algorithm}

\clearpage
\section{Evaluation Metrics}
\label{app:metrics}

To assess the quality of learned embeddings under both supervised and unsupervised settings, we evaluate them using the following metrics.

\subsection{Soft Jaccard distance}

This metric measures the distance between two weighted matrices originating from two embeddings, particularly in terms of how consistently they preserve pairwise relationships across a shared large nearest neighbor (NN) graph.

\begin{enumerate}
    \item Let $\kNN$ be a fixed large NN graph derived from high-dimensional data using an NN algorithm (e.g., ANNOY\citep{Annoy}). In this experiment, we have set up a 50-NN graph.
    \item For each data point pair $(i,j)$, define a similarity weight $w_{ij}^{y^1}$ in the baseline embedding and $w_{ij}^{y^2}$ in the compared embedding $y^1$ and $y^2$, where $W_{ij} = \frac{(\|y_i-y_j\|_2^2 + \delta)}{(\|y_i-y_j\|_2^2+\delta)+1}$, and $y$ is the low-dimensional embedding.
    \item Compute the soft Jaccard similarity:
    $$
    d(W^{y^1}, W^{y^2}) := 1-
    \frac{\sum_{i,j} \frac{\min \left( W^{y^1}_{ij}, W^{y^2}_{ij} \right)}{W^{y^1}_{ij} + W^{y^2}_{ij}}}
         {\sum_{i,j} \frac{\max \left( W^{y^1}_{ij}, W^{y^2}_{ij} \right)}{W^{y^1}_{ij} + W^{y^2}_{ij}}}
    $$
    \item Lower values indicate better consistency between neighborhood structures of the two embeddings.
\end{enumerate}

\subsection{PCA-aligned Triplet Score (\texttt{TripletPCA})}

This metric evaluates whether the embedding preserves the global inter-class relationships revealed by a linear projection (PCA). Specifically, we compare the relative distances between class centroids in PCA space versus those in the embedding.

\begin{enumerate}
    \item Project the original data $\X$ into PCA space to obtain $\mathbf{y}_{\textrm{PCA}}$.
    
    \item For each class $c$, compute its centroid in PCA space and in the evaluated embedding $y$:
    \[
    \mu_c^{\text{PCA}} = \frac{1}{|C_c|} \sum_{i \in C_c} y_{\text{PCA},i}, \quad
    \mu_c^{y} = \frac{1}{|C_c|} \sum_{i \in C_c} y_i
    \]
    
    \item For all unordered pairs of classes $(i, j)$ with $i < j$, compute the Euclidean distance between their centroids in PCA and in the embedding:
    \[
    D^{\text{PCA}}_{ij} = \| \mu_i^{\text{PCA}} - \mu_j^{\text{PCA}} \|, \quad D^{y}_{ij} = \| \mu_i^{y} - \mu_j^{y} \|
    \]
    
    \item For all unordered centroid triplets $(i,j,k)$ with $i < j < k$, compare the ordering of distances in PCA and in the embedding:
    \[
    \text{A triplet is \emph{preserved} if } \operatorname{sign}(D^{\text{PCA}}_{ij} - D^{\text{PCA}}_{ik}) = \operatorname{sign}(D^{y}_{ij} - D^{y}_{ik})
    \]
    
    \item The final PCA-guided triplet agreement score is the fraction of triplets with consistent ordering:
    \[
    \text{Score} = \frac{\text{\# of preserved triplets}}{\text{Total number of triplets}}
    \]
\end{enumerate}

This metric captures whether the embedding respects the global inter-class structure suggested by a linear reference model (PCA), without relying on individual point-level distances.

\subsection{Random Triplet PCA Score (\texttt{RandomTripletPCA})}

This metric evaluates whether global relationships between randomly sampled class-level centroids are preserved from PCA space to the embedding.

\begin{enumerate}
    \item Project the original data $\X$ into PCA space to obtain $\mathbf{y}_{\textrm{PCA}}$.
    
    \item For each class $c$, compute its centroid in PCA space and in the evaluated embedding $y$:
    \[
    \mu_c^{\text{PCA}} = \frac{1}{|C_c|} \sum_{i \in C_c} y_{\text{PCA},i}, \quad
    \mu_c^{y} = \frac{1}{|C_c|} \sum_{i \in C_c} y_i
    \]
    
    \item Randomly sample multiple triplets of distinct class indices $(i, j, k)$.
    
    \item For each triplet, compute the Euclidean distances between class centroids in PCA space and in the embedding:
    \[
    D^{\text{PCA}}_{ij} = \| \mu_i^{\text{PCA}} - \mu_j^{\text{PCA}} \|_2, \quad D^{y}_{ij} = \| \mu_i^{y} - \mu_j^{y} \|_2
    \]
    
    \item For each triplet, determine the relative ordering of distances:
    \[
    \text{Label}(i,j,k) = \mathbb{I}(D^{\text{PCA}}_{ij} < D^{\text{PCA}}_{ik}), \quad
    \text{Prediction}(i,j,k) = \mathbb{I}(D^{y}_{ij} < D^{y}_{ik})
    \]
    
    \item Compute the final agreement score as the proportion of triplets where the ordering is preserved:
    \[
    \text{Score} = \frac{1}{T} \sum_{t=1}^{T} \mathbb{I} \left( \text{Label}(t) = \text{Prediction}(t) \right)
    \]
\end{enumerate}

A higher score indicates that the embedding preserves the inter-class distance relationships suggested by PCA. Unlike the full triplet PCA score, this metric uses a randomized subset of triplets to provide a scalable, global evaluation. When preserving the local structure, this metric is slightly worse than the triplet PCA score since the higher score would ruin the local structure to be similar to PCA embedding. Therefore, when evaluating the performance of PCA alignment, we consider both.

\subsection{Silhouette Score}

The silhouette score was originally used for evaluating cluster quality in unsupervised settings. In this work, we apply it using true class labels as proxy clusters to assess class cohesion and separation.

\begin{enumerate}
    \item For each data point $i$ in the embedding $y$:
    \begin{itemize}
        \item Calculate average distance $a_i$ between $i$ and all other data points within the same class $C_i$,
        \[
        a_i = \frac{1}{|C_i| - 1} \sum_{j \in C_i, j \neq i} \|y_i-y_j\|_2
        \]
        \item Calculate the minimum average distance $b_i$ of $i$ to all points in other classes:
        \[
        b_i = \min_{C_k \neq C_i} \frac{1}{|C_k|} \sum_{j \in C_k} \|y_i-y_j\|_2
        \]
    \end{itemize}
    \item Compute silhouette score for point $i$:
    \[
    s_i = \frac{b_i - a_i}{\max(a_i, b_i)}
    \]
    \item The overall silhouette score is the average over all $N$ data points:
    \[
    S = \frac{1}{N} \sum_{i=1}^{N} s_i
    \]
    \item The higher the values, the better quality the embedding is.
\end{enumerate}

\subsection{SVM Classification Accuracy}

This metric evaluates how well the embedding supports non-linear classification by training a Support Vector Machine (SVM) with an RBF kernel and measuring its prediction accuracy. To improve efficiency, we apply a kernel approximation method.

\begin{enumerate}
    \item Apply the Nystr\"{o}m method, which approximates the
kernel matrix by a low rank matrix, using \texttt{sklearn.kernel\_approximation.Nystroem} to transform the embedding $\mathbf{Y} \in \mathbb{R}^{n \times d}$ into a higher-dimensional feature space $\Phi(\mathbf{Y}) \in \mathbb{R}^{n \times D}$ such that:
    \[
    K_{\text{RBF}}(\mathbf{y}_i, \mathbf{y}_j) \approx \langle \Phi(\mathbf{y}_i), \Phi(\mathbf{y}_j) \rangle
    \]
    
    \item Train a linear SVM classifier on the transformed features $\Phi(\mathbf{Y})$ using a one-vs-rest strategy for multi-class problems.

    \item Compute the classification accuracy over all $n$ data points:
    \[
    \text{Accuracy} = \frac{1}{n} \sum_{i=1}^{n} \mathbb{I}(\hat{y}_i = y_i)
    \]
    where $\hat{y}_i$ is the predicted label and $\mathbb{I}(\cdot)$ is the indicator function.
    
    \item Higher accuracy indicates that the embedding supports better class separation under non-linear decision boundaries.
\end{enumerate}

This metric is done under a 5-fold setup in the experiments (each time using 4 folds as the training data for the SVM model and using the remaining fold for the evaluation of accuracy), which captures the global separability of classes in the embedding space using a kernelized classifier.

\clearpage
\section{Data Description}
\label{app:datasets}

In our experiments, we evaluated a diverse collection of datasets spanning text, image, and biological domains to examine the effects of different alignment regularizations. Table~\ref{tab:data_description} summarizes each dataset along with the number of samples, features, task types, and label descriptions. Datasets such as MNIST \citep{lecun2010mnist}, FMNIST \citep{fmnist}, USPS \citep{USPS}, and COIL20 \citep{coil20} represent image data, while Human Cortex \citep{human_cortex_data}, Kang et al. \citep{kang2018multiplexed}, CBMC \citep{neurips2021_data}, and Stuart et al. \citep{stuart2019comprehensive} focus on single-cell transcriptomic measurements. Tabular datasets such as FICO \citep{fico} were also included to assess generalizability. The alignment tasks applied to each dataset fall into two categories: PCA alignment, which enforces global structure consistency with principal component axes, and concept alignment, which aligns the embedding with user-defined or interpretable axes. Most datasets were evaluated under both PCA and concept alignment settings, while others like Airplane and Mammoth only used PCA alignment due to their clear spatial or structural geometry. This comprehensive selection allows us to test the robustness and utility of our methods across a variety of domains and structures.
\begin{table}[ht]

\centering
\caption{Data Descriptions and Task Types each dataset corresponds to}
\label{tab:data_description}
\begin{tabular}{p{5cm}llll}
\hline
\textbf{Dataset} & \textbf{\# Samples} & \textbf{\# features} & \textbf{Tasks Types} & \textbf{Labels}     \\ \hline
Airplane \citep{airplane}        & 24,141                     & 3                           & PCA                  & Airplane Structure  \\
COIL20  \citep{coil20}         & 1,440                      & 16,384                      & Concept, PCA         & 20 Objects          \\
FICO \citep{fico}           & 9,861                      & 23                          & Concept              & External Risk Score \\
FMNIST  \citep{fmnist}         & 70,000                     & 784                         & Concept, PCA         & 10 Clothes Type     \\
Human Cortex \citep{human_cortex_data}    & 43,349                     & 100                         & Concept, PCA         & 9 Cell Types        \\
Kang et al. \citep{kang2018multiplexed}            & 13,999                     & 100                         & Concept, PCA         & 13 Cell Types       \\
Mammoth   \citep{UmapMammoth,SmithsonianMammoth}       & 10,000                     & 3                           & PCA                  & Mammoth Structure   \\
MNIST  \citep{lecun2010mnist}          & 70,000                     & 784                         & Concept, PCA         & Digits 0-9          \\
CBMC  \citep{neurips2021_data}           & 67686                      & 100                         & Concept, PCA         & 9 Cell Types        \\
Stuart et al. \citep{stuart2019comprehensive}   & 30,672                     & 100                         & Concept, PCA         & 27 Cell Types       \\
USPS \citep{USPS}            & 9298                       & 256                         & Concept, PCA         & Digits 0-9          \\ \hline
\end{tabular}
\end{table}

\clearpage
\section{Details of Experimental Design}
\label{app:detailed_experiment}

To construct the Rashomon set for each DR algorithm, we generate a comprehensive pool of 235 embeddings per alignment task. This is achieved by varying the alignment regularization strength—denoted as the label weight parameters $\lambda_{\textrm{PCA}}$ and $\lambda_{\textrm{Axis}}$ (defined in Section \ref{sec:alignment})—across 47 candidate values and sampling five different random seeds for each configuration. The list of $\lambda$ candidates spans a wide range of values from near-zero to large magnitudes, specifically
$\{0.0, 0.001, 0.002, \dots, 0.009, 0.01, 0.02, \dots, 0.1, 0.2, \dots, 1.0, 2.0, \dots, 100.0\}.
$ For concept-aware regularizer, we are using 10\% of the data are labeled. For the influence of missingness ratio, label weights and their corresponding loss function, we have shown an example of MNIST in Appendix \ref{app:missing_ratio} using \PaCMAP. 

For all DR methods other than PaCMAP param, we apply a scaling factor of 10,000 to the label weight values during implementation, ensuring consistent influence across algorithms with different objective function scales. In concept-informed DR, the dataset’s class labels serve as the alignment concept to guide embedding formation.

To determine which embeddings are included in the Rashomon set, we evaluate the $\L_\DR$ across all runs and identify a threshold beyond which embeddings show a significant degradation in performance. This threshold is defined as the point where the loss curve demonstrates a statistically significant increase compared to the minimum observed loss—measured by a consistent gap between the stable region and the rising region of the loss curve. Embeddings whose loss values fall within the low-loss plateau are retained as members of the Rashomon set, while those in the degraded region are excluded.

From the selected Rashomon set of embeddings, we first extract a common 50-nearest neighbor (50-NN) graph following the procedure described in Section~\ref{sec:common_knowledge}. For each data point, we then compute edge scores $\Psi$ using Equation~\ref{eq:edge_importance} to evaluate the reliability of its neighbors. Among the 50 neighbors, we retain the number of edges $k$ (the $k$ is defined as the default setting for each of the methods, e.g. for \PaCMAP, $k=10$) with the lowest scores—those deemed most trustworthy—and construct a refined $k$-NN graph from these pairs. We then reconstruct a new embedding using this filtered neighbor graph. All dimensionality reduction (DR) methods are run with their default settings, as our goal is not to compare across DR methods but rather to assess the improvement in embedding quality before and after incorporating common knowledge graph. A detailed version of the algorithm is shown in Algorithm~\ref{alg:refined_embedding_rank} and \ref{alg:refined_embedding_score}.Although different datasets and methods may yield Rashomon sets of varying sizes, current experiments have shown that as few as five embeddings are sufficient to demonstrate improvements through common knowledge extraction.

All experiments were conducted on a machine with Intel(R) Xeon(R) Gold 5317 CPU @ 3.00GHz CPU and an NVIDIA A5000 GPU with 128GB Memory.

\paragraph{Common Knowledge Embedding Runtime Breakdown}The overall runtime of the common knowledge extraction pipeline can be broken down into three main stages: (i) \textbf{Nearest Neighbor Graph Construction}: This step is computed once and reused across all runs. (ii) \textbf{Embedding Construction}: This step is performed separately for each DR configuration, but it is highly parallelizable. Multiple embeddings can be processed concurrently per GPU, so the runtime does not scale linearly with the number of embeddings. (iii) \textbf{Common Knowledge Extraction}: This step aggregates stable neighbors across embeddings and is relatively lightweight compared to the other stages. In practice, constructing a Rashomon set of several embeddings and performing common knowledge extraction introduces only moderate overhead compared to a single DR run, while yielding significant robustness and interpretability benefits.

\section{Additional PCA-aligned DR results}
\label{app:pca_results}

Here we provide PCA-aligned and original DR embeddings across five methods—InfoNCE,
NCVis, Neg-tSNE, \UMAP, and \PaCMAP—on a set of datasets discussed in Appendix \ref{app:datasets}. Each subplot shows a 2D embedding colored by class labels, with the left column representing
original embeddings and the middle column showing the PCA embedding and the right column showing aligned counterparts. The alignment enforces
PCA-aware positioning while preserving local structure. The bar plots (bottom) quantify
performance using multiple metrics: soft Jaccard distance, triplet satisfaction, random
triplet discrimination, and $\L_\DR$. Aligned DR embeddings consistently maintain
structure quality across all metrics, indicating enhanced trustworthiness and interpretability.

\begin{figure}
    \centering
    \includegraphics[width=0.8\linewidth]{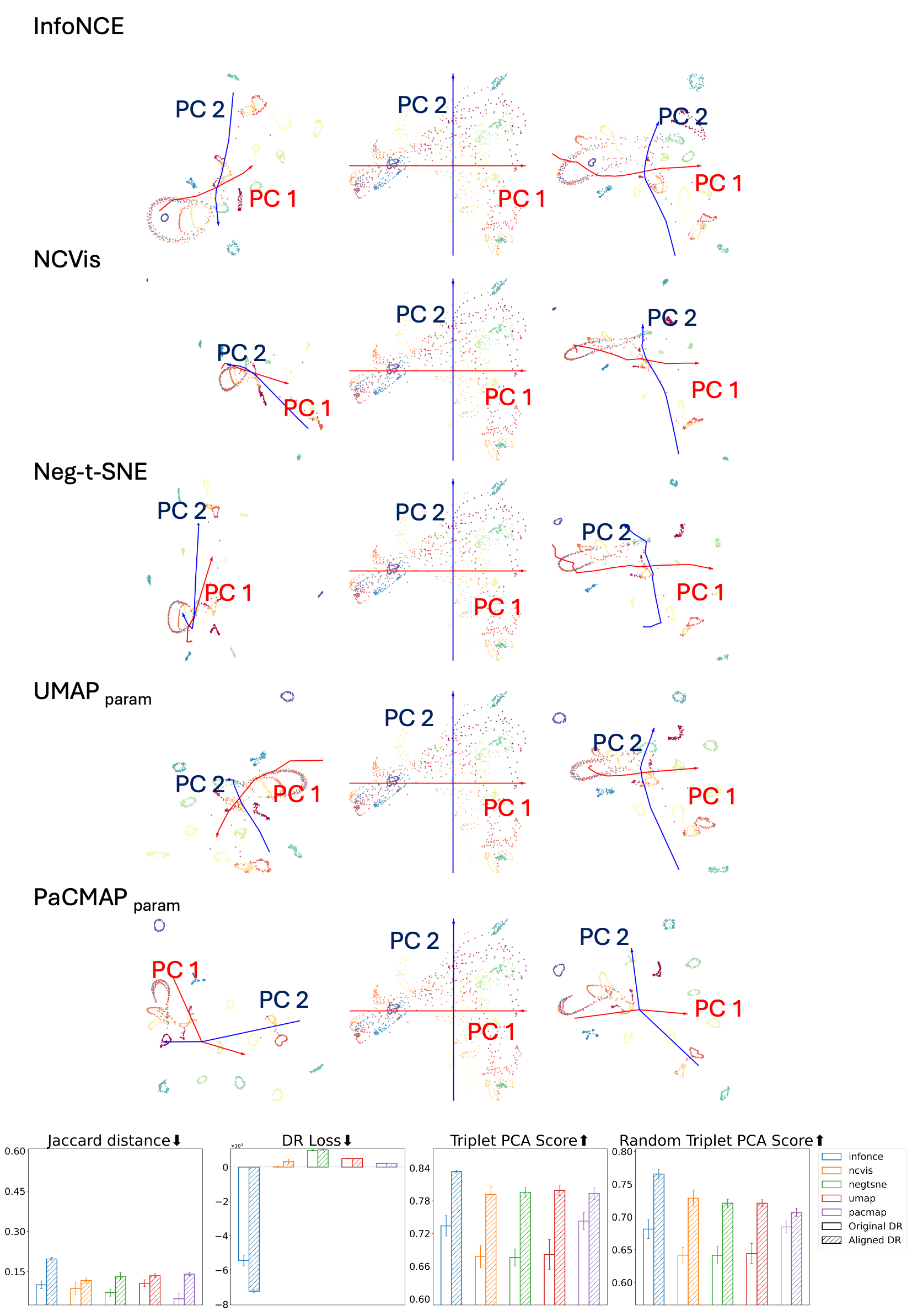}
    \caption{Comparison of original COIL20 embedding (left) PCA embedding (middle) and PCA informed embeddings (right) across different methods. We see alignment to principal components across all methods while preserving structure. We show that soft Jaccard distance, triplet PCA score, random triplet PCA score, and $\L_\DR$ (bottom) remain mostly unchanged and that aligned embeddings consistently maintain
structure. }
    \label{fig:appendix_e_coil20}
\end{figure}

\begin{figure}
    \centering
    \includegraphics[width=0.8\linewidth]{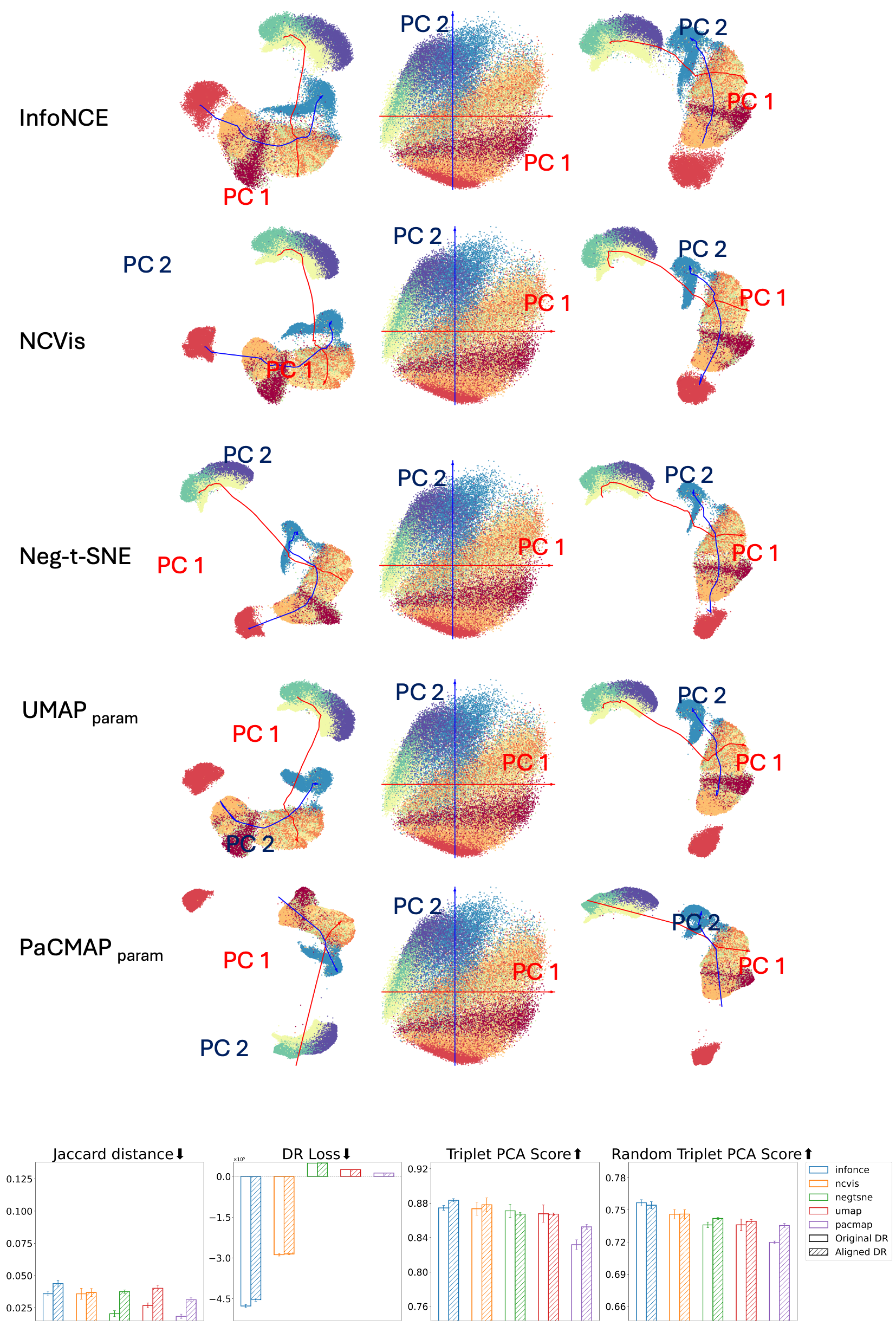}
    \caption{Comparison of original FMNIST embedding (left) PCA embedding (middle) and PCA informed embeddings (right) across different methods. We see alignment to principal components across all methods while preserving structure. We show that soft Jaccard distance and $\L_\DR$ (bottom) remain mostly unchanged and that aligned embeddings consistently maintain structure. Random Triplet PCA score and Triplet PCA score have shown an improvement after the alignment. }
    \label{fig:appendix_e_FMNIST}
\end{figure}

\begin{figure}
    \centering
    \includegraphics[width=0.8\linewidth]{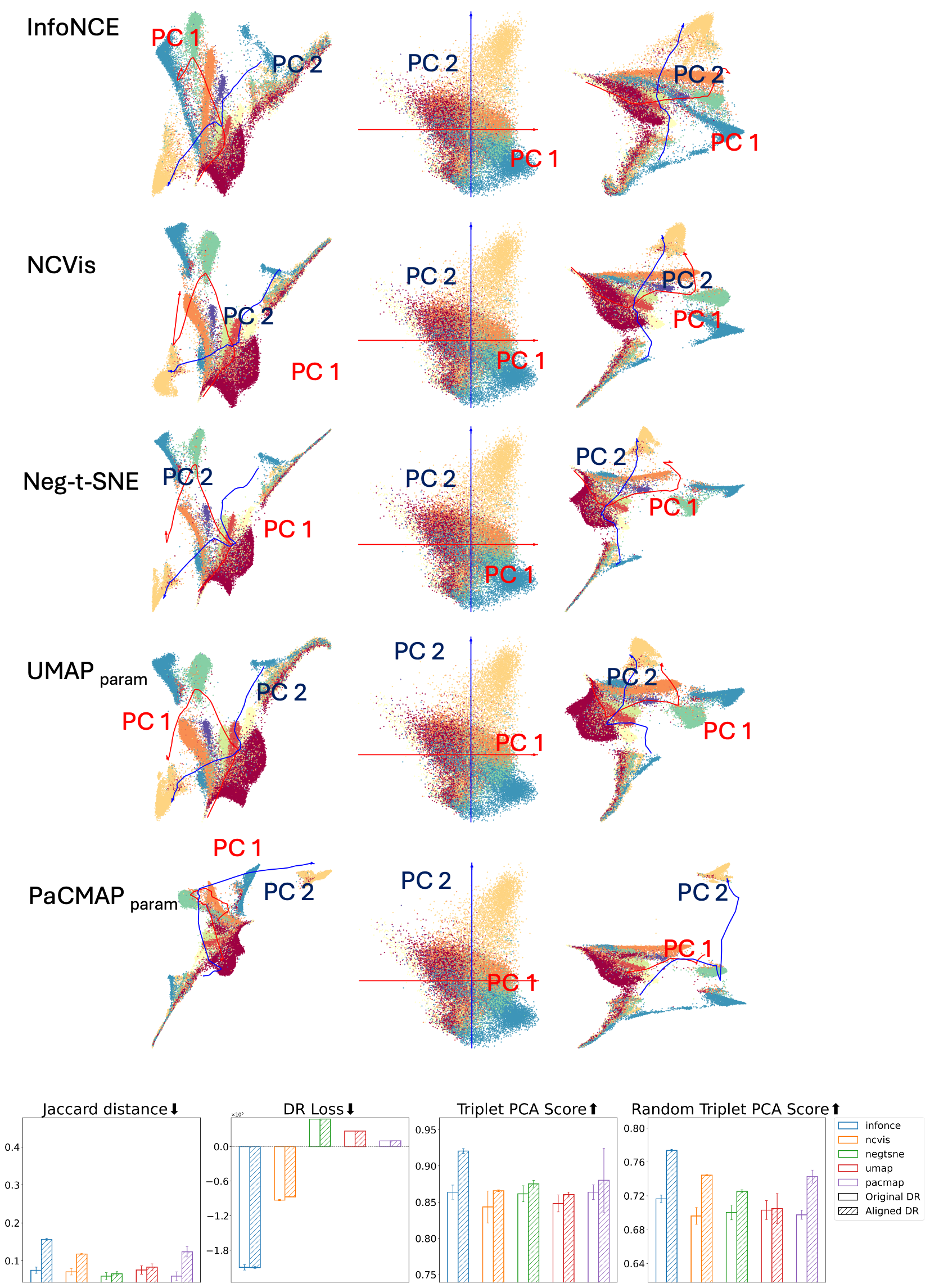}
    \caption{Comparison of original Human Cortex embedding (left) PCA embedding (middle) and PCA informed embeddings (right) across different methods. We see alignment to principal components across all methods while preserving structure. We show that soft Jaccard distance and $\L_\DR$ (bottom) remain mostly unchanged and that aligned embeddings consistently maintain structure. Random Triplet PCA score and Triplet PCA score have shown an improvement after the alignment. }
    \label{fig:appendix_e_human_cortex}
\end{figure}

\begin{figure}
    \centering
    \includegraphics[width=0.8\linewidth]{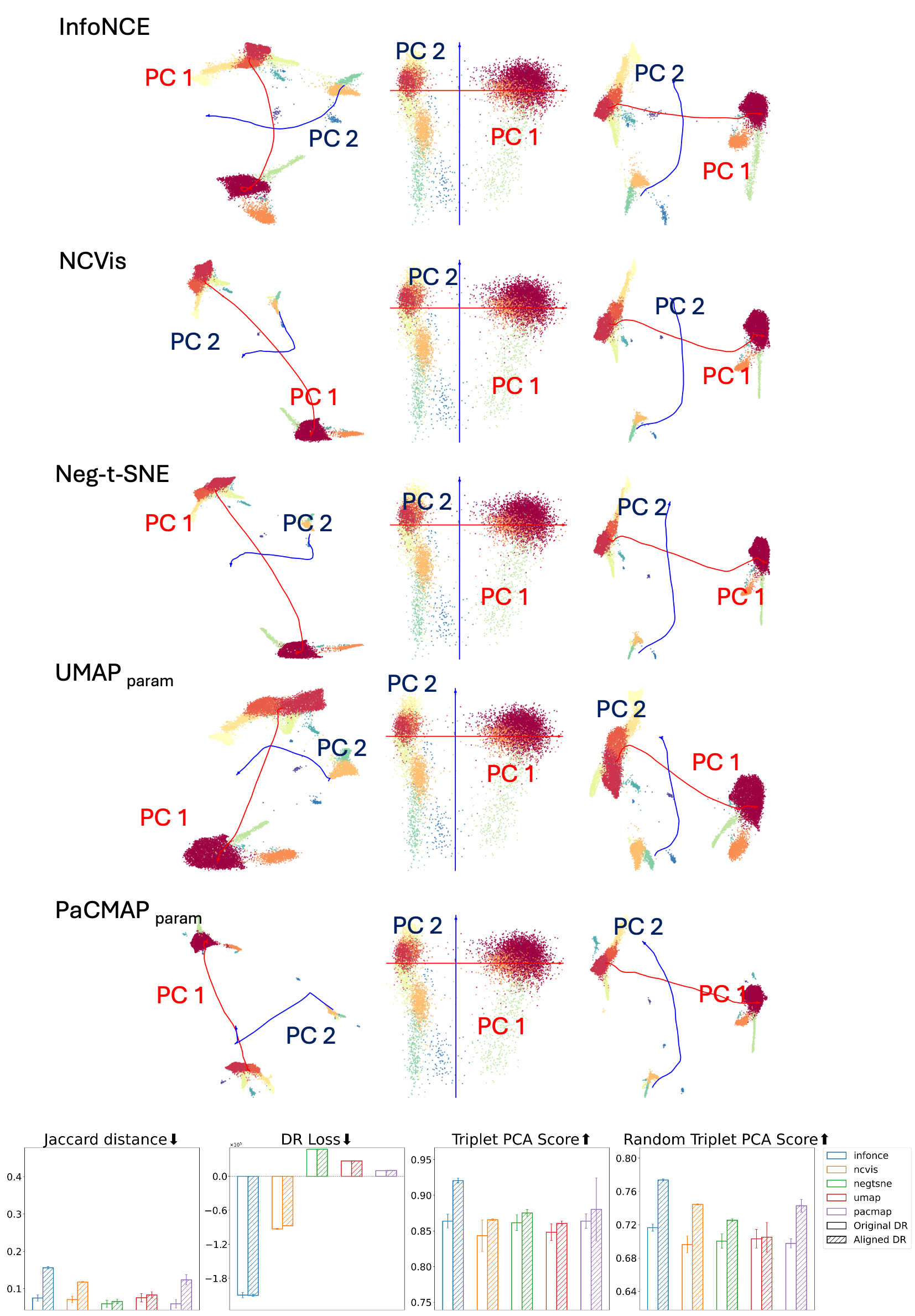}
    \caption{Comparison of original Kang et al. embedding (left) PCA embedding (middle) and PCA informed embeddings (right) across different methods. We see alignment to principal components across all methods while preserving structure. We show that soft Jaccard distance and $\L_\DR$ (bottom) remain mostly unchanged and that aligned embeddings consistently maintain structure. Random Triplet PCA score and Triplet PCA score have shown an improvement after the alignment.}
    \label{fig:appendix_e_kang}
\end{figure}

\begin{figure}
    \centering
    \includegraphics[width=0.8\linewidth]{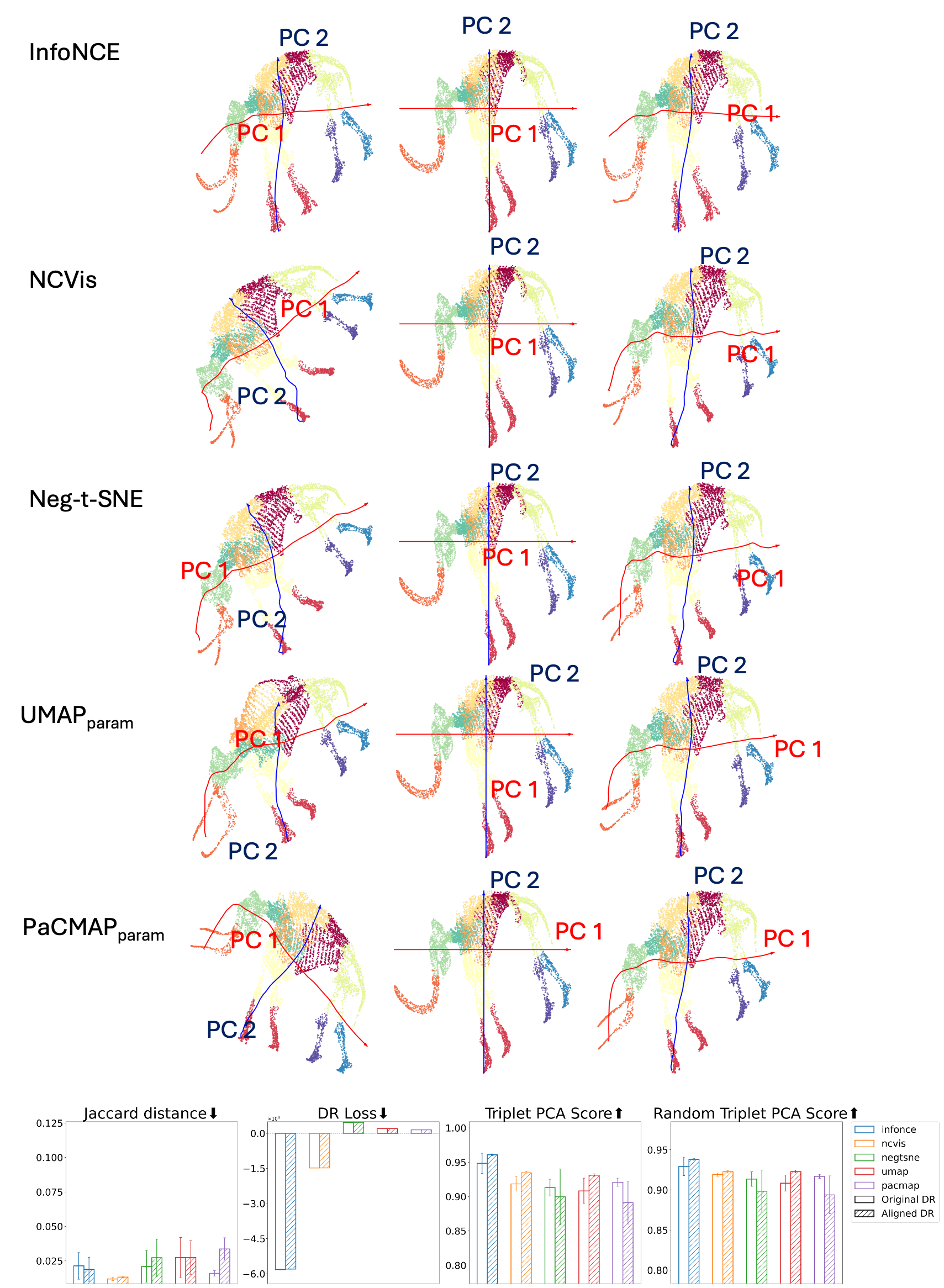}
    \caption{Comparison of original Mammoth embedding (left) PCA embedding (middle) and PCA informed embeddings (right) across different methods. We see alignment to principal components across all methods while preserving structure. We show that soft Jaccard distance and $\L_\DR$ (bottom) remain mostly unchanged and that aligned embeddings consistently maintain structure. Random Triplet PCA score and Triplet PCA score have shown an improvement after the alignment. }
    \label{fig:appendix_e_Mammoth}
\end{figure}

\begin{figure}
    \centering
    \includegraphics[width=0.8\linewidth]{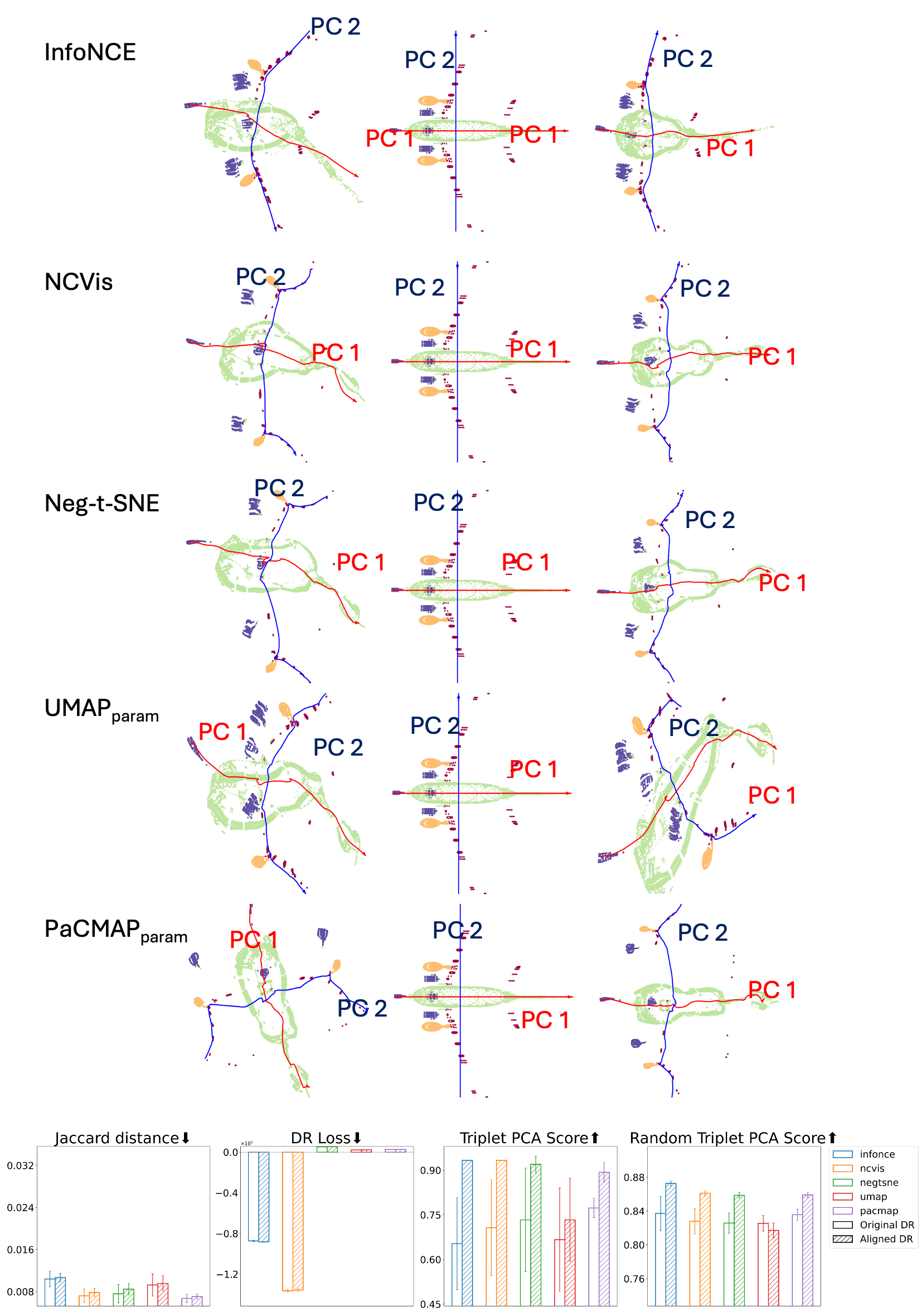}
    \caption{Comparison of original Airplane embedding (left) PCA embedding (middle) and PCA informed embeddings (right) across different methods. We see alignment to principal components across all methods while preserving structure. We show that soft Jaccard distance and $\L_\DR$ (bottom) remain mostly unchanged and that aligned embeddings consistently maintain structure. Random Triplet PCA score and Triplet PCA score have shown an improvement after the alignment.}
    \label{fig:appendix_e_airplane}
\end{figure}

\begin{figure}
    \centering
    \includegraphics[width=0.8\linewidth]{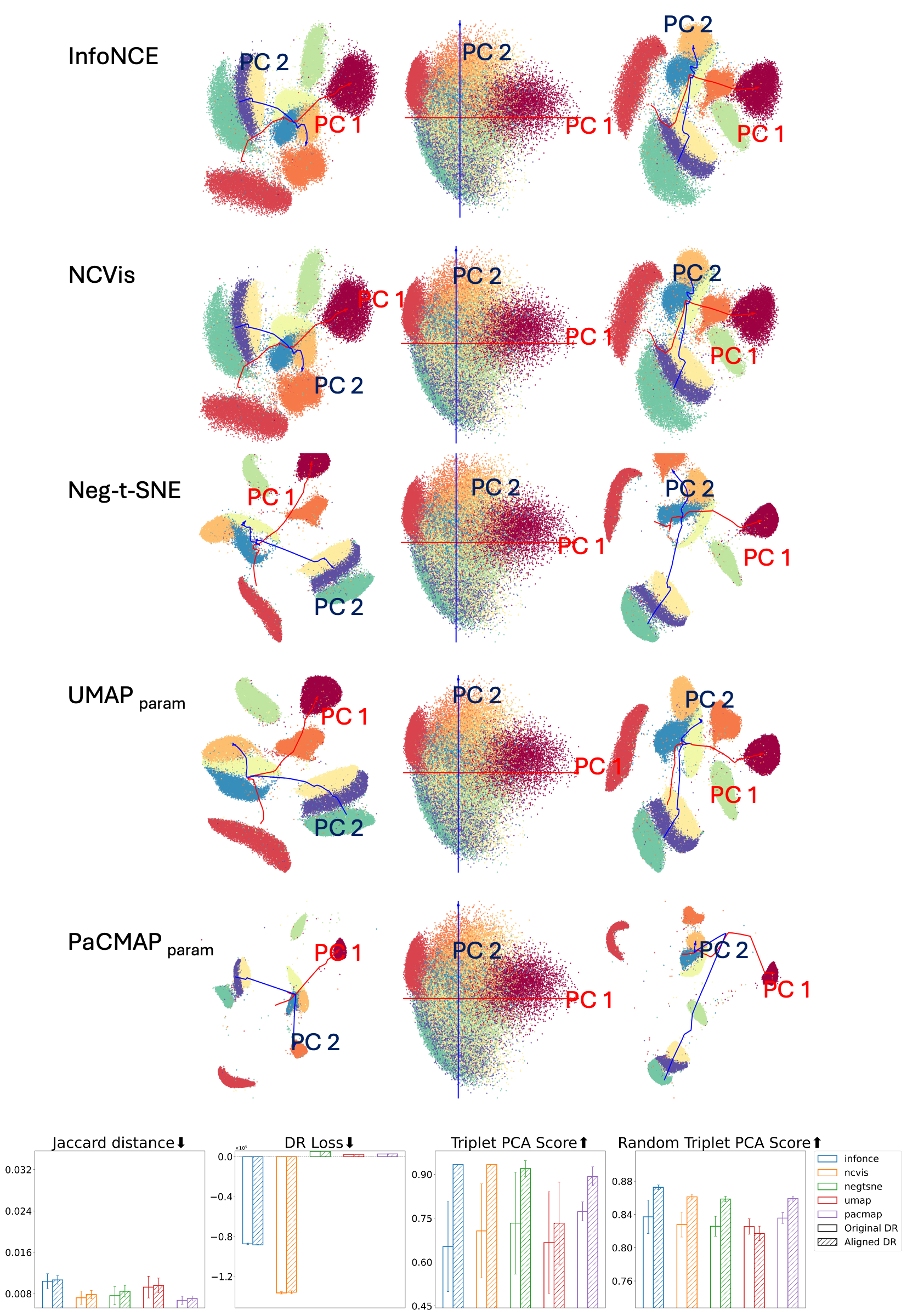}
    \caption{Comparison of original MNIST embedding (left) PCA embedding (middle) and PCA informed embeddings (right) across different methods. We see alignment to principal components across all methods while preserving structure. We show that soft Jaccard distance, and $\L_\DR$ (bottom) remain mostly unchanged and that aligned embeddings consistently maintain structure. }
    \label{fig:appendix_e_MNIST}
\end{figure}

\begin{figure}
    \centering
    \includegraphics[width=0.8\linewidth]{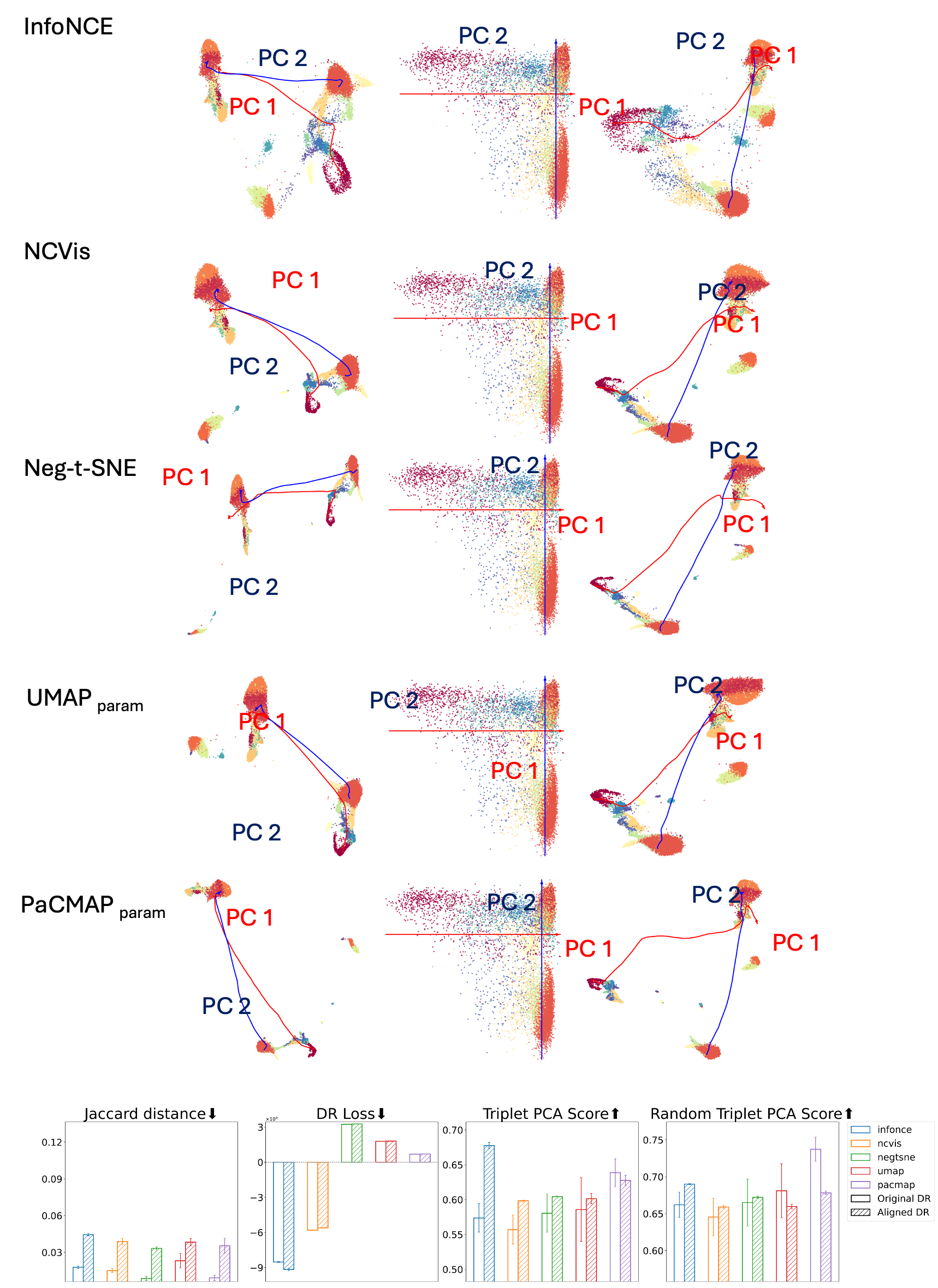}
    \caption{Comparison of original Stuart et al$.$ embedding (left) PCA embedding (middle) and PCA informed embeddings (right) across different methods. We see alignment to principal components across all methods while preserving structure. We show that soft Jaccard distance and $\L_\DR$ (bottom) remain mostly unchanged and that aligned embeddings consistently maintain structure. Random Triplet PCA score and Triplet PCA score have shown an improvement after the alignment.}
    \label{fig:appendix_e_seurat}
\end{figure}

\begin{figure}
    \centering
    \includegraphics[width=0.8\linewidth]{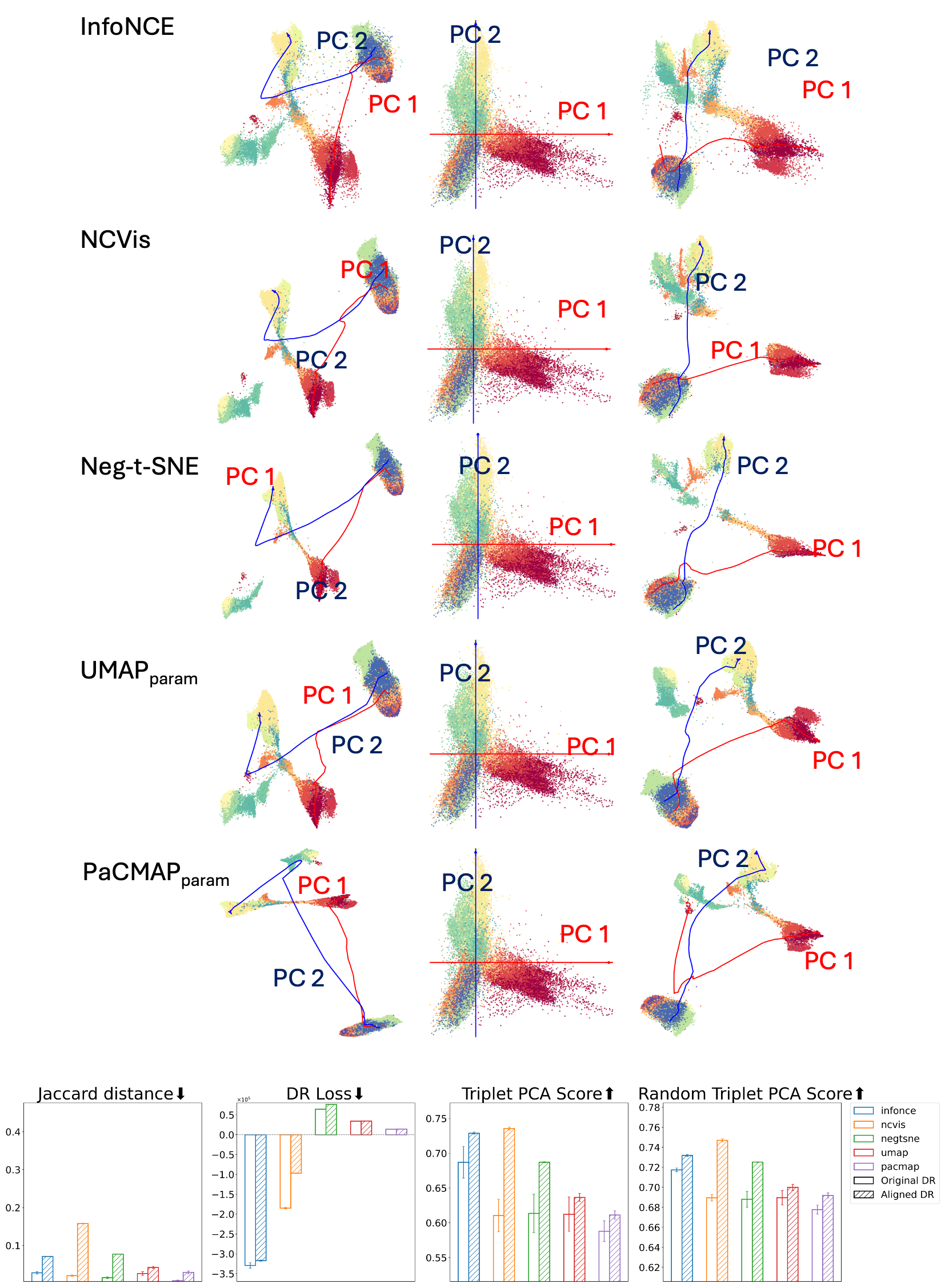}
    \caption{Comparison of original CBMC embedding (left) PCA embedding (middle) and PCA informed embeddings (right) across different methods. We see alignment to principal components across all methods while preserving structure. We show that soft Jaccard distance, triplet PCA score, random triplet PCA score, and $\L_\DR$ (bottom) remain mostly unchanged and that aligned embeddings consistently maintain
structure. }
    \label{fig:appendix_e_neurips}
\end{figure}

\begin{figure}
    \centering
    \includegraphics[width=0.8\linewidth]{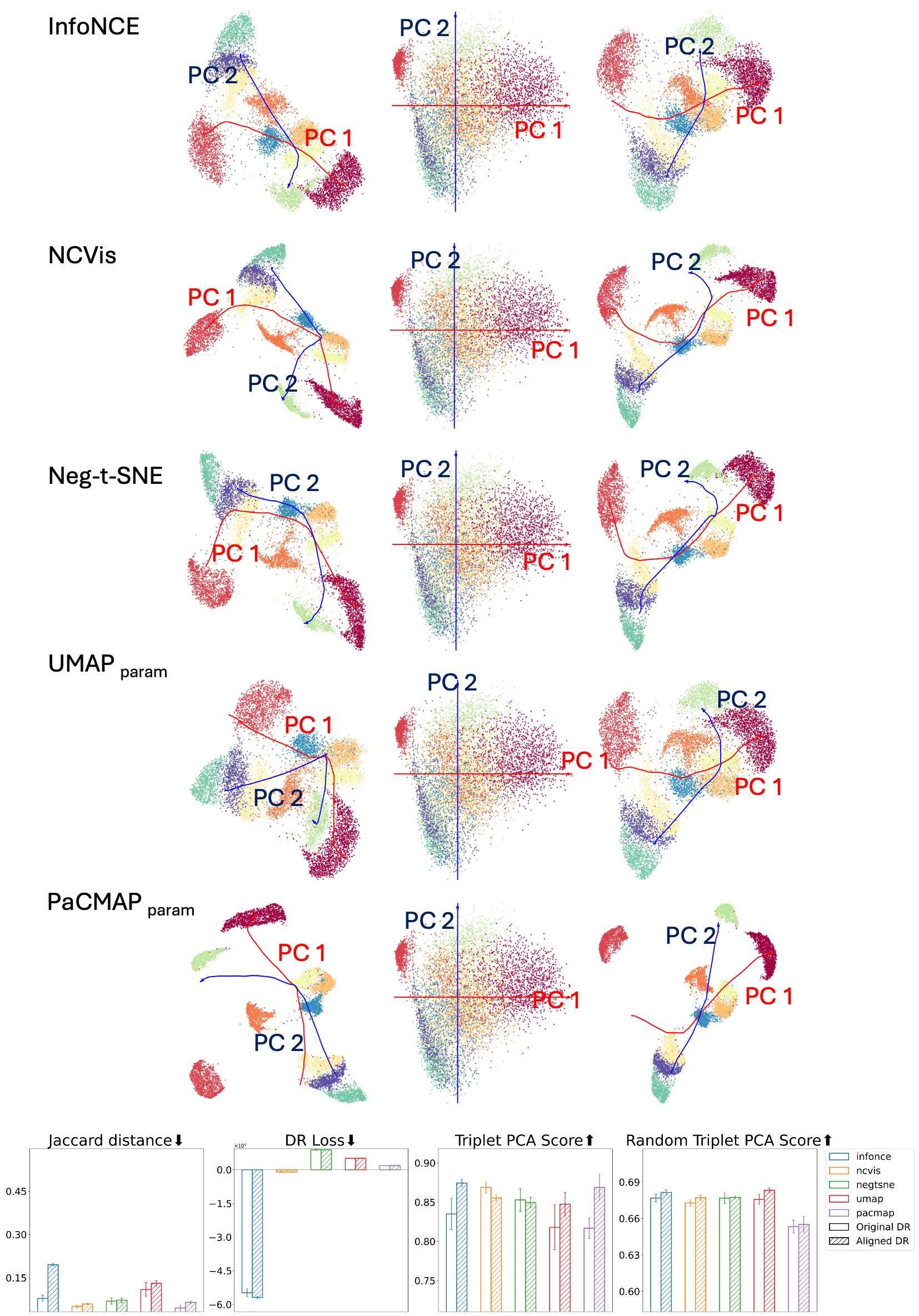}
    \caption{Comparison of original USPS embedding (left) PCA embedding (middle) and PCA informed embeddings (right) across different methods. We see alignment to principal components across all methods while preserving structure and $\L_\DR$$\L_\DR$ (bottom) remaining mostly unchanged and that aligned embeddings consistently maintain structure. Random Triplet PCA score and Triplet PCA score have shown an improvement after the alignment.}
    \label{fig:appendix_e_USPS}
\end{figure}

\section{Additional Concept-aware DR Results}
\label{app:concept_results}

Here we are showing a list of aligned and original DR embeddings across five methods—InfoNCE, NCVis, Neg-tSNE, UMAP, and PaCMAP -- on a set of datasets that have been mentioned in Appendix \ref{app:datasets}. Each subplot shows a 2D embedding colored by class label, with the left column representing original embeddings and the right column showing aligned counterparts. The alignment enforces concept-aware positioning while preserving local structure. The bar plots (bottom right) quantify performance using multiple metrics: 5-NN accuracy, soft Jaccard distance, triplet satisfaction, random triplet discrimination, silhouette score, and $\L_\DR$. Aligned DR embeddings consistently maintain structure quality across all metrics, indicating enhanced trustworthiness and interpretability.

\begin{figure}[ht]
    \centering
    \includegraphics[width=0.8\linewidth]{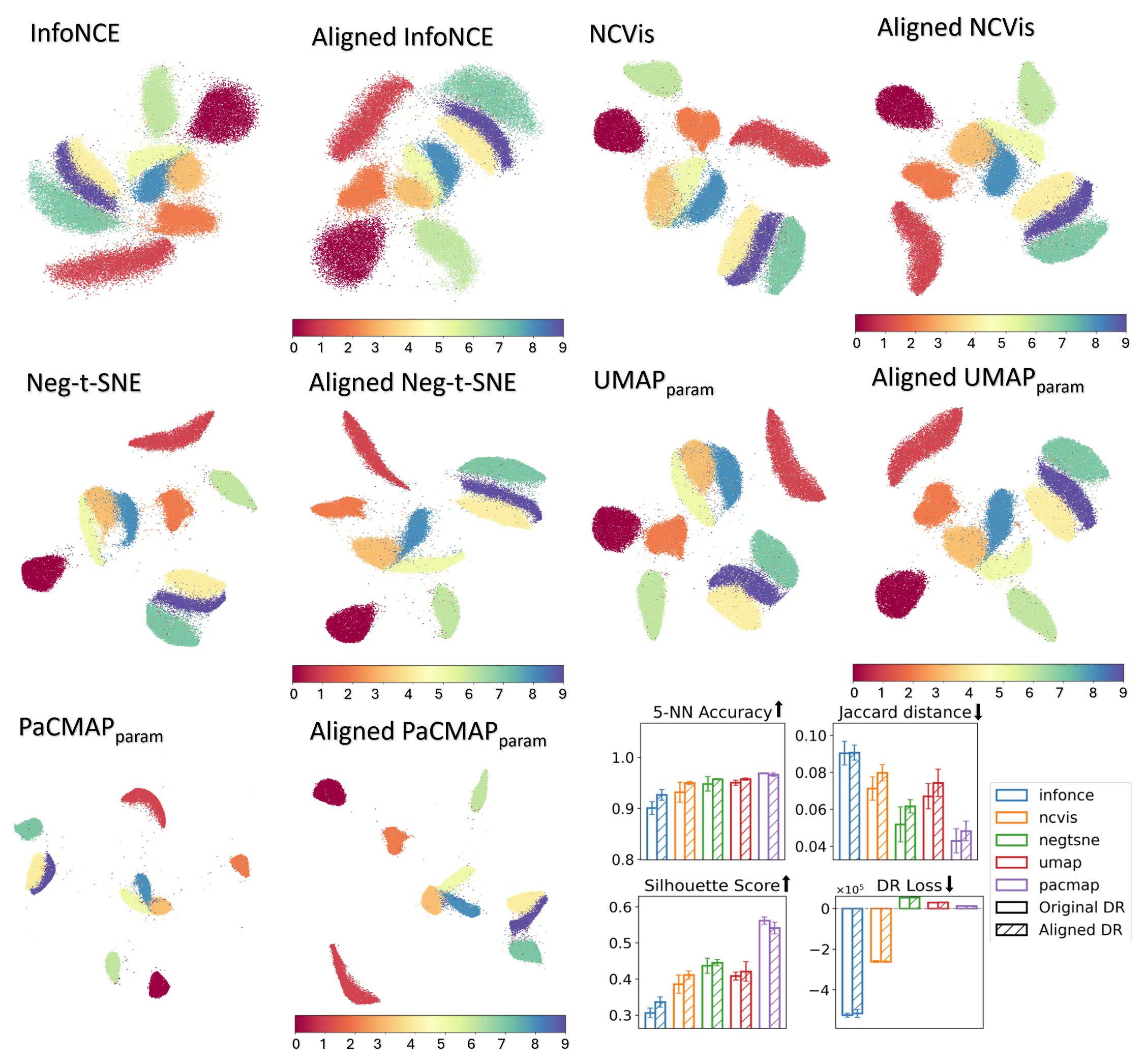}
    \caption{Comparison of original and aligned embeddings for MNIST using a concept-aware regularizer. The embeddings' horizontal axis is aligned with the digit index and embedding evaluation metrics haven't changed significantly after alignment.}
    \label{fig:mnist_axis}
\end{figure}

\begin{figure}[ht]
    \centering
    \includegraphics[width=0.8\linewidth]{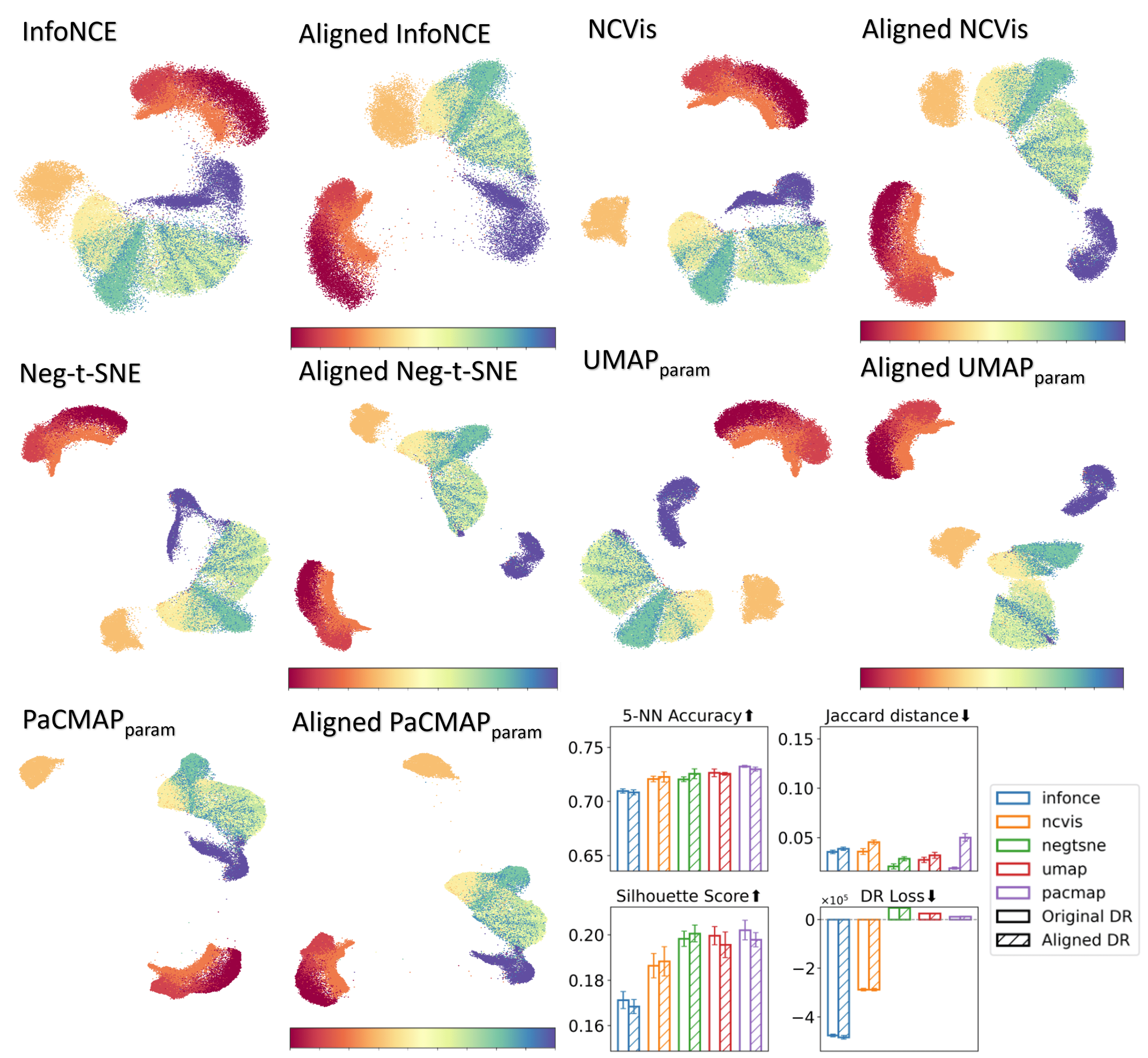}
    \caption{Comparison of original and aligned embeddings for FMNIST using a concept-aware regularizer. The embeddings' horizontal axis is aligned with the label index (from head to toe as mentioned in Section \ref{exp1}) and embedding evaluation metrics haven't changed significantly after alignment.}
    \label{fig:fmnist_axis}
\end{figure}

\begin{figure}[ht]
    \centering
    \includegraphics[width=0.8\linewidth]{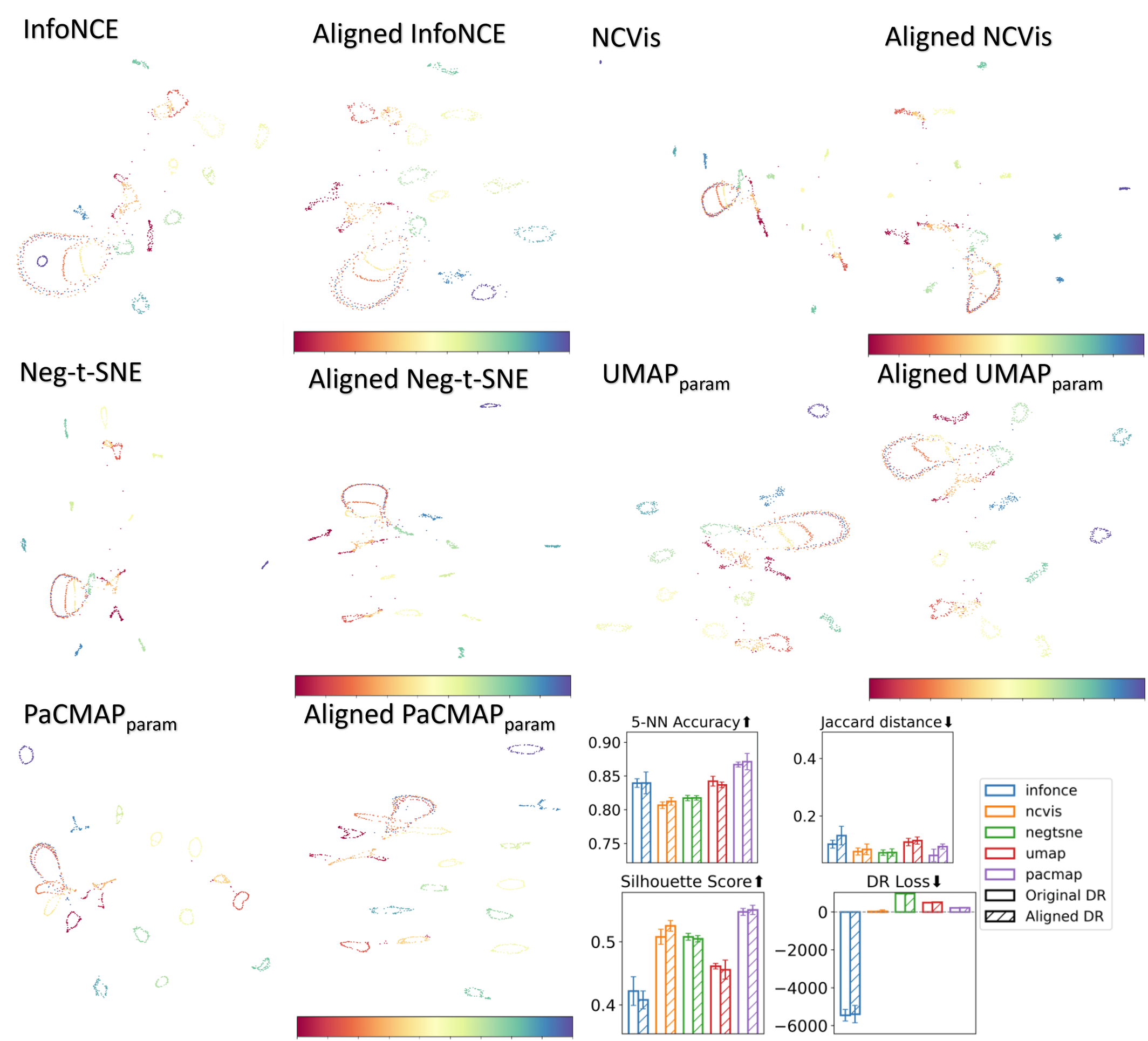}
    \caption{Comparison of original and aligned embeddings for COIL20 using a concept-aware regularizer. The embeddings' horizontal axis is aligned with the label index and embedding evaluation metrics haven't changed significantly after alignment.}
    \label{fig:coil20_axis}
\end{figure}

\begin{figure}[ht]
    \centering
    \includegraphics[width=0.8\linewidth]{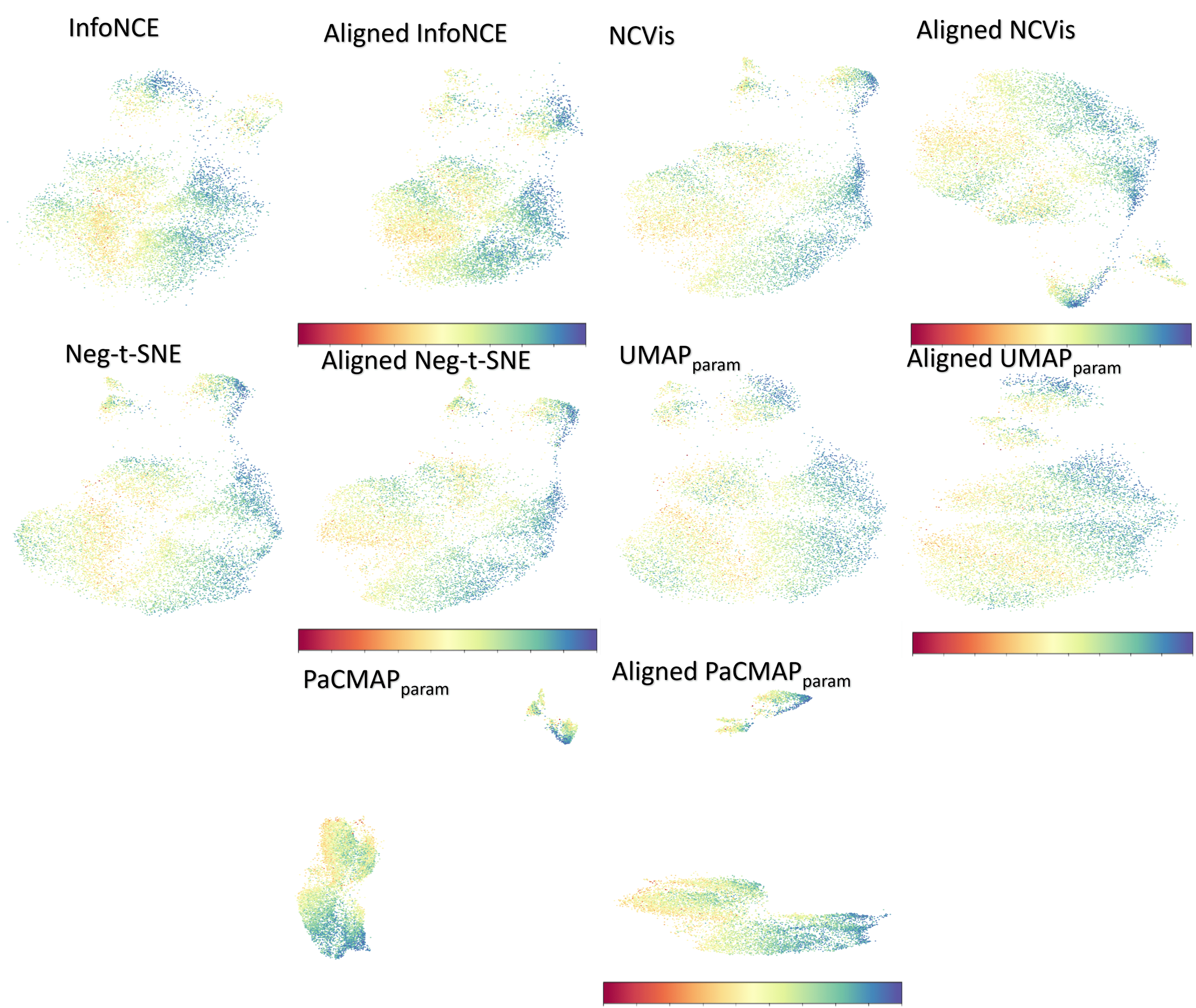}
    \caption{Comparison of original and aligned embeddings for FICO using a concept-aware regularize. The embeddings' horizontal axis is aligned with the external risk score. Since the label is a continuous feature, we are not displaying barplot here.}
    \label{fig:fico_axis}
\end{figure}

\begin{figure}[ht]
    \centering
    \includegraphics[width=0.8\linewidth]{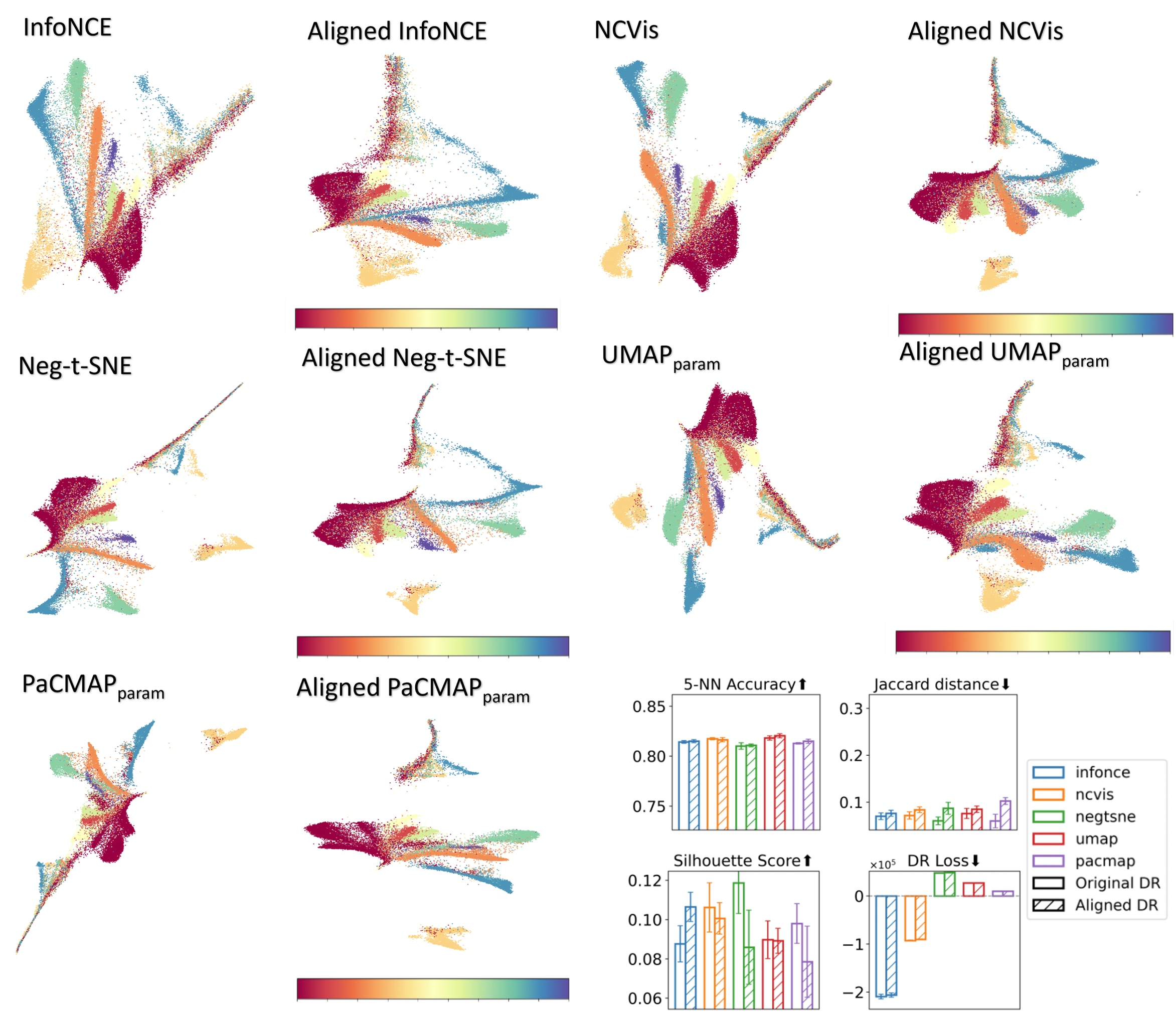}
    \caption{Comparison of original and aligned embeddings for Human Cortex Single Cell dataset using a concept-aware regularizer. The embeddings' horizontal axis is aligned with the cell type index, and the embedding performance hasn't been changed significantly.}
    \label{fig:human_cortex_axis}
\end{figure}

\begin{figure}[ht]
    \centering
    \includegraphics[width=0.8\linewidth]{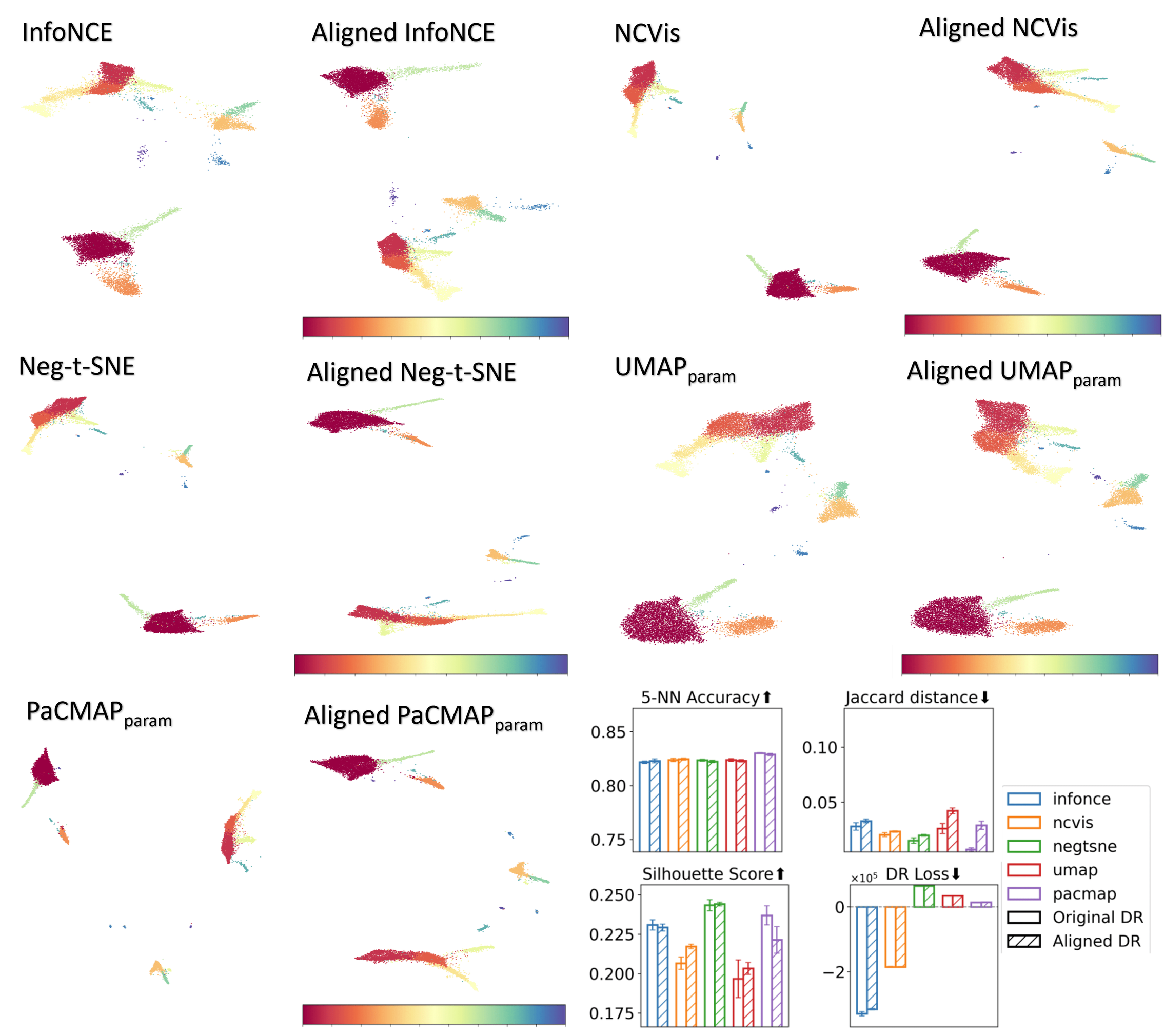}
    \caption{Comparison of original and aligned embeddings for Kang et al$.$ dataset using a concept-aware regularizer. The embeddings' horizontal axis is aligned with the cell type index, and the embedding performance hasn't been changed significantly.}
    \label{fig:kang_axis}
\end{figure}

\begin{figure}[ht]
    \centering
    \includegraphics[width=0.8\linewidth]{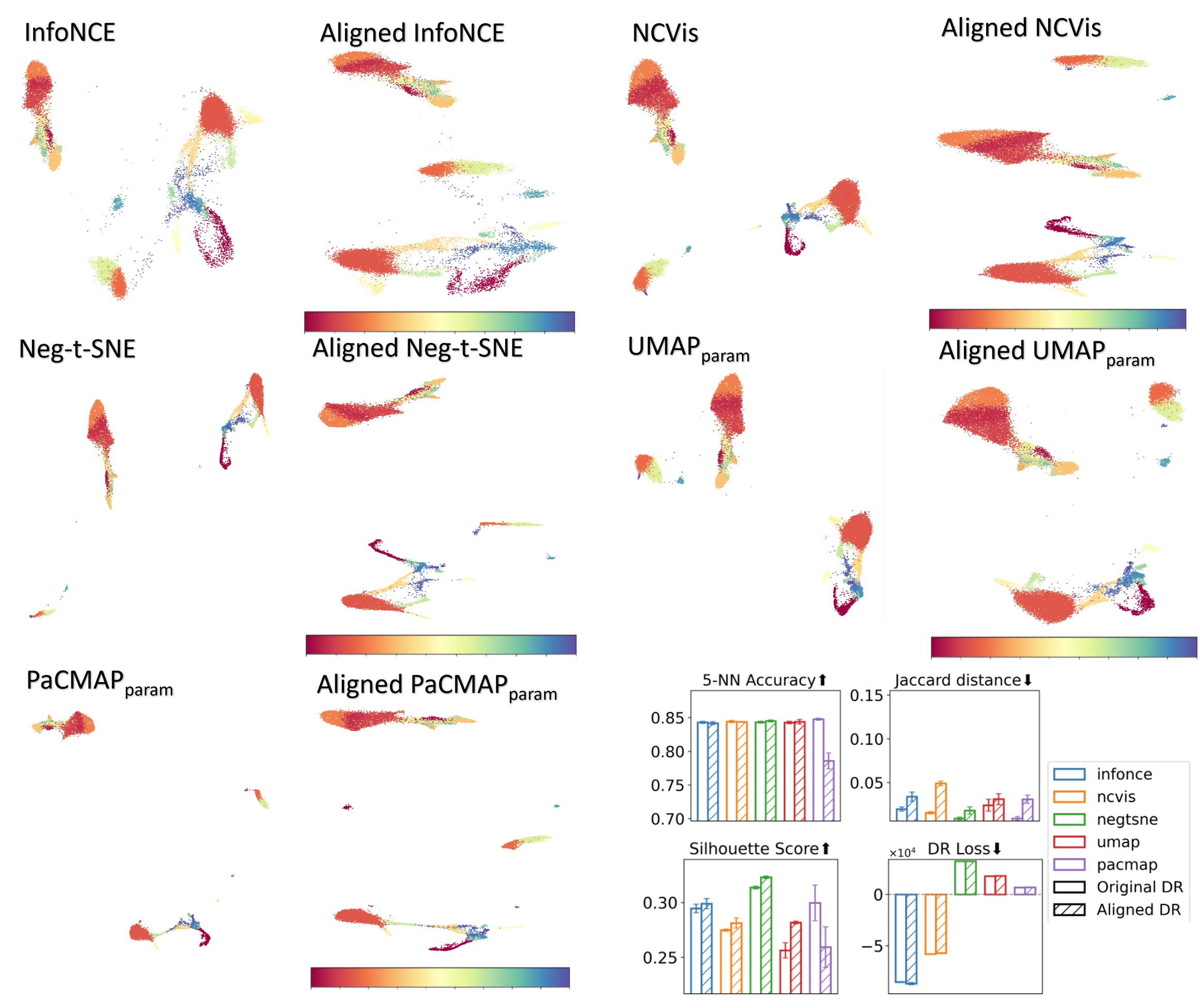}
    \caption{Comparison of original and aligned embeddings for the Stuart dataset using a concept-aware regularizer. The embeddings' horizontal axis is aligned with the cell type index, and the embedding performance hasn't been changed significantly.}
    \label{fig:seurat_axis}
\end{figure}

\begin{figure}
    \centering
    \includegraphics[width=0.8\linewidth]{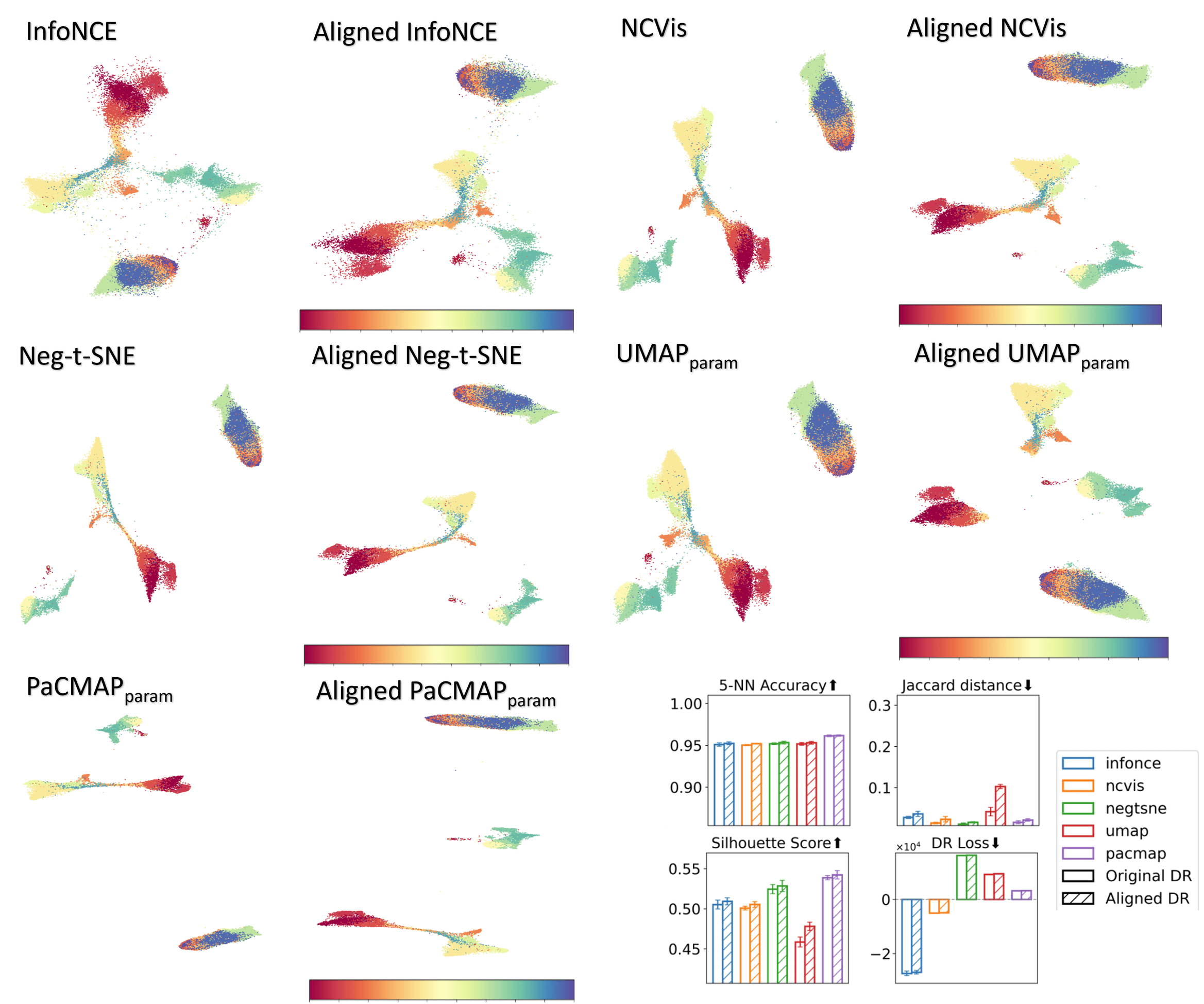}
    \caption{Comparison of original and aligned embeddings for the CMBC dataset using a concept-aware regularizer. The embeddings' horizontal axis is aligned with the cell type index, and the embedding performance hasn't been changed significantly.}
    \label{fig:cbmc_axis}
\end{figure}

\begin{figure}[ht]
    \centering
    \includegraphics[width=0.8\linewidth]{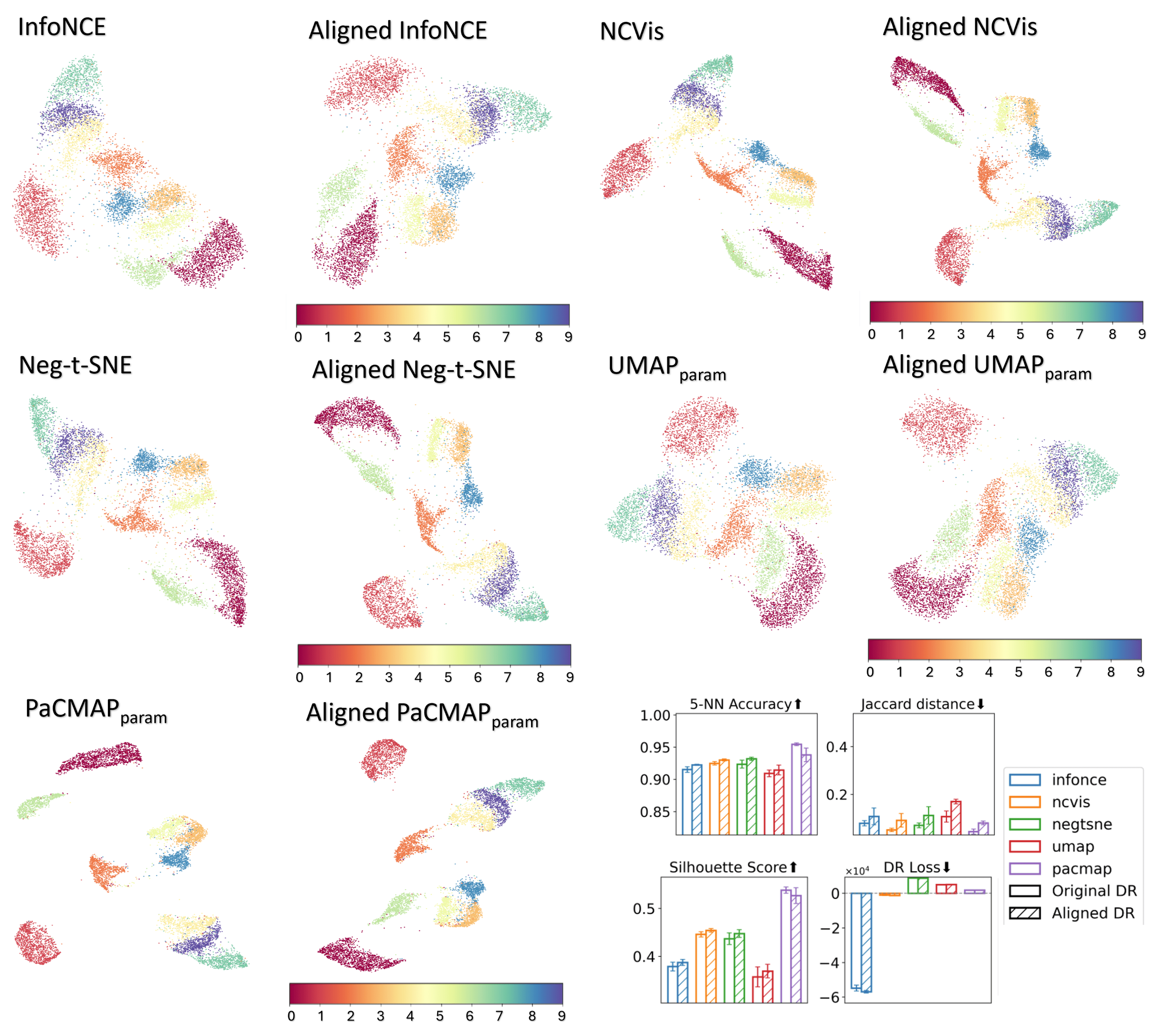}
    \caption{Comparison of original and aligned embeddings for USPS dataset using a concept-aware regularizer. Clusters are shown to align with the digits index and structure preservation across multiple DR methods.}
    \label{fig:usps_axis}
\end{figure}

\clearpage
\section{Influence of Missingness Ratio and Label Weights on Embedding Structure}
\label{app:missing_ratio}

Figure~\ref{fig:missing_label_grid_MNIST} illustrates how the embedding evolves across varying missingness ratios (rows) and label alignment weights (columns). From left to right, we increase the label weight $\lambda_{\text{Axis}}$, while from top to bottom, the missingness ratio increases. When the missingness ratio is low (top rows), moderate increases in $\lambda_{\text{Axis}}$ produce embeddings that gradually align with the label order while preserving local structure. However, as the missingness ratio increases (moving down the rows), the embeddings become increasingly sensitive to the label weight. In particular, when both $\lambda_{\text{Axis}}$ and missingness ratio are high (bottom-right corner), the embedding quality sharply degrades—the original structure breaks down, resulting in distorted or fragmented clusters. This shows that under sparse supervision, overly strong label alignment can dominate and disrupt the geometry learned from the  intrinsic structure of the data.

Figure~\ref{fig:missing_ratio_label_relationship} illustrates the relationship between $\L_\DR$ and the axis alignment weight $\lambda_{\text{Axis}}$ under varying levels of missingness. As the missingness ratio increases, the threshold of $\lambda_{\text{Axis}}$ at which the loss and distance scores begin to rise significantly becomes lower. This trend confirms our earlier qualitative observations: with higher missingness, the concept-based regularizer increasingly dominates the training objective, leading to a degradation of the embedding structure. This curve trend comparison between the$\L_\DR$ and the Jaccard distance also shows that they have similar trend.

\begin{figure}[ht]
    \centering
    \includegraphics[width=\linewidth]{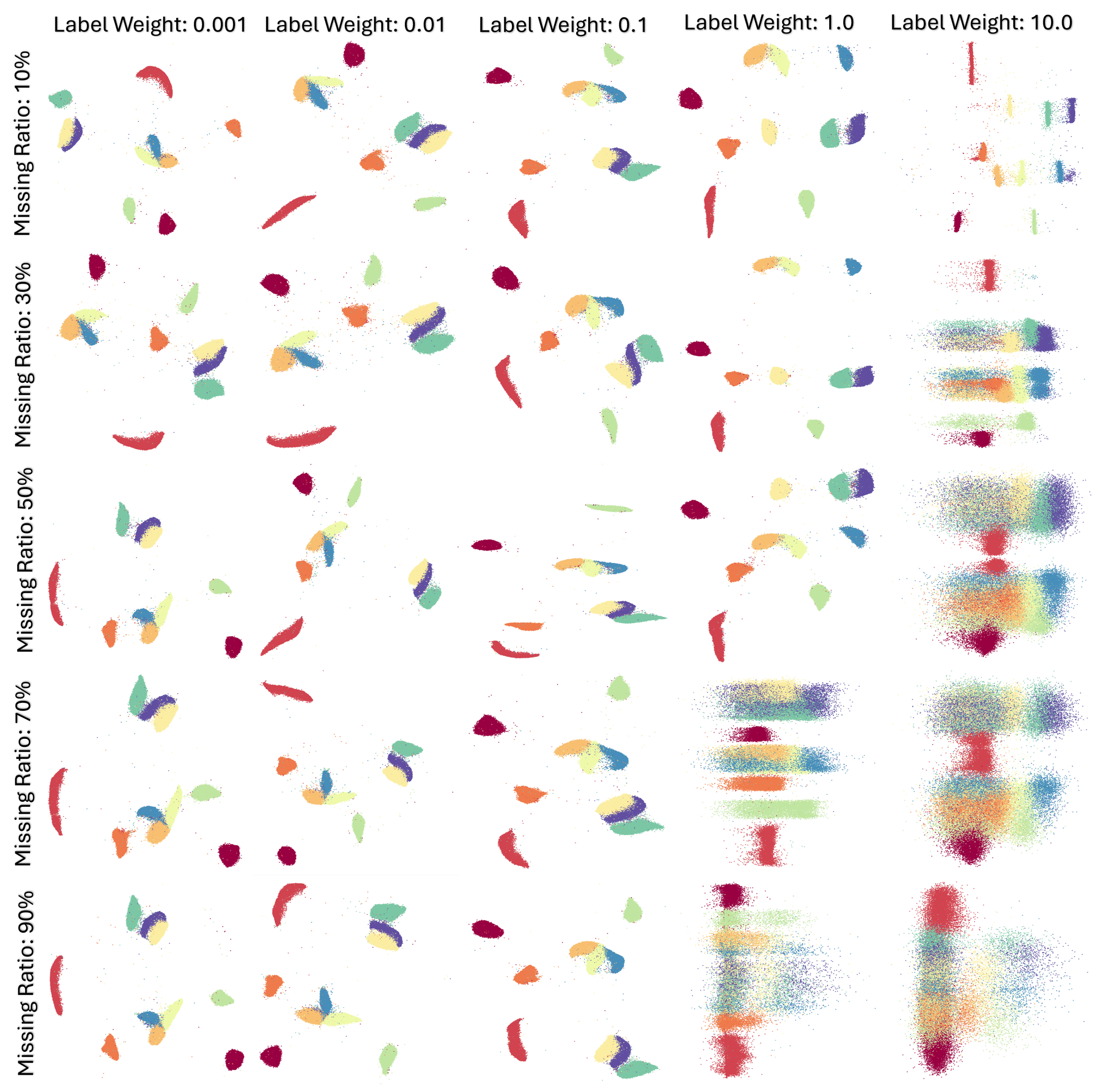}
    \caption{MNIST \PaCMAP Embedding under different label missingness ratios (rows) and label weights (columns). High label weight with high missingness ratio breaks the original structure.}
    \label{fig:missing_label_grid_MNIST}
\end{figure}

\begin{figure}
    \centering
    \includegraphics[width=\linewidth]{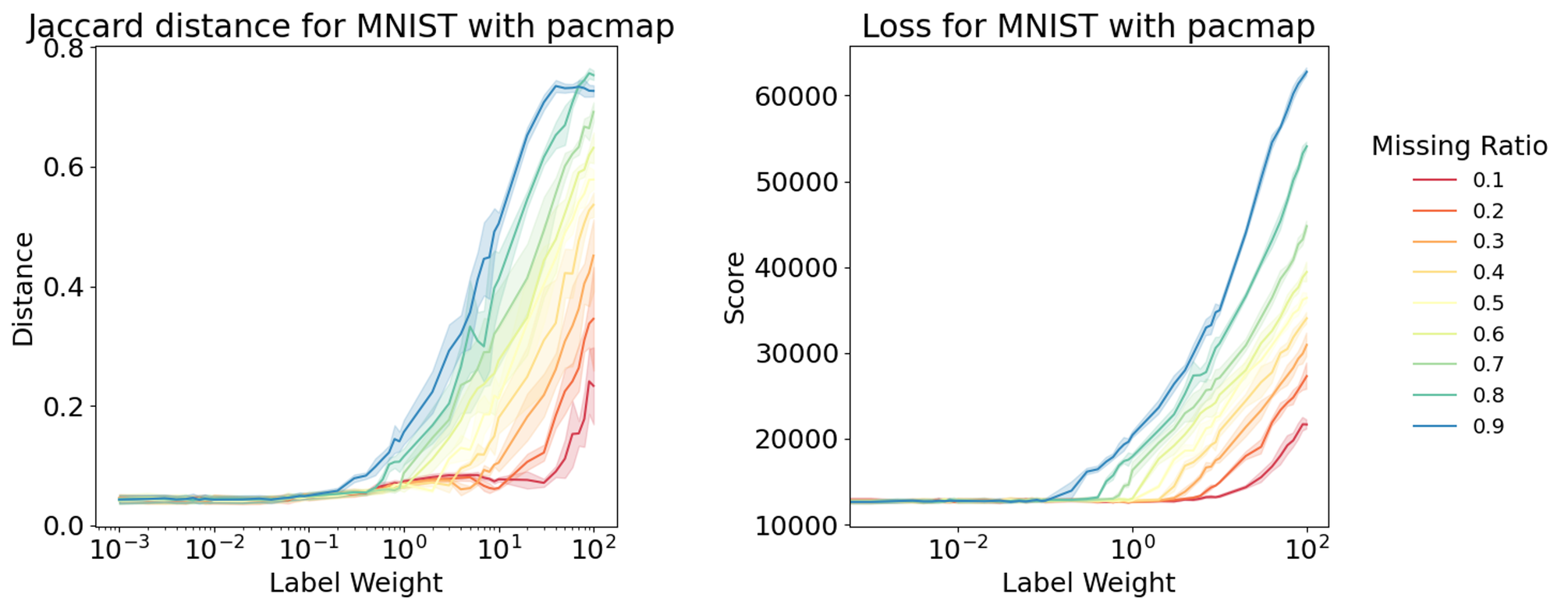}
    \caption{$\L_\DR$ and Jaccard Distance under different label missingness ratios. }
    \label{fig:missing_ratio_label_relationship}
\end{figure}

\clearpage
\section{Common Knowledge Extraction and Aggregation Results}
\label{app:common}

To evaluate the effectiveness of our common knowledge extraction mentioned in Section \ref{sec:common_knowledge}, we compile results across different datasets, embedding methods, and three evaluation metrics, which are in the same format as they are shown in Section \ref{sub:experiment_common_knowledge}. The following figures present a comprehensive overview of how the proposed approach performs under varying data characteristics and dimensionality reduction settings. For each dataset, we report one of the embeddings obtained with different DR methods, along with relevant metrics: k-NN Accuracy, Silhouette Score, and SVM Accuracy. These results demonstrate the generalizability and robustness of our method across tasks.

\begin{figure}[ht]
    \centering
    \includegraphics[width=\linewidth]{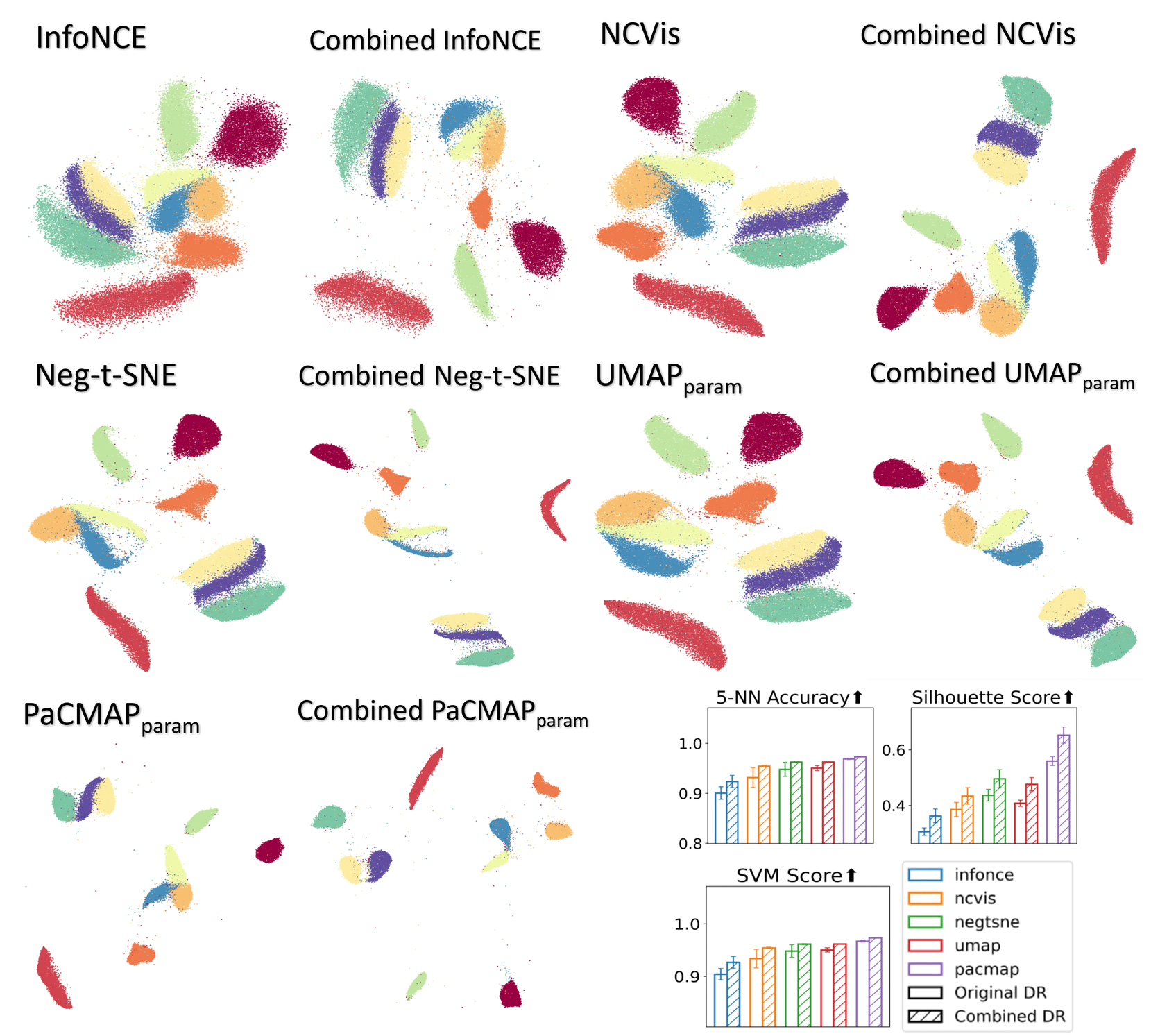}
    \caption{MNIST embeddings and improved embedding using common knowledge. Combined DR improves structure and boosts downstream metrics.}
    \label{fig:common_MNIST}
\end{figure}

\begin{figure}[ht]
    \centering
    \includegraphics[width=\linewidth]{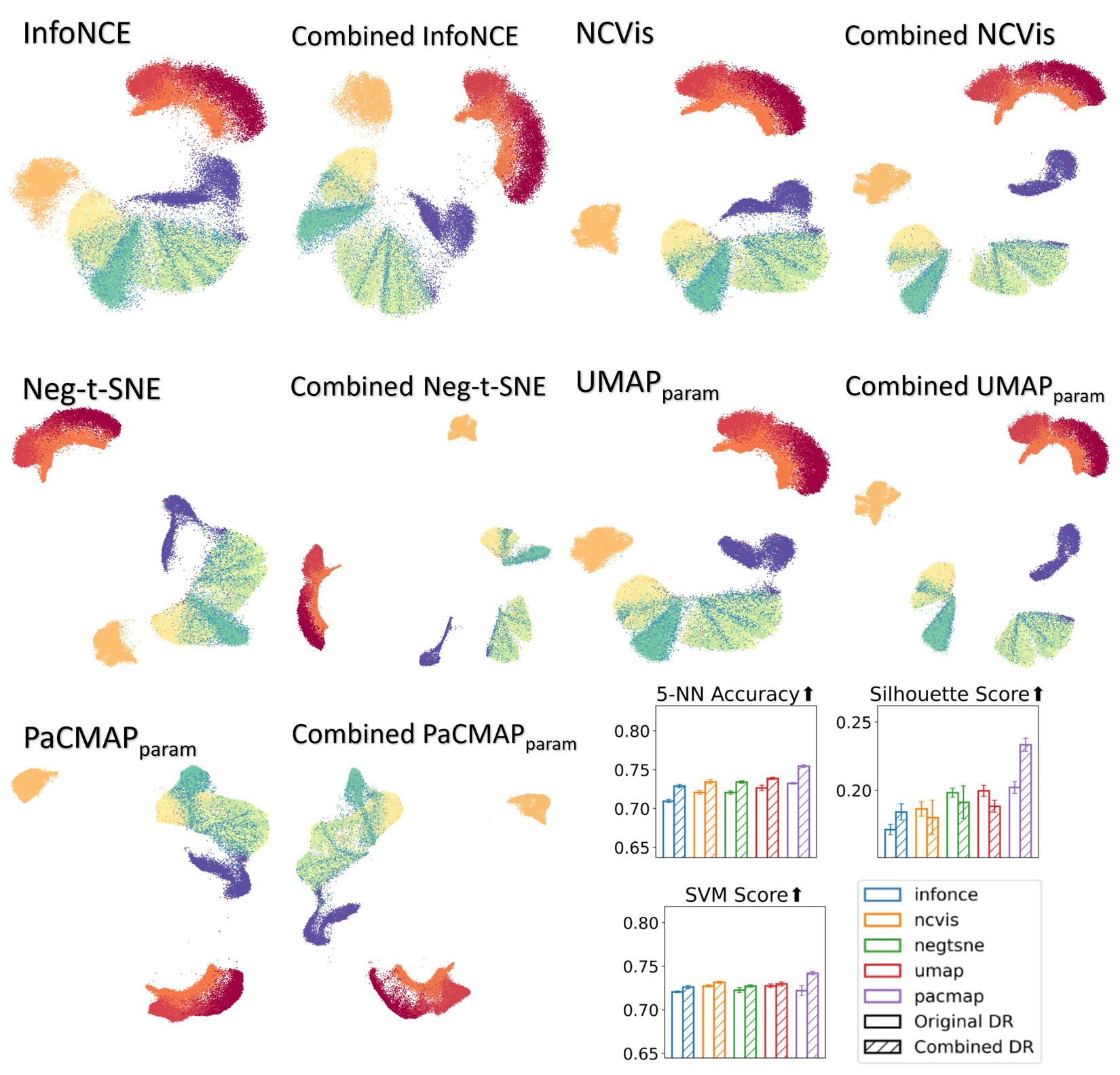}
    \caption{FMNIST embeddings and improved embedding using common knowledge. Combined DR improves structure and boosts downstream metrics. NCVis and Neg-t-SNE and UMAP are showing a bit lower silhouette score due to the mixed area at the bottom right corner}
    \label{fig:common_FMNIST}
\end{figure}

\begin{figure}[ht]
    \centering
    \includegraphics[width=\linewidth]{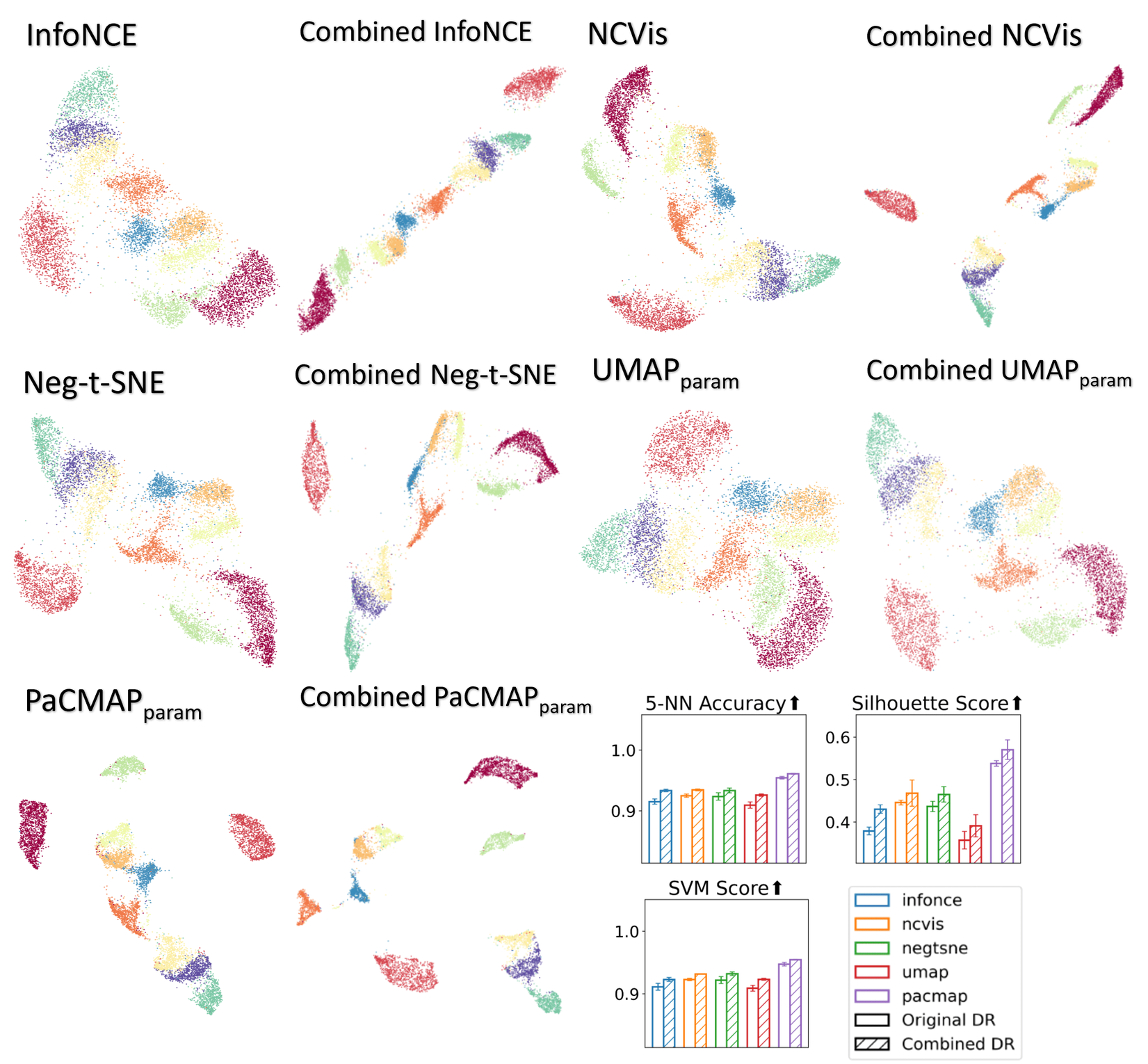}
    \caption{USPS embeddings and improved embedding using common knowledge. Combined DR improves structure and boosts downstream metrics.}
    \label{fig:common_USPS}
\end{figure}

\begin{figure}[ht]
    \centering
    \includegraphics[width=\linewidth]{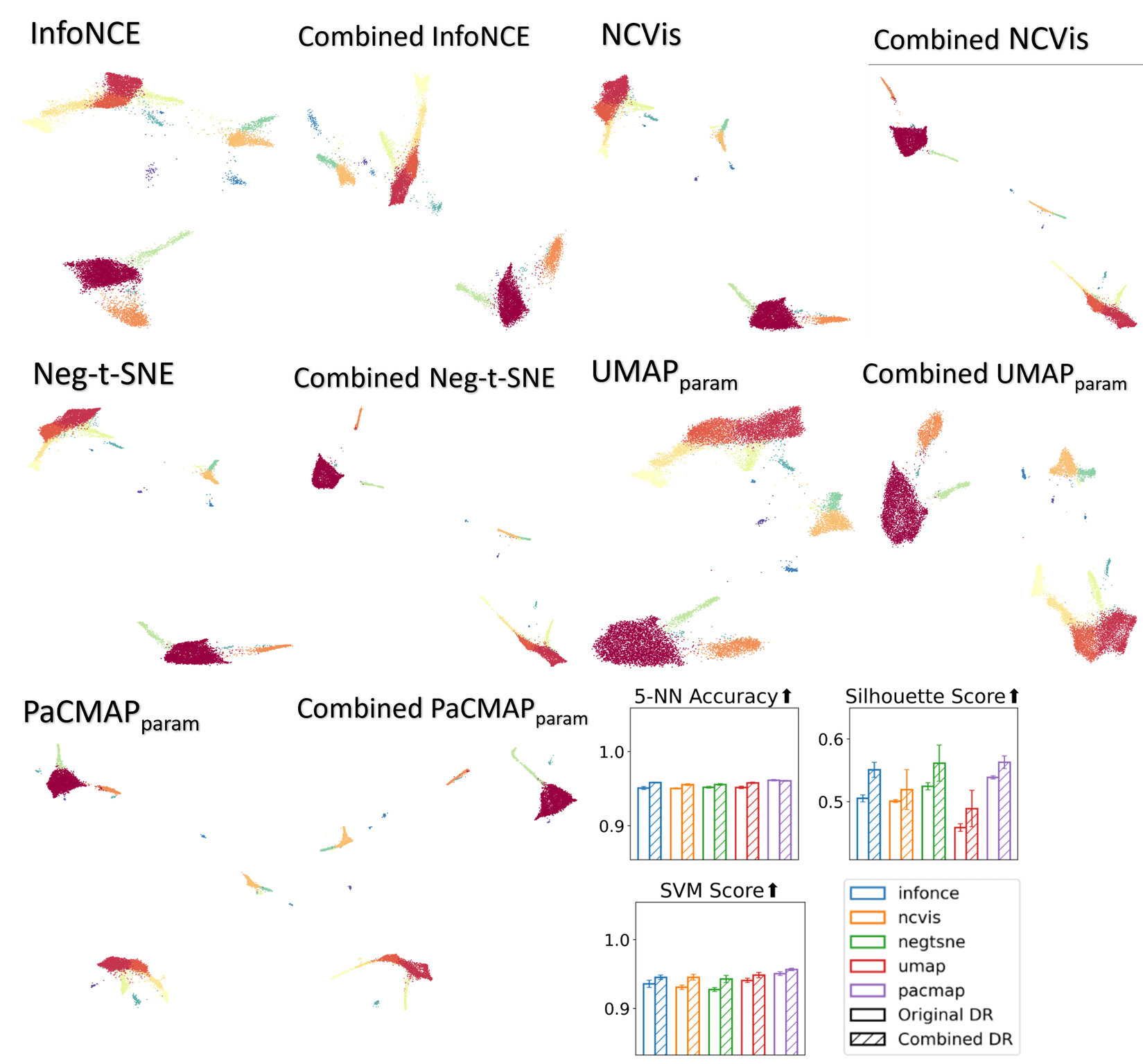}
    \caption{Kang et al. embeddings and improved embedding using common knowledge. Combined DR improves structure and boosts downstream metrics.}
    \label{fig:common_kang}
\end{figure}

\begin{figure}[ht]
    \centering
    \includegraphics[width=\linewidth]{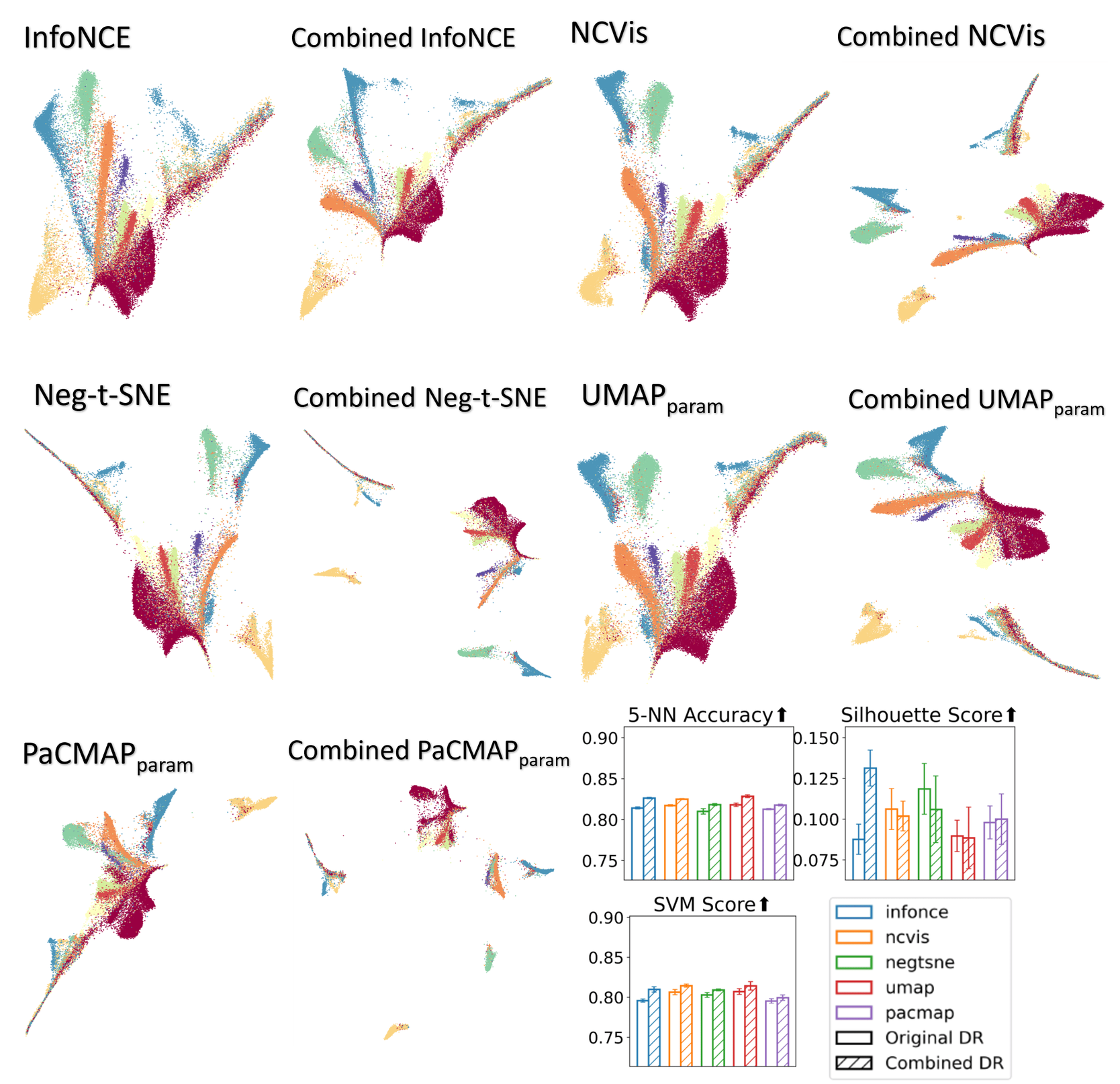}
    \caption{Human Cortex embeddings and improved embedding using common knowledge. Combined DR improves structure and boosts downstream metrics. NCVis and Neg-t-SNE and UMAP are showing a bit lower silhouette score due to the mixed area at the bottom right corner}
    \label{fig:common_human_cortex}
\end{figure}

\begin{figure}[ht]
    \centering
    \includegraphics[width=\linewidth]{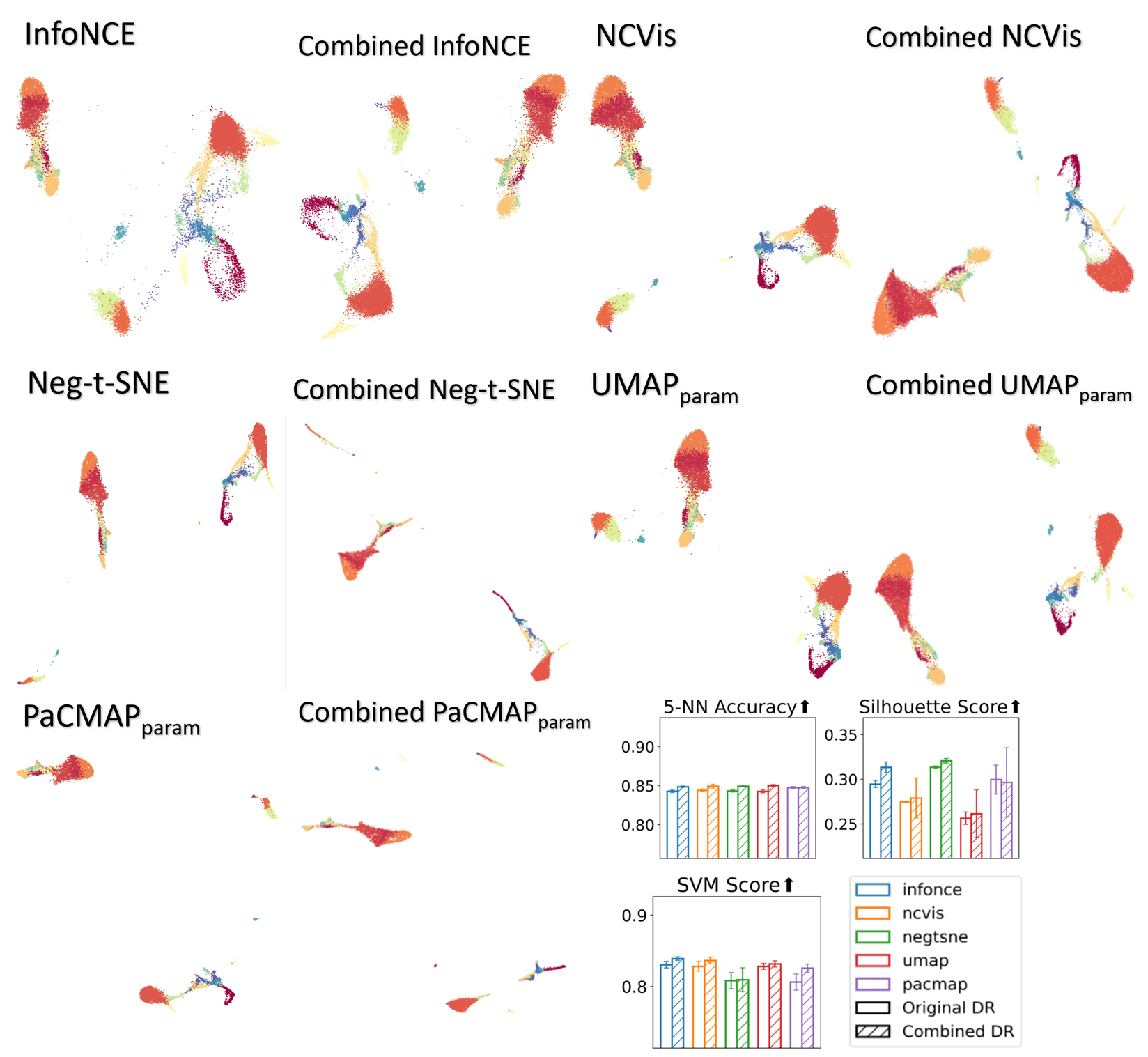}
    \caption{Stuart et. al. embeddings and improved embedding using common knowledge. Combined DR improves structure and boosts downstream metrics. }
    \label{fig:common_seurat}
\end{figure}

\begin{figure}[ht]
    \centering
    \includegraphics[width=\linewidth]{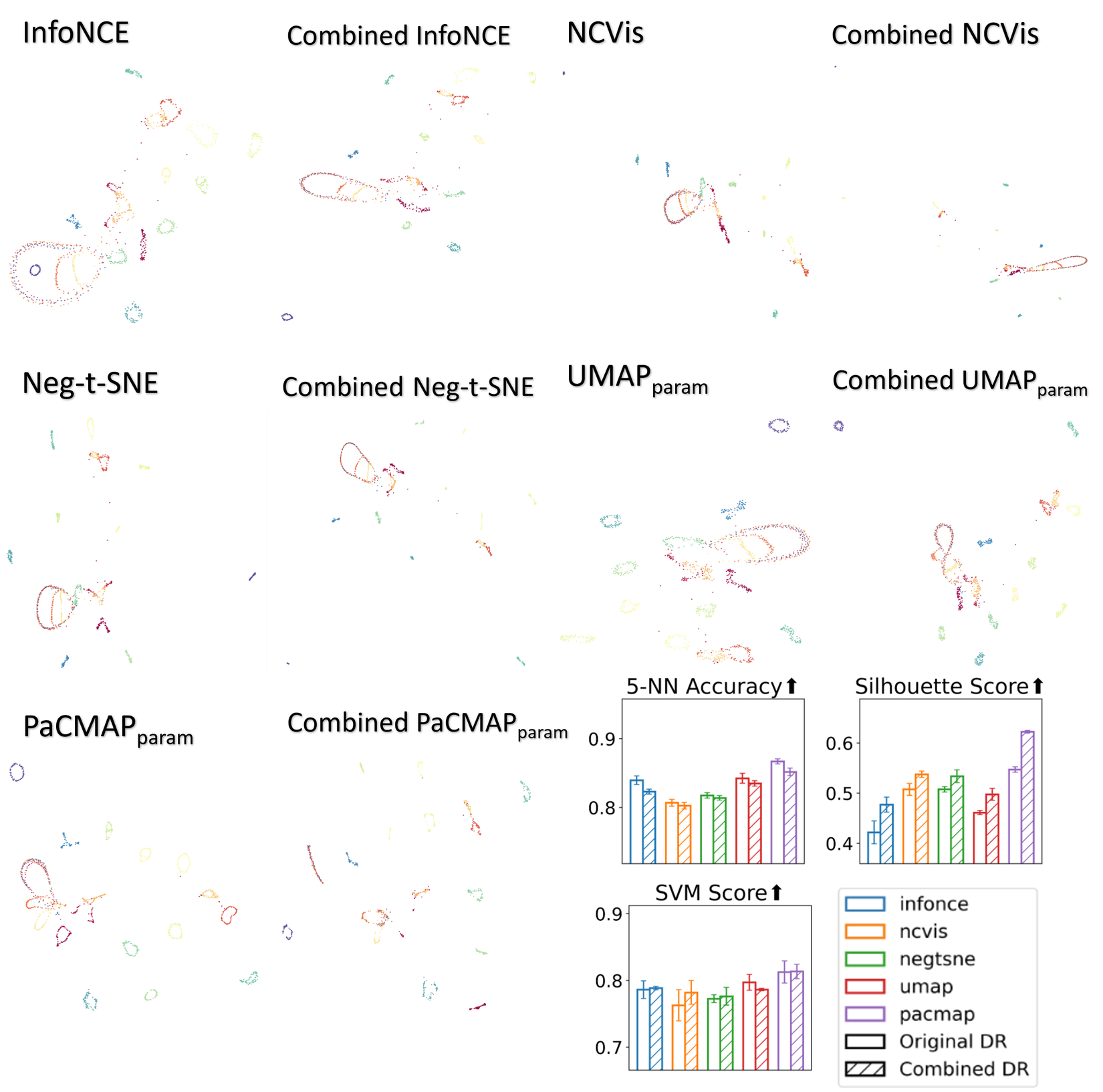}
    \caption{COIL20 embeddings and improved embedding using common knowledge. Combined DR improves structure and boosts downstream metrics.}
    \label{fig:common_coil20}
\end{figure}

\end{document}